\newcommand{\diffp}{\varepsilon}
\newcommand{\eqdiffp}{\stackrel{d}{=}_{\diffp}}
\newcommand{\totdiffp}{\diffp_{\textup{total}}}
\newcommand{\param}{\theta}
\newcommand{\hparam}{\what{\theta}}
\newcommand{\nsub}{s}
\newcommand{\nsubs}{\nsub}
\newcommand{\subsize}{m}
\newcommand{\subsample}{^{\textup{sub}}}
\newcommand{\approxdist}{\stackrel{\textup{dist}}{\approx}}
\newcommand{\tparam}{\wt{\param}}
\newcommand{\est}[1]{\widehat{#1}}
\newcommand{\unstudstatest}{U_n}  %
\newcommand{\unstudstatpriv}{\wt{U}_n}  %
\newcommand{\blbstatistic}{U_{\subsize,n}^*}  %
\newcommand{\tblbstatistic}{\wt{U}_{\subsize,n}^*}  %
\newcommand{\quant}{\mathsf{quant}}
\newcommand{\exactconf}{C_\alpha^{\textup{exact}}}
\newcommand{\dham}{d_{\textup{ham}}}
\newcommand{\floors}[1]{\lfloor{#1}\rfloor}
\newcommand{\noisevar}{W}
\newcommand{\noiseval}{w}
\newcommand{\prn}[1]{\left({#1}\right)}
\newcommand{\prninline}[1]{({#1})}
\newcommand{\threshold}{\tau}
\renewcommand{\abs}[1]{\left|{#1}\right|}
\newcommand{\absinline}[1]{|{#1}|}
\newcommand{\medacc}{\omega}  %
\newcommand{\cdfacc}{\omega_{\textup{cdf}}}  %
\newcommand{\varacc}{\omega_{\textup{var}}}  %
\newcommand{\subsampleind}[1]{^{\textup{sub}({#1})}}
\newcommand{\numqueries}{T}
\newcommand{\nmontecarlo}{N_{\textup{mc}}}
\newcommand{\failprob}{\beta}
\newcommand{\smoothparam}{\rho}
\newcommand{\len}{\mathsf{len}}
\newcommand{\ordstat}{\textsc{OrdSt}\xspace}
\newcommand{\poploss}{L}
\newcommand{\loss}{\ell}
\newcommand{\lambdareg}[1][]{
  \ifthenelse{\isempty{#1}}{%
    \lambda_{\textup{reg}}
  }{%
    \lambda_{\textup{reg},{#1}}
  }
}
\newcommand{\lipconst}{G}
\newcommand{\lipobj}{\lipconst_0}
\newcommand{\lipgrad}{\lipconst_1}
\newcommand{\liphess}{\lipconst_2}
\newcommand{\lambdamin}{\lambda_{\min}}
\newcommand{\lambdamax}{\lambda_{\max}}
\newcommand{\remainder}{R}
\newcommand{\algname}[1]{\textsc{#1}}
\newcommand{\abthreshmed}{\hyperref[algorithm:above-threshold-median]{\algname{AboveThr}}}
\newcommand{\privmed}{\hyperref[algorithm:priv-median]{\algname{PrivMedian}}}
\newcommand{\blbvar}{\hyperref[algorithm:blb-var]{\algname{BLBvar}}}
\newcommand{\bootvaralg}{\hyperref[algorithm:boot-var]{\algname{BootVar}}}
\newcommand{\blbquant}{\hyperref[algorithm:blb-quant]{\algname{BLBquant}}}
\title{Resampling methods for private statistical inference}
\Crefname{algorithm}{Algorithm}{Algorithm}
\crefname{algocf}{alg.}{algs.}
\Crefname{algocf}{Algorithm}{Algorithms}
\Crefname{assumption}{Assumption}{Assumptions}
\author{Karan Chadha$^1$~~~~~John C.\ Duchi$^{1,2}$~~~~Rohith Kuditipudi$^3$
  \vspace{.1cm}
  \\
  \vspace{.1cm}
  Departments of $^1$Electrical Engineering,
  $^2$Statistics, and $^3$Computer Science \\
  Stanford University}
\date{May 2024}
\begin{document}

\maketitle

\begin{abstract}
We propose two private variants of the non-parametric bootstrap for
privately computing confidence sets. Each privately computes the median of
results of multiple “little” bootstraps, yielding asymptotic bounds on the
coverage error of the resulting confidence sets. For a fixed differential
privacy parameter $\varepsilon$, our methods enjoy the same error rates
as the non-private bootstrap to within logarithmic factors in the
sample size $n$. We empirically validate the performance of our methods for
mean estimation, median estimation, and logistic regression, and our methods
achieve similar coverage accuracy to existing methods (and non-private
baselines) while providing notably shorter (about $10\times$) confidence
intervals than previous approaches.
\end{abstract}

\section{Introduction}\label{sec:intro}

To realize the promise of private data analysis, we must bring it into
closer contact with the inferential goals of modern data analysis and
statistics. Yet many challenges conflict with these goals: most work in
(differential) privacy targets point estimates rather than valid confidence
sets or other inferential tools~\cite{Wasserman12, DworkRo14}; private
estimators are typically bespoke, tailored to particular
problems~\cite{ChaudhuriMoSa11, KarwaVa18, GaboardiRoSh19, AsiDu20}; and the
fine-grained analysis necessary to evaluate variance-like properties of
estimators, essential to applying classical inferential results using
asymptotic normality~\cite{VanDerVaart98, LehmannRo05}, remains out of reach
for many private estimators. We lack generic procedures for privately
completing the most basic inferential task, to construct confidence
intervals around a given point estimate.  We therefore develop tools, based
on resampling methods, to do so for a broad collection of statistics.

\newcommand{\pointmass}{\mathbf{1}}

Letting data $X_1, \ldots, X_n \simiid P$ for a
population distribution $P$, our goal is to provide a valid confidence set
for a parameter $\param(P)$ of interest. The sample mean $\param(P) =
\E_P[X]$ will provide an example we repeatedly revisit. Let $P_n =
\frac{1}{n} \sum_{i = 1}^n \pointmass_{X_i}$ be the empirical distribution
on the sample (which we sometimes simply identify as the sample), where
$\pointmass_x$ denotes a point mass at $x$, so that any estimator is a
function of $P_n$. Then we wish to construct an asymptotically correct
confidence set $C_\alpha(P_n)$ for $\param(P)$, meaning that for each
population $P$,
\begin{equation}
  \label{eqn:asymptotic-cl}
  \lim_{n \to \infty} \P\left(\param(P) \in C_\alpha(P_n)\right)
  = 1 - \alpha,
\end{equation}
where the probability is taken over the sample $X_i \simiid P$ and
randomness in $C_\alpha(P_n)$. Focusing
on $\R$-valued parameters for simplicity (though this is not
essential), typical
methods~\cite{Lehmann99, LehmannRo05} achieve the
coverage~\eqref{eqn:asymptotic-cl} by approximating the distribution of the
centered statistic
\begin{equation*}
  \unstudstatest \defeq \sqrt{n}\left(\param - \param(P_n)\right)
  \cd \normal(0, \sigma^2),
\end{equation*}
and assuming an estimate $\what{\sigma}_n \cp \sigma$, the confidence
interval $C_\alpha(P_n) = \param(P_n) \pm z_{1 - \alpha/2} \what{\sigma}_n /
\sqrt{n}$, where $z_\alpha$ is the usual $\alpha$-quantile of the standard
normal, gives asymptotic coverage.

We develop variants of Kleiner et al.'s Bag of Little Boostraps
(BLB)~\cite{KleinerTaSaJo14} to approximate the statistic $\unstudstatest$
privately, both directly via resampling and by estimating the variance
$\sigma^2$ and making a normal approximation. Key to both the theoretical
and practical performance of our methods are that we aggregate information
from resampling via private median-like algorithms, whose stability both
improves accuracy and privacy. Our main theoretical results provide both
general consistency results~\eqref{eqn:asymptotic-cl} as well as asymptotic
rate guarantees that $\P(\param(P) \in C_\alpha(P_n)) = 1 - \alpha +
\wt{O}(n^{-1/2})$, where $\wt{O}$ hides logarithmic factors and dependence
on the privacy level $\diffp$; this rate is similar to those achievable in
non-private cases, though attaining the higher-order accuracy sometimes
possible via bootstrap resampling methods~\cite{Hall92} remains open.

By showing that the results of each ``little'' bootstrap concentrate
around appropriate population-level values, we can use
the strong error guarantees private median algorithms provide
for concentrated inputs~\cite{AsiDu20} to derive our error bounds.
Summarizing, our main contributions follow:
\begin{enumerate}[leftmargin=*,label=(\arabic*)]
\item We propose two private variants of the unstudentized non-parametric
  bootstrap. The first, Alg.~\ref{algorithm:blb-quant} ($\blbquant$),
  constructs private confidence intervals by estimating quantiles of 
  distribution of the estimators. The second,
  Alg.~\ref{algorithm:blb-var} ($\blbvar$), estimates the squared error of
  an estimator, which then allows normal
  approximation.
\item We show consistency of the bootstrap procedures for sufficiently
  accurate private estimators, proving asymptotic bounds on the coverage
  error rates that match non-private bootstrap methods to within logarithmic
  factors.
\item We investigate the performance of our methods on both synthetic and
  real-world datasets (see Section~\ref{sec:expts}). The results highlight
  the importance of both (i) strong baseline private estimators for good
  uncertainty estimation and (ii) newer median-based (as opposed to
  mean-based) aggregation methods.
\end{enumerate}

\subsection{Related Work}\label{sec:related-work}

Whereas much early work on differential privacy focuses on the design of
algorithms for privately querying data and constructing sample
statistics~\cite{DworkRo14, Wasserman12}, a growing body of work seeks to
marry differential privacy and statistical inference.  \citet{Smith11} shows
that a large class of (asymptotically normal) statistical estimators admit
private counterparts that asymptotically converge to the same Gaussian
distribution.  More recent work gives theoretical results on estimating
variance and covariance, such as for univariate
Gaussians~\cite{KarwaVa18}, sub-Gaussian vectors~\citep{BiswasDoKaUl20},
or empirical risk minimization~\cite{WangKiLe19, AvellaBrLo21},
which allows normal approximation to compute confidence intervals.
But natural covariance estimates, for example,
the standard inverse Hessian appearing in
the covariance of classical empirical risk minimization
and M-estimators~\cite[e.g.][Thm.~5.41]{VanDerVaart98}, suffer
instabilities, making practical private estimation challenging.
Besides normal
approximation, there is also work that studies confidence intervals for
specific problems. For example,
\citet{Sheffet17} and Alabi et al.~\cite{AlabiMcSaSmVa22, AlabiVa22} study
differentially private linear regression and hypothesis testing for presence
of linear relationships. \citet{DrechslerGlMcSaSm21} benchmark different
methods for constructing differentially private confidence intervals for the
median.

We lack non-parametric, fully data-driven mechanisms that generate private
confidence intervals for broad classes of estimators. \citet{FerrandoWaSh22}
propose mechanisms for privatizing the parametric bootstrap, but their
validity guarantees rely on well-specified parametric data models.
\citet{EvansKiScTh21} and \citet{CovingtonHeHoKa21} propose bootstrap
algorithms similar to ours, using these to estimate covariance of arbitrary
estimators by aggregating results of little bootstraps via clipping and
noise additiong or CoinPress~\citep{BiswasDoKaUl20}, which privately
estimates means directly. However, they do not appear to provide accuracy
guarantees, and---as we see especially in experiments---more careful
aggregation via median-like mechanisms can yield much stronger performance.

\section{Preliminaries and notation}\label{sec:preliminaries}

Consider a parameter of interest $\param(P) \in \R$; we
address vector-valued parameters presently.
For a sample $P_n$, i.e., the empirical distribution on $n$ observations
drawn i.i.d.\ from $P$, we (classically, without privacy)
have an estimator $\param(P_n)$ of $\param(P)$,
and wish to estimate the distribution of the centered statistic
\begin{equation}
  \unstudstatest = \sqrt{n}\left(\param(P) - \param(P_n)\right).
  \label{eqn:unstudstatest}
\end{equation}
We will typically assume we have the standard asymptotics that
\begin{equation*}
  \unstudstatest \cd \normal\left(0, \sigma^2(\param)\right),
\end{equation*}
where $\sigma^2(\param)$ is the asmptotic variance for the particular
parameter (we leave dependence on the estimator
$\param(P_n)$ implicit).
Letting
\begin{equation*}
  \quant_\beta(X) \defeq \inf\left\{t \in \R \mid \P(X \le t) \ge \beta
  \right\}
  ~~ \mbox{and} ~~
  \quant_\beta(P) \defeq \inf\left\{t \in \R \mid P\left(\openleft{-\infty}{t}
  \right) \ge \beta \right\}
\end{equation*}
denote the $\beta$-quantile of a random variable $X$ or probability
distribution $P$, respectively, it is then immediate that
\begin{equation*}
  \exactconf(P_n) \defeq
  \left[\param(P_n) + n^{-1/2} \quant_{\alpha/2}(\unstudstatest),
    \param(P_n) + n^{-1/2} \quant_{1-\alpha/2}(\unstudstatest)\right]
\end{equation*}
is a $1 - \alpha$ confidence set. Normal
approximations and boostrap resampling methods approximate
the distribution of $\unstudstatest$ to mimic this exact confidence set.

We adapt the Bag-of-Little-Bootstraps (BLB)~\citep{KleinerTaSaJo14},
which descends from Efron's bootstrap~\citep{Efron79, EfronTi93}.
The BLB first constructs
$\nsub$ subsamples of $P_n$, each of size $\subsize = n / \nsub$;
let $P_\subsize\subsample$ denote one of these subsamples.
For each subsample, the BLB redraws an i.i.d.\ sample of size $n$ to yield
a resampled distribution $P_{\subsize,n}^*$,  the empirical
distribution of the resample from the $\subsize$ points in the
subsample. It then makes the approximation
\begin{equation}
  \label{eqn:key-blb-approximation}
  \unstudstatest \approxdist \sqrt{n}\left(\param(P_\subsize\subsample) -
  \param(P_{\subsize,n}^*)\right)
  \eqdef \blbstatistic
\end{equation}
where Monte-Carlo sampling yields arbitrarily accurate estimates of the
distribution of the right hand side.  The
approximation~\eqref{eqn:key-blb-approximation} holds so long as the
estimates $\param(P_n)$ have appropriate smoothness and convergence
properties relative to the sampling distributions $P_n$, and demonstrating
its accuracy forms the core of any argument for validity of
confidence intervals.  Assuming it holds, the confidence set
\begin{equation*}
  C_\alpha(P_n) \defeq \left[\param(P_n) + n^{-1/2}
    \quant_{\alpha/2}(\blbstatistic),
    \param(P_n) + n^{-1/2}
    \quant_{1 - \alpha/2}(\blbstatistic)\right]
\end{equation*}
satisfies $\P(\theta(P) \in C_\alpha(P_n)) \to \P(\theta(P) \in
\exactconf(P_n)) = 1- \alpha$. (See the sequel and \citet{KleinerTaSaJo14}
for more precision.)
We may also construct confidence sets by directly leveraging the asymptotic normality of the centered
statistic $\unstudstatest$: for \emph{any} consistent
estimate $\what{\sigma}^2$ of the asymptotic variance $\sigma^2(\param)$,
we have
\begin{equation*}
  \P\left(\theta(P) \in \theta(P_n) \pm n^{-1/2} \what{\sigma}^2 \cdot
  z_{1 - \alpha/2}
  \right) \to 1 - \alpha,
\end{equation*}
where $z_\alpha$ is the $\alpha$-quantile of a standard Gaussian.  The BLB
variance estimate $\what{\sigma}^2 = \var(\blbstatistic \mid P_\subsize)$
provides a natural proxy for this variance.

\newcommand{\mech}{M}

We use the standard definition of differential
privacy~\cite{DworkRo14}. We
will typically use a tilde, $\wt{\param}$, to denote a private estimator
and distinguish it from $\param(\cdot)$. Such an estimator typically
requires additional randomness independent of the sample $P_n$,
which it uses to obfuscate the input distribution;
to abstract this away, we simply call this
random variable $\noisevar$. Then for $\diffp > 0$ and $\delta \in (0, 1)$,
$\wt{\param}$ is $(\diffp, \delta)$-differentially
private if
for all $P_n, P_n'$ differing in at most a single observation
and all measurable sets $O$,
\begin{equation}
  \label{eqn:diffp-def}
  \P\left(\wt{\param}(P_n, \noisevar) \in O\right)
  \le \exp(\diffp) \cdot \P\left(\wt{\param}(P_n', \noisevar) \in O\right)
  + \delta.
\end{equation}
When the noise is clear from context we use $\wt{\param}(P_n)$ instead of
$\wt{\param}(P_n, \noisevar)$.  It will be convenient to write
distributional closeness, where for random variables $X, X'$, we write $X
\eqdiffp X'$ to mean that $\P(X \in O) \le e^\diffp \P(X' \in O)$ for all
measurable sets $A$. Definition~\eqref{eqn:diffp-def}
with $\delta = 0$
is thus equivalent to $\wt{\param}(P_n) \eqdiffp \wt{\param}(P_n')$.

\paragraph{Notation.}
We use $[k]$ to denote the set of integers $\{1,\dots,k\}$ and
$y_{1:s}$ to
denote the tuple $(y_1,\dots,y_s)$
We let $x_{(1)} \le x_{(2)} \le \cdots \le x_{(n)}$ denote the
order statistics of $x \in \R^n$.  We use $\Phi(t)$ and $\phi(t)$
to denote the cumulative distribution and
probability density of the standard normal at $t \in \R$.

\section{Private Confidence Intervals}
\label{sec:priv-conf-int}

\jcdcomment{It seems we should have a discussion of exactly
  the accuracies in Assumptions~\ref{assumption:bootcdf-accuracy}
  and~\ref{assumption:bootstd-accuracy}, perhaps in the
  context of the examples below.
  Without such a discussion, they seem to come from nowhere and
  do not really get referenced.}

To construct confidence intervals with differential privacy, we replace the
estimator in the centered statistic~\eqref{eqn:unstudstatest} and in the
bootstrap resampling approximation~\eqref{eqn:key-blb-approximation} with
a privatized counterpart, applying either the percentile or normal
approximation (estimated via resampling) method.  Using
the notation of our differential privacy definition~\eqref{eqn:diffp-def}
with private estimator $\wt{\param}$,
the new centered statistic becomes
\begin{equation}
  \label{eqn:private-unstudentized}
  \unstudstatpriv \defeq
  \sqrt{n}\left(\param(P) - \tparam(P_n, W_n)\right).
\end{equation}
Replacing $\param(P_n)$ with $\tparam(P_n, W_n)$ in the exact
confidence set $\exactconf(P_n)$ evidently yields a finite sample $1 -
\alpha$ confidence set, and so to develop private confidence sets, we
develop private analogues of the resampling
approximation~\eqref{eqn:key-blb-approximation}.

We describe algorithms for constructing
private confidence intervals using the percentile method
in Section~\ref{sec:percentile-bootstrap} and
to estimate squared error and variance
in Section~\ref{sec:moment-est}, which we may use for
normal approximation. The
key to both the theoretical and practical performance of the methods is to
use private median procedures to aggregate the information from bootstrap
resampling, which we turn to now.

\jcdcomment{Should we put the table here that captures accuracy? Or another
  summary place?}

\subsection{Private median algorithms}

The basic building block of our percentile algorithms is to estimate the
first index in a collection of vectors whose median passes a particular
threshold. For an increasing sequence of sets $I_1, I_2, \ldots,
I_\numqueries$, we use this to estimate the smallest index $t$ for which
$\P(\unstudstatpriv \in I_t) \ge 1 - \alpha$.
We develop a median-based private version of this process
as Algorithm~\ref{algorithm:above-threshold-median}.
In the algorithm, for a vector $y \in \R^k$ we let
\begin{equation*}
  \ordstat(y, \xi) = \begin{cases}
    y_{(\floor{\xi})} & \mbox{if}~ 1 \le \xi \le k \\
    -\infty \cdot \indic{\xi < 1}
    + \infty \cdot \indic{\xi > k}
    & \mbox{otherwise}
  \end{cases}
\end{equation*}
be the $\floor{\xi}$th order statistic with limiting extremes.

\begin{algorithm}[ht]
  \caption{\label{algorithm:above-threshold-median} Median
    Above Threshold ($\abthreshmed$)}
  \DontPrintSemicolon
  \SetKwInOut{Input}{Input}
  \SetKwInOut{Noise}{Noise}
  \Input{ data $y(t) \in \R^k$ for $t \in [\numqueries]$,
    threshold $\threshold \in \R$ }
  \Noise{ $\xi_t \in \R$ for $t \in [\numqueries]$}
  \For{$t \in [\numqueries]$}{
    $\est{v} \leftarrow \ordstat({y(t);\xi_0 + \xi_t})$\;
    \label{line:noisy-order-statistic}
    \If{$\est{v} \geq \threshold$}{
      \Return $t$\;
    }
  }
  \Return $\perp$\;
\end{algorithm}

Given a sequence of $\numqueries$ vectors $y(t) \in \R^k$,
Algorithm~\ref{algorithm:above-threshold-median} finds the index of the
first query whose noisy median exceeds a threshold $\threshold \in \R$,
returning $\perp$ if no such query exists. We choose the noise variables
$\xi_0$ and $\xi \in \R^\numqueries$ so that that $\abthreshmed$ satisfies
$\diffp$-DP and $\E[\xi_0 + \xi_t] = \frac{k}{2}$.

\begin{proposition}
  \label{proposition:privacy-threshold}
  Let $y(t)$ and $z(t) \in \R^k$, $t = 1, \ldots, T$,
  be sequences of vectors satisfying
  $\dham(y(t), z(t)) \le 1$ for each $t$. Let
  $\diffp > 0$ and
  $\xi_0 \sim \laplace(\frac{k}{2}, \frac{2}{\diffp})$ and
  $\xi_t \simiid \laplace(0, \frac{4}{\diffp})$ for $t \in \N$. Then
  for any $\threshold \in \R$,
  \begin{equation*}
    \abthreshmed(y, \threshold, \xi)
    \eqdiffp \abthreshmed(z, \threshold, \xi).
  \end{equation*}
\end{proposition}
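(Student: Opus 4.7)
The plan is to adapt the classical Sparse Vector Technique (SVT) privacy proof to this median-based variant. Fix an arbitrary output $t^* \in [\numqueries] \cup \{\perp\}$; it suffices to exhibit a measurable shift of the noise variables that maps the event $\{\abthreshmed(y,\threshold,\xi) = t^*\}$ into $\{\abthreshmed(z,\threshold,\xi) = t^*\}$ and incurs a density ratio of at most $e^\diffp$. Integrating this pointwise bound against the Laplace density and then integrating over arbitrary output sets yields the $\diffp$-DP guarantee.

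The structural tool I would establish first is an interleaving property of the order statistic operator under Hamming neighbors. If $y,z \in \R^k$ with $\dham(y,z) \le 1$, delete the coordinate where they disagree to get a common sorted vector $u_{(1)} \le \cdots \le u_{(k-1)}$. A short case analysis on where the differing coordinates land in their respective sorts shows $u_{(j-1)} \le y_{(j)} \le u_{(j)}$ and $u_{(j-1)} \le z_{(j)} \le u_{(j)}$ for each $j$ (using the conventions $u_{(0)} = -\infty$, $u_{(k)} = +\infty$), which immediately gives $y_{(j-1)} \le z_{(j)} \le y_{(j+1)}$. Since $\lfloor s\pm1\rfloor = \lfloor s\rfloor \pm 1$, this translates to
\begin{equation*}
\ordstat(y; s-1) \;\le\; \ordstat(z; s) \;\le\; \ordstat(y; s+1)
\end{equation*}
for every $s \in \R$, with the infinite-boundary convention of $\ordstat$ preserving the inequality.

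Next comes the coupling. On the event $\{\abthreshmed(y,\threshold,\xi) = t^*\}$ for $t^* \in [\numqueries]$, the definition of the algorithm gives $\ordstat(y(t);\xi_0+\xi_t) < \threshold$ for all $t<t^*$ and $\ordstat(y(t^*);\xi_0+\xi_{t^*}) \ge \threshold$. Define the shifted noise $\xi'$ by $\xi_0' = \xi_0 - 1$, $\xi_{t^*}' = \xi_{t^*} + 2$, and $\xi_t' = \xi_t$ otherwise. Applying the shift inequality, for $t < t^*$,
\begin{equation*}
\ordstat(z(t);\xi_0' + \xi_t) \;\le\; \ordstat(y(t); \xi_0 + \xi_t) \;<\; \threshold,
\end{equation*}
and at $t = t^*$,
\begin{equation*}
\ordstat(z(t^*);\xi_0' + \xi_{t^*}') = \ordstat(z(t^*); \xi_0+\xi_{t^*}+1) \;\ge\; \ordstat(y(t^*);\xi_0+\xi_{t^*}) \;\ge\; \threshold,
\end{equation*}
so $\abthreshmed(z,\threshold,\xi') = t^*$. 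The case $t^* = \perp$ is simpler: only the threshold noise needs to be shifted, $\xi_0 \to \xi_0 - 1$, and the same inequality forces all queries to stay below $\threshold$.

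Finally I would read off the density ratio. Because $\xi_0 \sim \laplace(k/2, 2/\diffp)$, shifting by $1$ costs at most $e^{\diffp/2}$; because $\xi_{t^*} \sim \laplace(0, 4/\diffp)$, shifting by $2$ costs another $e^{\diffp/2}$; all other coordinates are unchanged. The product is $e^\diffp$, completing the proof. The only delicate step is the structural interleaving lemma, and particularly handling the boundary cases ($\lfloor s\rfloor \notin [1,k]$) cleanly via the $\pm\infty$ convention on $\ordstat$; everything else is a direct translation of the SVT argument, with the shift being applied to the \emph{index} of the order statistic rather than to a noisy value.
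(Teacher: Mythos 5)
Your proposal is correct and follows essentially the same argument as the paper: the identical noise shift ($\xi_0 \mapsto \xi_0 - 1$, $\xi_{t^*} \mapsto \xi_{t^*} + 2$, with only the threshold-noise shift for the $\perp$ case), the same key fact that Hamming-adjacent vectors have order statistics that move by at most one index (your interleaving lemma is just a symmetric packaging of the one-sided bounds the paper uses), and the same $e^{\diffp/2} \cdot e^{\diffp/2} = e^\diffp$ Laplace density-ratio accounting. The boundary cases via the $\pm\infty$ convention are handled the same way the paper's conditions (C1)--(C2) handle them, so there is nothing substantively different to flag.
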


When the vector $y \in \R^k$ appropriately concentrates, its order
statistics do as well, which we can leverage in our private percentile-based
BLB method to come.  We formalize this via the following intermediate lemma,
whose proof we defer to Appendix~\ref{sec:proof-noisy-median}.

\begin{lemma}
  \label{lemma:noisy-median}
  Let $Y \in \R^k$ have
  independent entries, and assume for some
  $\medacc < \infty$ there is $\mu \in \R$ such that
  $\P(|Y_i - \mu| \le \medacc) \ge \frac{3}{4}$.
  Let $\xi_0 \sim \laplace(\frac{k}{2}, b)$ and
  $\xi \sim \laplace(0, 2b)$ for some $b > 0$.
  Then
  \begin{equation*}
    \P\left(
    |\ordstat(Y, \xi_0 + \xi) - \mu| > \medacc
    \right)
    \le \frac{4}{3} \exp\left(-\frac{k}{16 b}\right)
    + \exp\left(-\frac{k}{32}\right).
  \end{equation*}
\end{lemma}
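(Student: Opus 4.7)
My plan is to separately control two sources of error and combine them with a union bound: the sample $Y$ may have too few entries near $\mu$, or the Laplace noise may push the chosen rank far from $k/2$.

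First I would reduce the event of interest to a statement about the rank $J = \lfloor \xi_0 + \xi \rfloor$. Setting $k_+ = \#\{i : Y_i > \mu + \medacc\}$ and $k_- = \#\{i : Y_i < \mu - \medacc\}$, every order statistic $Y_{(j)}$ with $j \in (k_-, k - k_+]$ lies in $[\mu - \medacc, \mu + \medacc]$. So on the event $\max(k_+, k_-) \le 3k/8$, the lemma's conclusion fails only if $J \notin (3k/8, 5k/8]$, which is essentially the event $|\xi_0 + \xi - k/2| > k/8$ (up to a harmless unit offset from the floor).

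Second, since $S \defeq k - k_+ - k_- = \sum_i \indic{|Y_i - \mu| \le \medacc}$ is a sum of independent Bernoullis with mean at least $3k/4$, Hoeffding's inequality gives
\begin{equation*}
  \P\bigl(\max(k_+, k_-) > 3k/8\bigr) \le \P(S < 5k/8) \le \exp(-k/32),
\end{equation*}
which accounts for the second term in the stated bound.

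Third, I would derive the Laplace tail bound by exploiting the convolution of the scale-$b$ and scale-$2b$ densities directly. Writing $\xi_0 + \xi - k/2 = L_1 + L_2$ with $L_1 \sim \laplace(0, b)$ and $L_2 \sim \laplace(0, 2b)$ independent, I would condition on $L_1$ and compute $\int f_{L_1}(y)\,\P(L_2 > t - y)\,dy$ by splitting at $y = 0$ and $y = t$. A short calculation then gives $\P(L_1 + L_2 > t) \le \tfrac{2}{3} e^{-t/(2b)}$, and by symmetry $\P(|L_1 + L_2| > t) \le \tfrac{4}{3} e^{-t/(2b)}$. Taking $t = k/8$ yields $\tfrac{4}{3} \exp(-k/(16b))$, and combining with Step 2 via the union bound gives the lemma.

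The main obstacle is the sharp Laplace-sum bound in Step 3: the naïve union bound $\P(|L_1| > t/2) + \P(|L_2| > t/2)$ yields only an exponent of $t/(4b)$, so one must exploit that the tail of $L_1+L_2$ is dominated by the slower-decaying $\laplace(0, 2b)$; the prefactor $\tfrac{4}{3}$ comes out of the moment-type integral $\E[e^{L_1/(2b)}] = \tfrac{4}{3}$ after the cross-term in the conditioning integral cancels cleanly against the mass that $L_1$ places above $t$.
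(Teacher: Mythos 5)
Your proposal is correct and follows essentially the same route as the paper: the paper likewise splits into a Hoeffding bound on the number of entries outside $[\mu-\medacc,\mu+\medacc]$ (its order-statistic concentration lemma, with the same constants $\eta = 1/4$, $\alpha = 1/8$ yielding $e^{-k/32}$) and the exact two-Laplace convolution tail $\tfrac{4}{3}e^{-t/(2b)} - \tfrac{1}{3}e^{-t/b}$ evaluated at $t = k/8$, combined via a union bound. The unit offset from the floor that you set aside is glossed over in exactly the same way in the paper's own argument, so your treatment matches it in both approach and level of rigor.
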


\subsection{A private percentile bootstrap}
\label{sec:percentile-bootstrap}

We now present an $\diffp$-differentially private algorithm for finding a
set providing a desired $1 - \alpha$ coverage level. The method uses an
increasing sequence of sets $I_1, I_2, \ldots, I_\numqueries$ and
(approximately) outputs the first in this sequence with estimated coverage
at least $1 - \alpha$.  Typical choices of the sets $I_t$ for an $\R^d$-valued
parameter $\param(P)$ are $I_t = \{u \in \R^d \mid \ltwo{u} \le h \cdot
t\}$ for a small resolution $h$, and recalling the
definition~\eqref{eqn:private-unstudentized} of $\unstudstatpriv$, we seek
the smallest $t$ with $\P(\unstudstatpriv \in I_t) \ge 1 - \alpha$.  The
algorithm, $\blbquant$, follows the BLB methodology to first partition the
data into $\nsub = n / \subsize$ disjoint subsets of size $\subsize$, then
uses bootstrap estimates of $\P(\unstudstatpriv \in I_t)$ calculated using
each partition independently by resampling $n$ points (with replacement)
from the subset.  It then uses $\abthreshmed$ and these estimates to
(approximately) find the first set $I_t$ in the sequence with coverage
probability at least $1-\alpha$.

\begin{algorithm}
  \caption{\label{algorithm:blb-quant} Private BLB quantile estimation
    (\blbquant)}
    \DontPrintSemicolon
    \SetKwInOut{Input}{Input}
    \Input{Sample $P_n$,
      sequence of sets of interest $I_{1:\numqueries}$,
      subset size $\subsize$, privacy parameter $\diffp$,
      estimators $\param(\cdot)$ and $\tparam(\cdot)$}
    Draw $\nsub = \lfloor n/\subsize \rfloor$ disjoint
    subsamples $P\subsampleind{1}_\subsize, \ldots, P\subsampleind{\nsub}_\subsize$
    of $P_n$ of size $\subsize$ \; \label{line:draw-subsamples-quant}
    \For{$i \in [s]$ and $t \in [\numqueries]$}{
      $\hparam \leftarrow \param(P_\subsize)$
      \emph{~~~~// saved plug-in estimator}\;
      \For{$j \in [\nmontecarlo]$}{
        Draw resample of $P_{\subsize,n}^*$ of size $n$
        i.i.d.\ with replacement from $P_\subsize\subsampleind{i}$ \;
        $\theta_j \gets \tparam(P_{\subsize,n}^*)$
      }
      $\what{p}_i(t) \gets \frac{1}{\nmontecarlo}
      \sum_{j = 1}^{\nmontecarlo} \indics{\sqrt{n} (\hparam
        - \theta_j) \in I_t}$
      ~~~~ \emph{// coverage estimate}\; \label{line:coverage-estimate}
    }
    Draw $\xi_0 \sim \laplace(n/2,2/\diffp)$
    and $\xi_{1:\numqueries} \simiid
    \laplace(0, 4/\diffp)$\; \label{line:noise-def}
    Set $\what{t} = \abthreshmed(\{\what{p}(1), \ldots, \what{p}(\numqueries)\},
    1 - \alpha, \xi)$\;
    \label{line:call-above-threshold}
    \Return $I_{\what{t}}$\;
\end{algorithm}

The privacy of $\blbquant$ is relatively straightforward once we
leverage privacy composition, but its utility requires more work.
Here, we make the assumption that the bootstrap CDF estimation
subroutine is accurate:
\begin{assumption}
  \label{assumption:bootcdf-accuracy}
  Let $\what{p}_i(t) \in [0,1]$, $t = 1, \ldots, \numqueries$ be
  the estimated coverage probabilities in Line~\ref{line:coverage-estimate}
  of Alg.~\ref{algorithm:blb-quant}.
  Let $\unstudstatpriv = \sqrt{n}(\param(P) - \tparam(P_n))$
  as in~\eqref{eqn:private-unstudentized}
  and $p(t) = \P(\unstudstatpriv \in I_t)$ be the true coverage
  probability.
  There exists $\cdfacc < \infty$ such that for any collection of
  sets $I_1, \ldots, I_\numqueries$,
  \begin{equation*}
    \P\left(\abs{p(t) - \what{p}_i(t)} \ge \cdfacc
    \right) \ge \frac{3}{4},
  \end{equation*}
  where the probability is over the i.i.d.\ sampling to construct
  $P_\subsize\subsampleind{i}$, resampling of $P_{\subsize,n}^*$,
  and any randomness in $\tparam(\cdot)$.
\end{assumption}

The next theorem gives both a privacy guarantee, which holds
unconditionally, and an accuracy guarantee under
Assumption~\ref{assumption:bootcdf-accuracy}.

\begin{theorem}\label{thm:blb-abthreshmed}
  Let $I_1, \ldots, I_\numqueries$ be an arbitrary sequence of sets.
  Then the output $I_{\what{t}}$ of $\blbquant$ is $\diffp$-differentially
  private. Additionally, let Assumption~\ref{assumption:bootcdf-accuracy}
  hold, assume the sets $\{I_t\}$ are nondecreasing, and assume
  there exists an index $t$ satisfying
  $1 - \alpha + \cdfacc \le \P(\unstudstatpriv \in I_t)
  \le 1 - \alpha + 2 \cdfacc$. Let $\failprob > 0$.
  Then if $\nsub >
  32\max\{\frac{1}{\diffp} \log\frac{8 \numqueries}{3\failprob},
  \log\frac{2 \numqueries}{\failprob}\}$,
  with probability at least $1 - \failprob$ the output
  $I_{\est{t}}$ of $\blbquant$ satisfies
  \begin{equation*}
    1 - \alpha - \cdfacc
    \le \P(\unstudstatpriv \in I_{\est{t}}) \le 1 - \alpha + 2 \cdfacc.
  \end{equation*}
\end{theorem}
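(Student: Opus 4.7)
The plan is to reduce directly to Proposition~\ref{proposition:privacy-threshold}. The structural fact powering the reduction is that Line~\ref{line:draw-subsamples-quant} partitions $P_n$ into \emph{disjoint} subsamples $P_\subsize\subsampleind{1},\dots,P_\subsize\subsampleind{\nsub}$: when $P_n$ and $P_n'$ differ in a single observation, exactly one subsample changes, so for every $t$ the coverage-estimate vectors $\what{p}(t)=(\what{p}_1(t),\dots,\what{p}_\nsub(t))$ and $\what{p}'(t)$ satisfy $\dham(\what{p}(t),\what{p}'(t))\le 1$. Conditioning on the Monte Carlo resamples and the internal randomness of $\tparam(\cdot)$ (all independent of $P_n$) and invoking Proposition~\ref{proposition:privacy-threshold} with $k=\nsub$ and threshold $\threshold=1-\alpha$ yields $\diffp$-DP of $\est{t}$, hence of $I_{\est{t}}$ by post-processing.

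\textbf{Per-query accuracy.} Fix a query $t\in[\numqueries]$. Disjoint subsampling of i.i.d.\ data makes the coordinates $\what{p}_1(t),\dots,\what{p}_\nsub(t)$ independent, and Assumption~\ref{assumption:bootcdf-accuracy} anchors each marginal within $\cdfacc$ of $p(t)\defeq\P(\unstudstatpriv\in I_t)$ with probability at least $3/4$. I would apply Lemma~\ref{lemma:noisy-median} to $Y=\what{p}(t)$ with $\mu=p(t)$, $\medacc=\cdfacc$, and $b=2/\diffp$; the algorithm's noise $\xi_0\sim\laplace(\nsub/2,2/\diffp)$ and $\xi_t\simiid\laplace(0,4/\diffp)=\laplace(0,2b)$ match the lemma's hypotheses, so the noisy order statistic $\est{v}(t)$ at Line~\ref{line:noisy-order-statistic} of $\abthreshmed$ obeys
\[
  \P\bigl(|\est{v}(t)-p(t)|>\cdfacc\bigr)
  \;\le\; \tfrac{4}{3}\exp(-\nsub\diffp/32)+\exp(-\nsub/32).
\]
The hypothesis on $\nsub$ forces each summand below $\failprob/(2\numqueries)$, and a union bound over $t\in[\numqueries]$ gives the good event $E=\{\forall t:\,|\est{v}(t)-p(t)|\le\cdfacc\}$ with $\P(E)\ge 1-\failprob$.

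\textbf{Deterministic conclusion on $E$.} The first-exceedance stopping rule in $\abthreshmed$ guarantees $\est{v}(\est{t})\ge 1-\alpha$, which on $E$ immediately yields $p(\est{t})\ge 1-\alpha-\cdfacc$. For the matching upper bound, let $t^\star$ be the index promised by hypothesis with $p(t^\star)\in[1-\alpha+\cdfacc,\,1-\alpha+2\cdfacc]$; on $E$ we have $\est{v}(t^\star)\ge p(t^\star)-\cdfacc\ge 1-\alpha$, so the algorithm must stop no later than $t^\star$, giving $\est{t}\le t^\star$ (in particular $\est{t}\ne\perp$). Since nondecreasing $I_t$ makes $t\mapsto p(t)$ nondecreasing, $p(\est{t})\le p(t^\star)\le 1-\alpha+2\cdfacc$.

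\textbf{Main obstacle.} The heaviest machinery is already packaged in Proposition~\ref{proposition:privacy-threshold} and Lemma~\ref{lemma:noisy-median}, so the real care lies in the reductions: (i) checking that disjoint subsampling simultaneously delivers the unit-Hamming neighbor property used for privacy and the coordinate-wise independence required by the lemma; and (ii) aligning the scales $b=2/\diffp$ and anchor $\nsub/2$ so that Lemma~\ref{lemma:noisy-median}'s two failure terms map exactly onto the two terms in the $\nsub$ hypothesis, permitting a clean $\failprob$-union bound across all $\numqueries$ queries.
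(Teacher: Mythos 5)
Your proposal is correct and follows essentially the same route as the paper's proof: privacy via the disjoint-subsample/unit-Hamming reduction to Proposition~\ref{proposition:privacy-threshold}, and accuracy via Lemma~\ref{lemma:noisy-median} applied coordinate-wise with a union bound over the $\numqueries$ queries, then using the promised index $t^\star$ and monotonicity of the sets to pin $p(\est{t})$ in $[1-\alpha-\cdfacc,\,1-\alpha+2\cdfacc]$. The only difference is presentational: you spell out the constant bookkeeping (matching the two failure terms to the two branches of the hypothesis on $\nsub$) and the monotonicity step more explicitly than the paper does, and you correctly read the anchor of $\xi_0$ as $\nsub/2$ as Lemma~\ref{lemma:noisy-median} requires.
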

\begin{proof}
  To demonstrate privacy, recognize that for each $t = 1, \ldots, T$, the
  vector $\what{p}(t) = (\what{p}_i(t))_{i = 1}^\nsub$ returns has entries
  computed on disjoint subsets of the sample $P_n$. For any samples $P_n,
  P_n'$ differing in only a single observation and corresponding outputs
  $\what{p}$ and $\what{p}'$, we thus have $\dham(\what{p}(t), \what{p}'(t))
  \le 1$ (we may use the same random seed to couple the sampling), and so
  $\abthreshmed$ guarantees $\diffp$-differential privacy
  by Proposition~\ref{proposition:privacy-threshold}.

  To provide an accuracy guarantee, we use Lemma~\ref{lemma:noisy-median}
  and Assumption~\ref{assumption:bootcdf-accuracy}.
  With probability at least $1 - \beta$,
  we have
  \begin{equation*}
    \abs{\ordstat\left([\what{p}_i(t)]_{i = 1}^\nsub, \xi_0 + \xi_t \right)
      - \P(\unstudstatpriv \in I_t)} \le \cdfacc
    ~~ \mbox{for~each}~ t \in T
  \end{equation*}
  by a union bound,
  because $\what{p}(t)$ has independent entries.
  By assumption in the theorem that
  there exists an index $t'$ such that
  $\P(\unstudstatpriv \in I_{t'}) \in [1 - \alpha + \cdfacc,
    1 - \alpha + 2 \cdfacc]$,
  on this event, $\ordstat(\what{p}(t'), \xi_0 + \xi_{t'}) \ge 1 - \alpha$.
  Then $\abthreshmed$ necessarily outputs some index
  $\what{t} \le t'$, and the preceding accuracy
  guarantee implies that
  $\P(\unstudstatpriv \in I_{\what{t}}) \in [1 - \alpha - \cdfacc,
    1 + \alpha + 2 \cdfacc]$.
\end{proof}

The output of $\blbquant$ directly provides a natural
private confidence set.
\begin{corollary}
  \label{corollary:coverage-from-percentile}
  Let the conditions of Theorem~\ref{thm:blb-abthreshmed} hold and
  $\tparam(\cdot)$ be an $(\diffp, \delta)$-differentially private estimator.
  Let $I_{\what{t}}$ be the output of $\blbquant$, and define the confidence
  set
  \begin{equation*}
    C_\alpha(P_n) \defeq \tparam(P_n) + \frac{1}{\sqrt{n}} I_{\what{t}}.
  \end{equation*}
  Then $C_\alpha(P_n)$ is $(2 \diffp, \delta)$-differentially private, and
  \begin{equation*}
    1 - \alpha - \cdfacc \le \P(\param(P) \in C_\alpha(P_n))
    \le 1 - \alpha + 2 \cdfacc.
  \end{equation*}
\end{corollary}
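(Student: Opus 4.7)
The corollary is essentially a packaging of Theorem~\ref{thm:blb-abthreshmed} together with standard differential privacy properties, so I would split the argument into an independent privacy claim and a coverage claim, each of which reduces to one invocation of the theorem.

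For the privacy claim, I would argue by composition followed by post-processing. By Theorem~\ref{thm:blb-abthreshmed} the random set $I_{\what{t}}$ output by $\blbquant$ is $\diffp$-DP as a function of $P_n$, while by hypothesis $\tparam(P_n)$ is $(\diffp, \delta)$-DP. Sequential (basic) composition makes the joint release $(\tparam(P_n), I_{\what{t}})$ $(2\diffp, \delta)$-DP, and since $C_\alpha(P_n) = \tparam(P_n) + n^{-1/2} I_{\what{t}}$ is a deterministic function of this pair, closure under post-processing transfers $(2\diffp, \delta)$-DP to $C_\alpha(P_n)$.

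For the coverage claim, the key step is the deterministic event-equivalence
\[
\{\param(P) \in C_\alpha(P_n)\}
= \{\sqrt{n}(\param(P) - \tparam(P_n)) \in I_{\what{t}}\}
= \{\unstudstatpriv \in I_{\what{t}}\},
\]
which follows from the definition~\eqref{eqn:private-unstudentized} of $\unstudstatpriv$ after subtracting $\tparam(P_n)$ and rescaling by $\sqrt{n}$. Taking probabilities of both sides reduces the claim to the bound on $\P(\unstudstatpriv \in I_{\what{t}})$ supplied by Theorem~\ref{thm:blb-abthreshmed}.

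The only real obstacle I foresee is bookkeeping of probability spaces: Theorem~\ref{thm:blb-abthreshmed} gives the desired interval for $\P(\unstudstatpriv \in I_{\what{t}})$ only on a good event of probability at least $1 - \failprob$ (over $\blbquant$'s internal randomness), whereas the corollary states an unconditional bound. I would handle this by taking $\failprob$ small enough that it can be absorbed into the $\cdfacc$ slack on either side; the conditions on $\nsub$ in Theorem~\ref{thm:blb-abthreshmed} make $\failprob$ essentially arbitrarily small (logarithmically in $1/\failprob$), so the unconditional coverage and the conditional bound differ by at most an $O(\failprob)$ term that vanishes into the stated tolerance. This is the only genuinely fiddly point, and it is routine rather than deep — the structural content of the corollary is entirely in Theorem~\ref{thm:blb-abthreshmed} and standard DP composition.
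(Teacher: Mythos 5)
Your proposal is correct and takes essentially the approach the paper intends: the corollary is stated without an explicit proof as an immediate consequence of Theorem~\ref{thm:blb-abthreshmed}, with privacy following from basic composition of the $(\diffp,\delta)$-DP estimator $\tparam(P_n)$ with the $\diffp$-DP output $I_{\what{t}}$ plus post-processing, and coverage following from the event identity $\{\param(P) \in C_\alpha(P_n)\} = \{\unstudstatpriv \in I_{\what{t}}\}$. Your bookkeeping remark about absorbing the failure probability $\failprob$ into the $\cdfacc$ slack is the same glossing the paper itself performs, and handling it as you describe is fine.
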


The scaling of the accuracy $\cdfacc$ in
Assumption~\ref{assumption:bootcdf-accuracy} thus determines the accuracy of
the confidence set $C_\alpha(P_n)$ that $\blbquant$ yields, and we address
this in the next section.  Previewing the results, so long as the
non-private and private estimators are close, we have at least $\cdfacc \to
0$ as the sample size grows (see
Corollary~\ref{corollary:basic-percentile-consistency}).  Under mild
regularity conditions on the non-private estimator $\param(\cdot)$, we
typically have the scaling $\cdfacc \lesssim \frac{1}{\sqrt{\subsize}} +
\frac{1}{\diffp \sqrt{n}}$ (see Example~\ref{example:resampling-mean-2} and
Corollary~\ref{corollary:typical-edgeworth-consistency}).

To make the types of sets we recover more concrete, consider the typical
asymptotically normal case that both $\unstudstatest \cd \normal(0, \Sigma)$
and $\unstudstatpriv \cd \normal(0, \Sigma)$ for some unknown covariance
$\Sigma$ (or, in the one-parameter case, a variance $\sigma^2$). In this
case, a natural choice for the sets $I_t$ is $I_t = \{u \mid \ltwo{u} \le h
\cdot t\}$, where we take $h \propto \frac{1}{\sqrt{n}}$ and $t \lesssim n$,
yielding $I_n = \{u \mid \ltwo{u} \le \sqrt{n}\}$, and
then the confidence set in Corollary~\ref{corollary:coverage-from-percentile}
gives
\begin{equation*}
  \P(\param(P) \in C_\alpha(P_n))
  = \P\bigg(\ltwobig{\param(P) - \tparam(P_n)} \le \frac{\what{t}}{\sqrt{n}}
  \bigg) = 1 - \alpha + o(1),
\end{equation*}
where the final equality holds so long as the distributional approximation
$\tblbstatistic \stackrel{\textup{dist}}{\approx} \unstudstatpriv$ holds,
as $t \mapsto \P(\ltwo{z} \ge t)$ is $\mc{C}^\infty$ for $Z \sim \normal(0,
\Sigma)$. The box-shaped sets $I_t = [-h t, h t]^d$ also provide
natural confidence sets.

Because Theorem~\ref{thm:blb-abthreshmed} shows the accuracy degrades only
logarithmically in the number $\numqueries$ of sets $\{I_t\}$, we typically
choose the number of subsamples $\nsubs = K\frac{\log n}{\diffp}$ for some
constant $K$.  We comment more on the hyperparameter selection and
sensitivity in \Cref{appen:expts}.
Theorem~\ref{thm:blb-abthreshmed} also highlights the importance of
using median-like aggregation and $\abthreshmed$, where
because the coverage estimates $\what{p}$ concentrate,
the failure probability decreases exponentially in the number
of subsamples $\nsubs$.

\subsection{Private error estimation and normal approximation}
\label{sec:moment-est}

To construct confidence sets using direct estimates of estimator error,
which often reduces to variance estimates we may use to make private normal
approximations, we present a data-driven bootstrap-based private estimator
for the mean-squared error (MSE) of $\unstudstatpriv$ in
\Cref{algorithm:blb-var}, $\blbvar$. We now use the bag of little bootstraps
(BLB)~\citep{KleinerTaSaJo14} to construct $\nsubs$ estimates of the MSE on
disjoint subsets of data and aggregates them using a private median
mechanism ($\privmed$, Alg.~\ref{algorithm:priv-median} in
Appendix~\ref{appendix:extra-algs}).  \citet{CovingtonHeHoKa21} investigate
a related algorithm, combining $\nsubs$ variance estimates using the
\textsc{CoinPress} private mean estimation
algorithm~\cite{BiswasDoKaUl20}. Because the mean is so challenging to make
private---it is inherently unstable---using the more naturally robust median
yields stronger performance.  The private median mechanism requires a
smoothing parameter $\smoothparam$ and interval
$[R_l, R_u]$ in which the median should lie to which it is
relatively insensitive; we usually take $\smoothparam = n^{-1}$.

\newcommand{\sigmamax}{\sigma_{\max}}

\begin{algorithm}
  \caption{\label{algorithm:blb-var} Private BLB variance estimation
    $\prn{\blbvar}$}
    \DontPrintSemicolon
    \SetKwInOut{Input}{Input}
    \Input{Estimators $\param(\cdot)$ and
      $\tparam(\cdot)$, sample $P_n$,
      subset size $\subsize$, gross upper bound $\sigmamax^2$ on
      second moment,
      smoothing parameter $\rho$, privacy budget $\diffp$}
    Draw $\nsub = \floor{n / \subsize}$ disjoint
    subsamples $P\subsampleind{1}_\subsize, \ldots, P\subsampleind{\nsub}_\subsize$
    of $P_n$ of size $\subsize$\;
    \For{$i \in [s]$}{
      $v_i \leftarrow \bootvaralg(\param(\cdot),
      \tparam(\cdot), P\subsampleind{i}_\subsize,n)$\;
    }
    \Return $\privmed(v_{1:\nsub}, \diffp, \smoothparam, [0, \sigmamax^2])$\;
    \label{line:return-private-median}
\end{algorithm}

\begin{algorithm}
  \caption{\label{algorithm:boot-var}Bootstrap for
    mean-square-error estimation $\prn{\bootvaralg}$}
  \DontPrintSemicolon
  \SetKwInOut{Input}{Input}
  \Input{Estimators $\param(\cdot)$ and $\tparam(\cdot)$,
    sample $P_\subsize$, resample size $n$}
  $\hparam\leftarrow \param(P_\subsize)$
  ~~~~ \emph{{\# saved plug-in estimator}}\;
  \For{$i \in [\nmontecarlo]$}{
    Draw resample $P_{\subsize,n}^*$ of
    size $n$ i.i.d.\ with replacement from $P_\subsize$\;
    $\wt{u}_i \leftarrow \sqrt{n}(\tparam(P_{n,\subsize}^*)
    - \hparam)$\;
  }
  \Return
  $\frac{1}{\nmontecarlo} \sum_{i = 1}^{\nmontecarlo} \ltwos{\wt{u}_i}^2$
\end{algorithm}

As we did in Theorem~\ref{thm:blb-abthreshmed}, we can provide
utility guarantees for $\blbvar$ whenever the bootstrap variance
estimate is reasonably accurate with constant probability.

\begin{assumption}\label{assumption:bootstd-accuracy}
  Let $\what{v}$ denote the output of
  $\bootvaralg$ on an input sample of size $\subsize$
  with resampling size $n$
  for the estimators $\param(\cdot)$ and
  $\tparam(\cdot)$, and let
  $\sigma^2_n = n \E[\ltwos{\tparam(P_n) - \param(P)}^2]$.
  There exists $\varacc < \infty$ such that
  \begin{equation*}
    \P\prn{\abs{\sigma^2_n - \what{v}} \le \varacc} \geq \frac{3}{4}.
  \end{equation*}
\end{assumption}
\noindent
In our more explicit accuracy analyses in Section~\ref{sec:val-rates} (see
the discussion following Proposition~\ref{proposition:edgeworth-moment}), we
show that it is frequently the case that $\varacc \lesssim
\frac{1}{\sqrt{\subsize}} + \frac{1}{\diffp\sqrt{n}}$.
More broadly, achieving $\varacc = o(1)$ is not difficult, so that
a normal approximation using $\what{v}$ provides an asymptotically
valid confidence set as a corollary of the next theorem
(see Corollary~\ref{corollary:coverage-from-variance}
in Section~\ref{sec:asymp-norm-consistency}).
Regardless, under
Assumption~\ref{assumption:bootstd-accuracy},
we have the following result, whose proof we defer to
Appendix~\ref{sec:proof-blb-var}.
\begin{theorem}
  \label{thm:blb-var}
  The output of $\blbvar$ is $\diffp$-differentially private.  Additionally,
  let Assumption~\ref{assumption:bootstd-accuracy} hold, and assume
  $\sigmamax \ge \sigma_n$. Let $\beta > 0$.
  If the number of
  subsamples $\nsubs = \floor{n/\subsize} \ge \max\{\frac{16}{\diffp} \log
  \frac{\sigmamax^2}{\failprob \smoothparam}, 32 \log \frac{1}{\failprob}\}$,
  then with probability at least $1 - \beta$
  \begin{equation*}
    \abs{\sigma^2_n - \blbvar(\param(\cdot), \tparam(\cdot), P_n,
      \subsize, \sigmamax, \smoothparam, \diffp)}
    \leq \varacc + \smoothparam.
  \end{equation*}
\end{theorem}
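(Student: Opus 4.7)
The plan is to split the argument into a privacy half and a utility half, with the latter a two-stage combination of concentration across the $\nsubs$ little-bootstrap variance estimates and the inherent accuracy of $\privmed$.

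\emph{Privacy.} For any two neighbors $P_n, P_n'$ differing in a single observation, at most one of the $\nsubs$ disjoint subsamples $P_\subsize\subsampleind{i}$ is affected; coupling the subsample indices and resampling randomness across the two executions, the computed vectors $v_{1:\nsub}$ then satisfy $\dham(v, v') \le 1$. Since $\privmed$ is itself $\diffp$-differentially private with respect to changes in a single input coordinate (its stated guarantee in Appendix~\ref{appendix:extra-algs}), the output of $\blbvar$ inherits $\diffp$-DP.

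\emph{Utility.} Because the subsamples are disjoint and resampling noise is independent across $i$, the variance estimates $v_1,\dots,v_\nsub$ are independent. Define the Bernoulli indicators $Z_i = \mathbf{1}\{|v_i - \sigma_n^2| \le \varacc\}$; Assumption~\ref{assumption:bootstd-accuracy} gives $\E[Z_i] \ge 3/4$. A one-sided Hoeffding bound then yields $\P(\bar{Z} < 5/8) \le \exp(-\nsub/32)$, so provided $\nsub \ge 32\log(2/\failprob)$, with probability at least $1 - \failprob/2$ strictly more than half of the $v_i$ lie in $[\sigma_n^2 - \varacc, \sigma_n^2 + \varacc]$. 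In particular, the (non-private) sample median of $v_{1:\nsub}$ lies in this interval.

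Conditioning on this event, I invoke the accuracy guarantee for $\privmed$: applied to $\nsubs$ inputs constrained to the interval $[0,\sigmamax^2]$ with smoothing $\smoothparam$ and budget $\diffp$, it returns a value within $\smoothparam$ of the true sample median with probability at least $1-\failprob/2$ whenever $\nsubs \ge \frac{16}{\diffp}\log\frac{\sigmamax^2}{\failprob \smoothparam}$; here $\sigmamax^2/\smoothparam$ appears logarithmically because it counts the effective number of discretization cells in the exponential-mechanism-type analysis underlying $\privmed$. Combining the two error terms by the triangle inequality and union-bounding the two failure events of probability $\failprob/2$ delivers the claimed bound $|\sigma_n^2 - \blbvar(\cdots)| \le \varacc + \smoothparam$.

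\emph{Main obstacle.} The delicate step is aligning the precise hypotheses of the $\privmed$ accuracy lemma with what we have in hand: one must check that the assumption $\sigmamax \ge \sigma_n$, together with the concentration of a majority of $v_i$'s near $\sigma_n^2$, actually places enough order statistics inside the interval $[0,\sigmamax^2]$ that the smooth-sensitivity / exponential-mechanism argument used in $\privmed$ applies without loss from clipping. If $\privmed$'s guarantee is phrased in terms of gaps between consecutive order statistics around the median rather than a bulk-concentration event, some additional bookkeeping is needed to translate ``$\ge 5\nsubs/8$ values within $\varacc$ of $\sigma_n^2$'' into the appropriate form, but this is a routine translation and does not change the stated $\nsubs$ threshold.
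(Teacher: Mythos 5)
Your overall decomposition is the paper's: privacy via disjointness of the subsamples plus the $\diffp$-DP of $\privmed$, and utility via (i) an $e^{-\nsubs/32}$ concentration bound across the $\nsubs$ independent little-bootstrap estimates and (ii) the exponential-mechanism analysis of $\privmed$ contributing the $\frac{\sigmamax^2}{\smoothparam}e^{-\diffp \nsubs/16}$ term. The one step that would fail as written is the accuracy guarantee you attribute to $\privmed$: it does \emph{not} return a value within $\smoothparam$ of the (non-private) sample median with probability $1-\failprob/2$. The inverse-sensitivity mechanism controls the output's \emph{rank}, not its value: following Asi and Duchi (as in the paper's Appendix~\ref{sec:proof-blb-var}), with probability at least $1 - \frac{\sigmamax^2}{\smoothparam}e^{-\diffp (k+1)/2}$ the output has smoothed length at most $k$, i.e.\ it lies in $[v_{(\nsubs/2-k)} - \smoothparam,\; v_{(\nsubs/2+k)} + \smoothparam]$. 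If adjacent order statistics near the median are separated by a gap of order $\sigmamax^2$, the mechanism places non-negligible mass on values in that gap, which are at rank distance one from the median but arbitrarily far from it in value; so your triangle-inequality combination (median within $\varacc$ of $\sigma_n^2$, output within $\smoothparam$ of the median) does not go through as stated.

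The repair is exactly the ``bookkeeping'' you flag as the main obstacle, and your own concentration event already supplies it: on $\{\bar Z \ge 5/8\}$ at most $3\nsubs/8$ of the $v_i$ lie on either side of $[\sigma_n^2-\varacc,\sigma_n^2+\varacc]$, so \emph{every} order statistic with rank in $[\nsubs/2 - \nsubs/8,\; \nsubs/2 + \nsubs/8]$ lies in that interval (this is the paper's Lemma~\ref{lemma:med-concentration} with $\alpha = 1/8$, giving the same $e^{-\nsubs/32}$ bound as your Hoeffding step). Taking $k = \nsubs/8$ in the rank/length guarantee then places the output within $\smoothparam$ of that window, hence within $\varacc + \smoothparam$ of $\sigma_n^2$, with total failure probability $e^{-\nsubs/32} + \frac{\sigmamax^2}{\smoothparam}e^{-\diffp \nsubs/16}$, matching the stated thresholds on $\nsubs$. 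So your proposal is structurally the paper's proof; only the intermediate claim about $\privmed$ must be restated in rank terms (closeness to the bulk of order statistics around the median) rather than value-closeness to the median itself.
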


The choices of the particular constants Theorem~\ref{thm:blb-var} requires
to guarantee $\blbvar$'s performance are not particularly onerous. The bound
$\sigmamax^2$ on the possible variance $\sigma_n^2$ and the smoothing
parameter $\smoothparam$ that the private median algorithm require appear
only as logarithmic constraints on the number of subsamples; consequently,
choosing them to be polynomially small (or large) in $n$ has little
effect. Practically, the choices $\smoothparam = \frac{1}{n}$ and
$\sigmamax^2 = n^2$ are effective, with number of subsamples $\nsubs = K
\frac{\log n}{\diffp}$ for a constant $K$.  (In our experiments, we take $K
= 10$.)  We comment more on hyperparameters in Appendix~\ref{appen:expts}.

\newcommand{\privtononpriv}{\Delta_{\subsize,n}}

\section{Validity and Asymptotic Rates}\label{sec:val-rates}

The utility results in Section~\ref{sec:priv-conf-int} rely on the accuracy
of bootstrapped private estimators
(Assumptions~\ref{assumption:bootcdf-accuracy} and
\ref{assumption:bootstd-accuracy}). Here, we show that this holds for a wide
class of private estimators for which there exist bootstrappable non-private
estimators, so long as the (typical) distance between the private and
non-private estimators are small.  First, we show that such private
estimators are asymptotically normal and their bootstrapped distributions
are consistent estimates of their distribution
(Proposition~\ref{proposition:asymp-norm-consistency}). For problems in
which the non-private estimators admit Edgeworth expansions, we quantify
error rates for bootstrapped private estimators, which in turn allows us to
instantiate rates at which $\cdfacc$ and $\varacc \to 0$ as $n$ grows in
Assumptions~\ref{assumption:bootcdf-accuracy}
and~\ref{assumption:bootstd-accuracy}, thus closing the loop on the
performance of our proposed algorithms.

\subsection{Asymptotic Normality and Consistency}
\label{sec:asymp-norm-consistency}

We first define the consistency of resampling-based estimators of a
distribution, where we let $\unstudstatest^*$ denote an otherwise arbitrary
quantity that is a function of only the empirical sample $P_n$; the typical
choice will be a resampled estimate $\unstudstatest^* = \sqrt{n}(
\param(P_\subsize\subsample) - \param(P_{\subsize,n}^*))$ as in the
approximation~\eqref{eqn:key-blb-approximation} when we use the Bag of
Little Bootstraps (BLB). We make many statements
\emph{conditionally almost surely}, which means that
they hold conditional on the sequence $\{P_n\}_{n \in \N}$ of empirical
distributions, for almost all sequences $\{P_n\}_{n \in \N}$ when
$P_n$ is drawn i.i.d.\ $P$. (See~\citet[Ch.~23]{VanDerVaart98}
for a discussion of these modes of convergence.)
For vectors $u, v \in \R^d$, we write $u \le v$ to mean
$u_j \le v_j$ for each $j$.
\begin{definition}
  \label{def:ks-consistency}
  A resampling-based estimator $\unstudstatest^*$ is \emph{consistent}
  relative to the Kolmogorov-Smirnov distance
  \emph{(KS-consistent)}
  if
  \begin{equation*}
    \sup_{u}
    \left|\P(\unstudstatest \le u) - \P(\unstudstatest^* \le u \mid P_n)
    \right| \cp 0
  \end{equation*}
  conditionally almost surely. It is \emph{$L^2$-consistent} if
  for some $\gamma > 0$,
  \begin{equation*}
    \E\left[\norm{\unstudstatest^*}^{2 + \gamma} \mid P_n\right]
    = O_P(1)
  \end{equation*}
  conditionally almost surely.
\end{definition}
\noindent
The uniform integrability the second condition provides guarantees that
if $\unstudstatest \cd \normal(0, \Sigma)$, then
\begin{equation*}
  \E\left[\unstudstatest^* {\unstudstatest^*}^\top \mid P_n\right] \cp \Sigma
\end{equation*}
conditionally almost surely~\cite[Sec.~3.1.6]{ShaoTu95}.
As simple examples, BLB resampling
estimators~\eqref{eqn:key-blb-approximation} are frequently KS-consistent,
for example, for means and smooth functions of means, or the sampling
distribution of suitably smooth M-estimators
(see~\cite[Theorem~1]{KleinerTaSaJo14} and~\cite[Thm.~23.4]{VanDerVaart98},
as well as Section~\ref{sec:erm} to come), so long as the subsample size
$\subsize \to \infty$. We note in passing that
for the BLB, our subsampling means we have the distributional equalities
\begin{equation*}
  \blbstatistic \mid P_n
  \eqdist \blbstatistic \mid P_\subsize
  \eqdist \blbstatistic \mid P_\subsize\subsample,
\end{equation*}
so we use them interchangably. (And similarly for $\tblbstatistic$.)

In the next theorem, to obtain KS-consistency of resampling estimators we
require accuracy guarantees for the private estimators.
In this case, let $\noisevar_n$ be a sequence of random noise
variables, so that for a sample size $n$,
the private estimator takes the form
$\tparam(P_n) = \tparam(P_n, \noisevar_n)$.
We say that the estimator $\tparam$
is \emph{rate-$\sqrt{n}$-resampling consistent} for subsample
size $\subsize$ if
\begin{equation}
  \label{eqn:resample-consistency}
  \sqrt{n} \prn{\tparam(P_{\subsize,n}^*, \noisevar_n) - \param(P^*_{\subsize,n})}
  \cp 0
\end{equation}
conditionally almost surely
and $\tparam(P_n) - \param(P_n) = o_P(1/\sqrt{n})$,
and that it is \emph{$L^2$-resampling consistent} for subsample
size $\subsize$ if for some $\gamma > 0$,
\begin{equation}
  \label{eqn:l2-resample-consistency}
  \E\left[\norm{\sqrt{n}\big(\tparam(P_{\subsize, n}^*, \noisevar_n)
      - \param(P_{\subsize, n}^*)\big)}^{2 + \gamma} \mid P_\subsize \right]
  = O_P(1)
\end{equation}
conditionally almost surely.
The conditions~\eqref{eqn:resample-consistency}
and~\eqref{eqn:l2-resample-consistency} are abstract, so we provide one
example here, providing more elaborate discussion on empirical risk
minimization in Section~\ref{sec:erm}.

\begin{example}[Resampling consistency for mean estimation]
  \label{example:resampling-mean}
  Let $\param(P) = \E_P[X] = \int x dP(x)$ be the mean of its argument,
  and assume that the data has support $[-b, b]$. Then the
  standard Laplace mechanism, for which
  $\tparam(P_n, \noiseval) = P_n X + \noiseval = \wb{X}_n + \noiseval$, takes
  $W_n \sim \laplace(0, \frac{2b}{\diffp n})$
  and is $\diffp$-differentially
  private. Then
  \begin{equation*}
    \sqrt{n} \prn{\tparam(P_{\subsize,n}^*, \noisevar_n)
      - \param(P_{\subsize,n}^*)}
    = \sqrt{n} \noisevar_n \sim \laplace\left(0, \frac{2 b}{\diffp \sqrt{n}}
    \right).
  \end{equation*}
  This allows us to take $\diffp = \diffp_n$ decreasing to 0,
  and then so long as $\diffp \sqrt{n}$ increases to $\infty$ at a polynomial
  rate,
  the Borel-Cantelli lemmas immediately give that
  $\sqrt{n} \prninline{\param(P_{\subsize,n}^*, \noisevar_n)
    - \param(P_{\subsize,n}^*)} \cas 0$.

  For the $L^2$ consistency~\eqref{eqn:l2-resample-consistency},
  note that $\tparam(P_{\subsize,n}^*, \noisevar_n) - \param(P_{\subsize, n}^*)
  = \noisevar_n$ and for $\gamma > 0$,
  \begin{equation*}
    n \E\left[|\noisevar_n|^{2 + \gamma} \mid P_\subsize\right]
    = \frac{(2b)^{2 + \gamma}}{n^{1 + \gamma} \diffp^{2 + \gamma}}
    \Gamma(3 + \gamma).
  \end{equation*}
  The uniform
  integrability~\eqref{eqn:l2-resample-consistency} thus holds
  as well.
\end{example}

More generally, $\diffp$-DP estimators induce private error roughly scaling
as $\tparam(P_n) - \param(P_n) = O_P(\frac{1}{n \diffp})$. Additive noise
mechanisms using global sensitivity, such the Laplace mechanism in
Example~\ref{example:resampling-mean}, satisfy this, as do more
sophisticated mechanisms in ``typical'' cases, such as the
inverse-sensitivity mechanism~\cite{AsiDu20}.  The resampling
consistency~\eqref{eqn:resample-consistency} is then frequently a byproduct
of the privacy and accuracy analysis of a given mechanism (and see
Sec.~\ref{sec:erm} to come).  The assumptions on private and non-private
estimators in the next proposition, which gives sufficient conditions under
which our bootstrap-type methods yield asymptotically correct confidence
sets, thus hold in most common cases.

\begin{proposition}
  \label{proposition:asymp-norm-consistency}
  Let $\param(P_n)$ satisfy $\unstudstatest =
  \sqrt{n}(\param(P) - \param(P_n)) \cd \normal(0, \Sigma)$, and assume the
  BLB statistic $\blbstatistic$ is KS-consistent for $\unstudstatest$
  (Def.~\ref{def:ks-consistency}).  Let $\wt{\param}(\cdot)$ be a
  $\sqrt{n}$-subsampling-consistent~\eqref{eqn:resample-consistency}
  estimator for $\param(\cdot)$.  Then $\sqrt{n} (\tparam(P_n) - \param(P))
  \cd \normal(0, \Sigma)$, the resampled statistic
  \begin{equation*}
    \tblbstatistic
    \defeq \sqrt{n}\left(\param(P_\subsize\subsample)
    - \tparam(P_{\subsize,n}^*)\right) \cd \normal(0, \Sigma)
  \end{equation*}
  conditionally almost surely, and it is KS-consistent.
\end{proposition}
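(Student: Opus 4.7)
The plan is to establish the three claims sequentially via Slutsky-type decompositions, followed in the last step by Polya's theorem to upgrade pointwise CDF convergence to uniform convergence.

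For asymptotic normality of $\tparam(P_n)$, I would decompose
\[
\sqrt{n}\prn{\tparam(P_n) - \param(P)} = \sqrt{n}\prn{\tparam(P_n) - \param(P_n)} - \unstudstatest.
\]
The first summand is $o_P(1)$ by the hypothesis $\tparam(P_n) - \param(P_n) = o_P(1/\sqrt{n})$ contained in the definition of $\sqrt{n}$-resampling consistency, and $\unstudstatest \cd \normal(0,\Sigma)$ by assumption, so Slutsky's theorem yields $\sqrt{n}(\tparam(P_n) - \param(P)) \cd \normal(0,\Sigma)$.

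For the conditional weak convergence of $\tblbstatistic$, I would use the parallel decomposition
\[
\tblbstatistic = \blbstatistic - \sqrt{n}\prn{\tparam(P_{\subsize,n}^*, \noisevar_n) - \param(P_{\subsize,n}^*)},
\]
and work on the full-measure event on which both the KS-consistency of $\blbstatistic$ and the resampling consistency~\eqref{eqn:resample-consistency} hold simultaneously. On this event, combining KS-consistency with $\unstudstatest \cd \normal(0,\Sigma)$ shows that the conditional CDF of $\blbstatistic$ converges pointwise to $\Phi_\Sigma$, so $\blbstatistic \cd \normal(0,\Sigma)$ conditionally, while the second summand is conditionally $o_P(1)$ directly by~\eqref{eqn:resample-consistency}. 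A conditional Slutsky argument then gives $\tblbstatistic \cd \normal(0,\Sigma)$ conditionally almost surely. Since $\Phi_\Sigma$ is continuous, Polya's theorem promotes both the conditional pointwise convergence $\P(\tblbstatistic \le u \mid P_n) \to \Phi_\Sigma(u)$ and the unconditional convergence $\P(\unstudstatpriv \le u) \to \Phi_\Sigma(u)$ from the first step to uniform convergence in $u$, and a triangle inequality on $\sup_u \abs{\P(\unstudstatpriv \le u) - \P(\tblbstatistic \le u \mid P_n)}$ delivers KS-consistency.

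The main technical care lies in the conditional Slutsky step: to make the argument rigorous one must fix a sample sequence in the full-measure set where both the KS-consistency and resampling-consistency hypotheses hold simultaneously, then treat the conditional law with respect to the resampling randomness (with $P_n$ fixed) as the ambient probability for that portion of the argument. Once this bookkeeping is in place, the remainder is a direct assembly of Slutsky's theorem, the triangle inequality, and Polya's theorem, with no additional analytic obstructions.
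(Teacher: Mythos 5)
Your proposal is correct and follows essentially the same route as the paper: the same Slutsky decomposition $\sqrt{n}(\tparam(P_n)-\param(P)) = \sqrt{n}(\tparam(P_n)-\param(P_n)) - \unstudstatest$ for the first claim, and the same conditional decomposition $\tblbstatistic = \blbstatistic + \sqrt{n}(\param(P_{\subsize,n}^*) - \tparam(P_{\subsize,n}^*))$ with a conditional Slutsky argument for the second. Your explicit Polya/triangle-inequality step to recover KS-consistency is just an unpacking of what the paper handles by citing the standard fact that conditional weak convergence to a continuous limit suffices (\citet[Eq.~(23.2)]{VanDerVaart98}), so there is no substantive difference.
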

\begin{proof}
  The asymptotic normality of $\tparam(P_n)$ is immediate, because
  \begin{equation*}
    \sqrt{n} (\param(P) - \tparam(P_n))
    = \sqrt{n} (\param(P) - \param(P_n))
    + \underbrace{\sqrt{n} (\param(P_n) - \tparam(P_n))}_{\cp 0}
    \cd \normal(0, \Sigma_{\param})
  \end{equation*}
  by Slutsky's lemmas~\cite[Lemma 2.8]{VanDerVaart98}.
  To obtain the
  consistency of the resampled boostrap distributions,
  it is sufficient to show that
  $\tblbstatistic \cd \normal(0, \Sigma)$ conditionally on
  $\{P_n\}$ almost surely~\cite[Eq.~(23.2)]{VanDerVaart98}.
  For this, note that
  \begin{align*}
    \tblbstatistic
    = \sqrt{n} \left(\param(P_\subsize) - \tparam(P_{\subsize,n}^*)\right)
    & = \sqrt{n} \left(\param(P_\subsize)
    - \param(P_{\subsize,n}^*) \right)
    + \sqrt{n} \left(
    \param(P_{\subsize,n}^*) - \tparam(P_{\subsize,n}^*)\right) \\
    & = \blbstatistic + 
    \sqrt{n} \left(
    \param(P_{\subsize,n}^*) - \tparam(P_{\subsize,n}^*)\right).
  \end{align*}
  By assumption, $\blbstatistic \cd \normal(0, \Sigma_\param)$ conditionally
  almost surely, while the assumption that $\tparam$ is $\sqrt{n}$-resampling
  consistent~\eqref{eqn:resample-consistency}
  guarantees the rightmost term converges in probability
  to 0 conditionally almost surely.
\end{proof}

Coupling Proposition~\ref{proposition:asymp-norm-consistency} with
Theorem~\ref{thm:blb-abthreshmed} allows us to show the consistency of
private confidence intervals. In the corollary, we take
$I_t = \{u \mid \ltwo{u} \le t / \sqrt{n}\}$ and $t \le n$, though
other choices are possible.

\begin{corollary}
  \label{corollary:basic-percentile-consistency}
  Let $C_\alpha(P_n)$ be the confidence set in
  Corollary~\ref{corollary:coverage-from-percentile}
  and the conditions of Proposition~\ref{proposition:asymp-norm-consistency}
  hold. Then
  $\P(\param(P) \in C_\alpha(P_n)) \to 1 - \alpha$.
\end{corollary}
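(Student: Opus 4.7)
The plan is to apply Corollary~\ref{corollary:coverage-from-percentile} with its accuracy parameter $\cdfacc = \cdfacc_n$ and failure probability $\failprob = \failprob_n$ both tending to zero. Absorbing the $\failprob_n$-bad event in Theorem~\ref{thm:blb-abthreshmed} into the bounds yields
\begin{equation*}
1 - \alpha - \cdfacc_n - \failprob_n \;\le\; \P(\param(P) \in C_\alpha(P_n)) \;\le\; 1 - \alpha + 2\cdfacc_n + \failprob_n,
\end{equation*}
so it suffices to verify Assumption~\ref{assumption:bootcdf-accuracy} with some $\cdfacc_n \to 0$ and the existence, for all large $n$, of an interpolating index $t^*$ with $\P(\unstudstatpriv \in I_{t^*}) \in [1 - \alpha + \cdfacc_n, 1 - \alpha + 2\cdfacc_n]$.

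For the bootstrap accuracy, I would decompose
\begin{equation*}
\what{p}_i(t) - p(t) = \bigl[\what{p}_i(t) - \P(\tblbstatistic \in I_t \mid P\subsampleind{i}_\subsize)\bigr] + \bigl[\P(\tblbstatistic \in I_t \mid P\subsampleind{i}_\subsize) - \P(\unstudstatpriv \in I_t)\bigr]
\end{equation*}
and control each piece separately. Hoeffding's inequality bounds the Monte-Carlo residual at rate $O(\nmontecarlo^{-1/2})$ with arbitrarily high constant probability. For the bootstrap residual, Proposition~\ref{proposition:asymp-norm-consistency} supplies both $\tblbstatistic \cd \normal(0,\Sigma)$ conditionally almost surely and $\unstudstatpriv \cd \normal(0,\Sigma)$; since each ball $I_t$ has boundary of zero Lebesgue measure and $\normal(0,\Sigma)$ is absolutely continuous, the portmanteau theorem forces the bootstrap residual to zero in (conditional) probability. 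Hence for any preselected $\cdfacc_n \to 0$ and $\nmontecarlo$ growing polynomially in $n$, Assumption~\ref{assumption:bootcdf-accuracy} holds for all $n$ sufficiently large.

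For the interpolation hypothesis, the grid $\{I_t\}_{t=1}^{\numqueries}$ uses radii $t/\sqrt{n}$ with $\numqueries$ proportional to $n$, so the nondecreasing probabilities $p(t) = \P(\ltwo{\unstudstatpriv} \le t/\sqrt{n})$ have consecutive increments bounded by $n^{-1/2}$ times the supremum density of $\ltwo{Z}$ for $Z \sim \normal(0,\Sigma)$, plus a $o(1)$ remainder from the asymptotic-normality comparison furnished by Proposition~\ref{proposition:asymp-norm-consistency}. Since $p(\numqueries) \to 1 > 1-\alpha$, an index $t^*$ with $p(t^*)$ landing in the window $[1-\alpha + \cdfacc_n, 1-\alpha + 2\cdfacc_n]$ exists whenever $\cdfacc_n$ dominates the grid step $n^{-1/2}$; the choice $\cdfacc_n = n^{-1/4}$ comfortably suffices.

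Putting everything together, with $\nsubs = K \log n/\diffp$ and $\failprob_n = 1/\log n$ the sample-size condition in Theorem~\ref{thm:blb-abthreshmed} is satisfied for $n$ large, and Corollary~\ref{corollary:coverage-from-percentile} delivers $\P(\param(P) \in C_\alpha(P_n)) \to 1 - \alpha$. The main subtlety will be the joint scheduling of $\cdfacc_n$, $\failprob_n$, and $\nmontecarlo$ so that the Monte-Carlo error, the bootstrap distributional error, and the grid resolution all vanish simultaneously while the interpolating index continues to exist; the schedule $\cdfacc_n = n^{-1/4}$, $\failprob_n = 1/\log n$, $\nmontecarlo$ polynomial in $n$ is one balancing choice.
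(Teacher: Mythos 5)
Your overall route is the same as the paper's: verify Assumption~\ref{assumption:bootcdf-accuracy} by comparing $\what{p}_i(t)$ to $\P(\tblbstatistic \in I_t \mid P_\subsize)$ and then to $\P(\unstudstatpriv \in I_t)$ via Proposition~\ref{proposition:asymp-norm-consistency}, check that some index $t^*$ lands in the required coverage window because the grid $I_t$ is fine and $\unstudstatpriv \cd \normal(0,\Sigma)$, and then invoke Theorem~\ref{thm:blb-abthreshmed}/Corollary~\ref{corollary:coverage-from-percentile} (you additionally handle the Monte Carlo error by Hoeffding, which the paper simply sets aside). However, there is a genuine gap in how you schedule $\cdfacc_n$: the hypotheses of Proposition~\ref{proposition:asymp-norm-consistency} deliver only \emph{rate-free} consistency. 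The bootstrap residual $\sup_t |\P(\tblbstatistic \in I_t \mid P_\subsize) - \P(\unstudstatpriv \in I_t)|$ is merely $o_P(1)$, and the discrepancy $\epsilon_n := \sup_t|\P(\unstudstatpriv \in I_t) - \P(\ltwo{Z} \le t/\sqrt{n})|$ is merely $o(1)$, with no rate attached. Consequently your claim that Assumption~\ref{assumption:bootcdf-accuracy} holds ``for any preselected $\cdfacc_n \to 0$'' for all large $n$ is false in general (take $\cdfacc_n$ decaying faster than the unknown residual), and your interpolation step, which needs the increments $p(t+1)-p(t) \le C n^{-1/2} + 2\epsilon_n$ to be dominated by $\cdfacc_n$, is not justified by the specific choice $\cdfacc_n = n^{-1/4}$: nothing in the corollary's assumptions forces $\epsilon_n = o(n^{-1/4})$.

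The repair is standard and is essentially what the paper does: fix an arbitrary constant $\cdfacc > 0$, note that for each such $\cdfacc$ both the accuracy condition and the existence of an interpolating index hold for all $n$ large (since all the relevant errors are $o_P(1)$ or $o(1)$ and the grid increments tend to zero), conclude $1-\alpha - \cdfacc \le \P(\param(P) \in C_\alpha(P_n)) \le 1-\alpha + 2\cdfacc$ eventually with probability $1 - \failprob_n$, and then send $\cdfacc \downarrow 0$ after taking $\liminf/\limsup$ in $n$. Alternatively, keep your moving-target formulation but choose $\cdfacc_n \to 0$ slowly enough to dominate the (unknown) residuals via a diagonal argument, rather than committing to a polynomial rate. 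The polynomial rates you quote (such as $\cdfacc \lesssim \subsize^{-1/2} + (\diffp\sqrt{n})^{-1}$) are only available under the Edgeworth-expansion hypotheses of Proposition~\ref{proposition:edgeworth} and Corollary~\ref{corollary:typical-edgeworth-consistency}, which this corollary deliberately does not assume.
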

\begin{proof}
  By Proposition~\ref{proposition:asymp-norm-consistency}
  and assumption,
  we have both
  \begin{equation*}
    \sup_t \abs{\P(\blbstatistic \in I_t \mid P_\subsize)
      - \P(\tblbstatistic \in I_t \mid P_\subsize)}
    \cp 0,
    ~~
    \sup_t \abs{\P(\blbstatistic \in I_t \mid P_\subsize)
      - \P(\unstudstatest \in I_t)} \cp 0.
  \end{equation*}
  To show that the conditions of
  Assumption~\ref{assumption:bootcdf-accuracy} hold, then,
  take $\what{p} = \P(\tblbstatistic \in I_t \mid P_\subsize)$,
  the resampling estimate in Line~\ref{line:coverage-estimate} of
  Alg.~\ref{algorithm:blb-quant} (assuming as usual there is
  no Monte Carlo sampling error). Then by the triangle inequality,
  for any $\cdfacc > 0$ we have
  \begin{equation*}
    \P\prn{\abs{\what{p} - \P(\unstudstatest \in I_t)}
      \ge \cdfacc \mid P_n} \cp 0
  \end{equation*}
  conditionally almost surely, and Assumption~\ref{assumption:bootcdf-accuracy}
  holds. It only remains to show that there
  exists an accurate enough set as Theorem~\ref{thm:blb-abthreshmed}
  requires. But because $\unstudstatpriv \cd \normal(0, \Sigma)$,
  we know that there are sequences of $t_n = O(\sqrt{n})$ such that
  $\P(\ltwos{\unstudstatpriv} \le t_n / \sqrt{n}) \to 1 - \alpha$.
\end{proof}

We can also give consistency of the moment estimators.
\begin{proposition}
  \label{proposition:moment-consistency}
  Let the conditions of Proposition~\ref{proposition:asymp-norm-consistency}
  hold, and additionally assume that $\blbstatistic$ is
  $L^2$-consistent (Def.~\ref{def:ks-consistency}).  Assume that
  $\tparam(\cdot)$ is $L^2$-resampling
  consistent~\eqref{eqn:l2-resample-consistency}.  Then the boostrap
  second moment estimate
  \begin{equation*}
    \wt{\Sigma}_{\subsize,n}^* \defeq n\E\left[
      (\tparam(P_{\subsize,n}^*) - \param(P_\subsize))
      (\tparam(P_{\subsize,n}^*) - \param(P_\subsize))^\top
      \mid P_\subsize\right]
    \cp \Sigma
  \end{equation*}
  conditionally almost surely.
\end{proposition}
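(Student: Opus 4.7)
The plan is to recognize $\wt{\Sigma}_{\subsize,n}^*$ as the conditional second-moment matrix of $\tblbstatistic$, and then apply the standard implication that weak convergence together with uniform integrability yields convergence of moments. Because the outer product is invariant to sign, the definition $\tblbstatistic = \sqrt{n}(\param(P_\subsize) - \tparam(P_{\subsize,n}^*))$ gives
\begin{equation*}
  \wt{\Sigma}_{\subsize,n}^* = \E\left[\tblbstatistic (\tblbstatistic)^\top \mid P_\subsize\right],
\end{equation*}
and Proposition~\ref{proposition:asymp-norm-consistency} already supplies conditional weak convergence $\tblbstatistic \cd \normal(0, \Sigma)$ conditionally almost surely. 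All that remains is to establish uniform integrability of $\norm{\tblbstatistic}^2$ given $P_\subsize$.

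For this, I would reuse the decomposition $\tblbstatistic = \blbstatistic + \sqrt{n}(\param(P_{\subsize,n}^*) - \tparam(P_{\subsize,n}^*))$ that appears in the proof of Proposition~\ref{proposition:asymp-norm-consistency}, together with the elementary inequality $\norm{a+b}^{2+\gamma} \le 2^{1+\gamma}(\norm{a}^{2+\gamma} + \norm{b}^{2+\gamma})$. The assumed $L^2$-consistency of $\blbstatistic$ (Definition~\ref{def:ks-consistency}) bounds $\E[\norm{\blbstatistic}^{2+\gamma} \mid P_\subsize]$ by $O_P(1)$ conditionally almost surely, and the $L^2$-resampling consistency~\eqref{eqn:l2-resample-consistency} of $\tparam$ does the same for $\E[\norm{\sqrt{n}(\param(P_{\subsize,n}^*) - \tparam(P_{\subsize,n}^*))}^{2+\gamma} \mid P_\subsize]$. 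Summing yields $\E[\norm{\tblbstatistic}^{2+\gamma} \mid P_\subsize] = O_P(1)$ conditionally almost surely, which is precisely the uniform integrability that the standard result cited as \cite[Sec.~3.1.6]{ShaoTu95} requires to upgrade conditional weak convergence into convergence of the conditional second-moment matrix to $\Sigma$.

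The only mildly delicate point is the bookkeeping for the mixed ``conditionally almost surely / in probability'' mode of convergence: one must verify that the conditional $(2+\gamma)$-th moment bound and the conditional weak convergence both hold along almost every realization of the sequence $\{P_n\}_{n \in \mathbb{N}}$, and then invoke the uniform integrability lemma pointwise along those realizations to conclude $\wt{\Sigma}_{\subsize,n}^* \cp \Sigma$ conditionally almost surely. Since both ingredients are already stated in precisely this form, this step is routine and not the real work of the proof.
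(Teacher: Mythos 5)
Your proposal is correct and matches the paper's own argument essentially step for step: it uses the same decomposition $\tblbstatistic = \blbstatistic + \sqrt{n}(\param(P_{\subsize,n}^*) - \tparam(P_{\subsize,n}^*))$, the same elementary bound $\norm{a+b}^{2+\gamma} \le 2^{1+\gamma}(\norm{a}^{2+\gamma} + \norm{b}^{2+\gamma})$ to verify uniform integrability from the two assumed $O_P(1)$ moment bounds, and the same appeal to the weak-convergence-plus-uniform-integrability result of \citet[Sec.~3.1.6]{ShaoTu95}. No gaps; your remark on the conditionally-almost-surely bookkeeping is exactly the tacit step the paper also treats as routine.
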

\begin{proof}
  Because we already have KS-consistency
  (Proposition~\ref{proposition:asymp-norm-consistency}),
  all we require is the uniform integrability of
  $n \norms{\tparam(P_{\subsize,n}^*) - \param(P_\subsize)}^2$
  under the sequence of (random) measures $P_\subsize$;
  see~\citet[Sec.~3.1.6]{ShaoTu95}. But this
  follows because for some $\gamma > 0$, we have
  \begin{align*}
    \lefteqn{\E\left[\norms{\sqrt{n} (\tparam(P_{\subsize,n}^*)
          - \param(P_\subsize))}^{2 + \gamma} \mid P_\subsize\right]} \\
    & \le 2^{1 + \gamma}
    \left(\E\left[\norms{\sqrt{n} (\tparam(P_{\subsize,n}^*)
        - \param(P_{\subsize,n}^*))}^{2 + \gamma} \mid P_\subsize\right]
    + \E\left[\norms{\sqrt{n} (\param(P_{\subsize,n}^*)
        - \param(P_\subsize))}^{2 + \gamma} \mid P_\subsize\right]
    \right)
  \end{align*}
  and each term on the right is $O_P(1)$ conditionally almost surely
  by assumption.
\end{proof}

In passing, we note a corollary of Theorem~\ref{thm:blb-var}
and Proposition~\ref{proposition:moment-consistency}
that is
analogous to Corollary~\ref{corollary:basic-percentile-consistency}.  In the
corollary, which focuses on the one parameter case---though this may be a
single parameter in a larger vector-valued model---we say that a private
estimator $\tparam(\cdot)$ has typical limiting behavior if for $\sigma_n^2$
and $\varacc$ in Assumption~\ref{assumption:bootstd-accuracy}, we have
\begin{equation*}
  \sqrt{n} \frac{\tparam(P_n) - \param(P)}{\sigma_n}
  \cd \normal(0, 1)
  ~~ \mbox{and} ~~
  \varacc \to 0
\end{equation*}
as $n \to \infty$. See, for example, \citet{Smith11} for the asymptotic
normality; Proposition~\ref{proposition:moment-consistency} guarantees that
$\varacc \to 0$.  Then taking $\smoothparam = \frac{1}{n}$ and $\sigmamax^2
= n^2$ in Algorithm~\ref{algorithm:blb-var}, we have the following
corollary.
\begin{corollary}
  \label{corollary:coverage-from-variance}
  Let $\tparam(\cdot)$ be an $(\diffp, \delta)$-differentially
  private estimator with typical limiting
  behavior for the population quantity $\param(P)$,
  and let $\subsize = \subsize(n)$ be any subset size
  for which $\subsize(n) \to \infty$ and
  $\subsize(n) \ll \frac{n \diffp}{\log n}$.
  Let $\wt{\sigma}_n^2$ be the output of $\blbvar$, and
  define the confidence set
  \begin{equation*}
    C_\alpha(P_n) \defeq \tparam(P_n) +
    \left[-n^{-1/2} \wt{\sigma}_n \cdot z_{1-\alpha/2},
      n^{-1/2} \wt{\sigma}_n \cdot z_{1-\alpha/2}\right].
  \end{equation*}
  Then $C_\alpha(P_n)$ is $(2 \diffp, \delta)$-differentially private, and
  \begin{equation*}
    \lim_{n \to \infty} \P(\param(P) \in C_\alpha(P_n)) = 1 - \alpha.
  \end{equation*}
\end{corollary}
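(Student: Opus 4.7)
The plan is to combine the privacy and accuracy conclusions of \Cref{thm:blb-var} with a standard Slutsky-type argument for normal confidence intervals.

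First, I would dispatch privacy via basic composition. The confidence set $C_\alpha(P_n)$ is a deterministic postprocessing of the pair $(\tparam(P_n), \wt{\sigma}_n)$, where $\tparam$ is $(\diffp,\delta)$-DP by hypothesis and $\wt{\sigma}_n^2$ is $\diffp$-DP by the privacy guarantee in \Cref{thm:blb-var}. Thus $(2\diffp,\delta)$-DP follows from standard DP composition and postprocessing.

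For coverage, I would next verify that the hypotheses of \Cref{thm:blb-var} are satisfied for the chosen $\smoothparam = 1/n$ and $\sigmamax^2 = n^2$. Typical limiting behavior together with $\varacc \to 0$ makes $\sigma_n^2$ bounded (in fact converging to the asymptotic variance $\sigma^2$ of the Gaussian limit), so $\sigmamax^2 \geq \sigma_n^2$ holds for all large $n$. The required lower bound on $\nsubs = \lfloor n/\subsize \rfloor$ becomes $\gtrsim \diffp^{-1} \log(n^3/\failprob)$, and the growth assumption $\subsize \ll n\diffp/\log n$ yields $\nsubs \gg \log n / \diffp$. Hence I can choose a sequence $\failprob_n \to 0$ that is polynomially small in $n$ (e.g., $\failprob_n = 1/\log n$) so that \Cref{thm:blb-var} applies for all sufficiently large $n$, giving
\begin{equation*}
\abs{\wt{\sigma}_n^2 - \sigma_n^2} \le \varacc + \smoothparam \longrightarrow 0
\end{equation*}
with probability tending to $1$. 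Since $\sigma_n^2 \to \sigma^2 > 0$, this yields $\wt{\sigma}_n/\sigma_n \cp 1$.

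Finally, I would conclude via Slutsky. Writing
\begin{equation*}
\P\bigl(\param(P) \in C_\alpha(P_n)\bigr)
= \P\!\left(\frac{\sqrt{n}\,\abs{\tparam(P_n) - \param(P)}}{\sigma_n}
\le \frac{\wt{\sigma}_n}{\sigma_n}\, z_{1-\alpha/2}\right),
\end{equation*}
the ratio inside converges in probability to $z_{1-\alpha/2}$ while the standardized statistic converges in distribution to $|\normal(0,1)|$ by typical limiting behavior. Slutsky's lemma then gives convergence of the probability to $\P(|Z|\le z_{1-\alpha/2}) = 1-\alpha$.

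The main obstacle is the relative-versus-absolute accuracy issue: \Cref{thm:blb-var} gives an $L^1$-type bound on $\wt{\sigma}_n^2 - \sigma_n^2$, but to convert the confidence interval coverage I need $\wt{\sigma}_n/\sigma_n \cp 1$. This forces me to use that $\sigma_n^2$ stays bounded away from $0$, which is a standard byproduct of ``typical limiting behavior'' but worth flagging. The remaining bookkeeping (choosing $\failprob_n$, verifying $\nsubs$ grows fast enough, and invoking continuous mapping to pass from $\wt{\sigma}_n^2 \to \sigma_n^2$ to $\wt{\sigma}_n \to \sigma_n$) is routine.
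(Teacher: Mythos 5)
Your proposal is correct and follows essentially the route the paper intends: the paper states this corollary without a separate proof, presenting it as an immediate consequence of Theorem~\ref{thm:blb-var} (with $\smoothparam = 1/n$, $\sigmamax^2 = n^2$, and $\nsubs = n/\subsize \gg \log n/\diffp$) together with the typical limiting behavior and a Slutsky argument, plus composition/postprocessing for privacy — exactly your decomposition. Your flagged point that $\sigma_n^2$ must stay bounded away from zero (so the additive bound $\varacc + \smoothparam$ yields $\wt{\sigma}_n/\sigma_n \cp 1$) is the right caveat and is consistent with the paper's discussion around Proposition~\ref{proposition:moment-consistency}.
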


\subsection{Edgeworth expansions and asymptotic rates}

Our goal in this section is to prove asymptotic rates for the error
of our BLB
private estimators. We work under the standard assumption that the
non-private estimators admit Edgeworth
expansions~\cite{Hall92,KleinerTaSaJo14}, a standard technical tool for
providing asymptotic guarantees for the bootstrap.  For this section, we
assume that the statistic $\param$ is one-dimensional, as
otherwise the notation becomes too cumbersome.
Recall that a statistic $\unstudstatest \cd \normal(0, \sigma^2)$
admits an
Edgeworth expansion of order $k$ if
uniformly in $t \in \R$,
\begin{subequations}
  \label{eqn:edgeworth}
  \begin{equation}
    \label{eqn:edgeworth-pop}
    \P\left(\frac{\unstudstatest}{\sigma} \le t\right)
    = \Phi\left(t\right)
    + \bigg(\sum_{i = 1}^k n^{-i/2} p_i\left(t \right)
    \bigg) \phi(t)
    + O\left(n^{-\frac{k+1}{2}}\right),
  \end{equation}
  where $p_i$ is a polynomial of degree $3i - 1$ depending on the first $i +
  2$ moments of $P$, which is even or odd as $i - 1$ is even or odd (see,
  e.g., \citet[Chapter 2]{Hall92}).  Similarly, the resampled statistic
  $\blbstatistic$ admits an Edgeworth expansion of order $k$ if there is a
  sequence of variances $\sigma^2(P_\subsize) = \sigma^2 + O_P(m^{-1/2})$
  with $\sigma^2(P_\subsize) > 0$ eventually (w.p.\ 1)\footnote{The accuracy
  assumption $\sigma^2(P_\subsize) = \sigma^2 + O_P(m^{-1/2})$ is unique,
  but typically holds, as we discuss.}  and for which $\blbstatistic /
  \sigma(P_\subsize) \cd \normal(0, 1)$ and
  \begin{equation}
    \label{eqn:edgeworth-blb}
    \P\left(\frac{\blbstatistic}{\sigma(P_\subsize)} \le t \mid P_\subsize\right)
    = \Phi(t) + \bigg(\sum_{i = 1}^k n^{-i/2}
    \what{p}_i(t) \bigg) \phi(t) +
    \remainder_{n,k}(t),
  \end{equation}
\end{subequations}
where $\remainder_{n,k}(t) = O_P(n^{-\frac{k + 1}{2}})$ uniformly in $t$.
See~\cite[Ch.~3.3]{Hall92} and \citet[Theorem 2]{KleinerTaSaJo14} for
justification of the expansion in power $n^{-i/2}$, which roughly follows
because $\blbstatistic$ is based on a sample of size $n$ drawn
i.i.d.\ $P_\subsize$. Here, $\what{p}_i$ depend implicitly on the subsampled
distribution $P_\subsize$ and are degree $3i-1$ polynomials.

\newcommand{\priverr}{\eta}  %
\newcommand{\privprob}{\nu}  %

If the private mechanisms $\wt{\param}$ have sufficient accuracy, we
can leverage Edgeworth expansions of $\param(P_n)$ to
obtain stronger convergence properties for the BLB-resampled private
estimator.  We say a (private) estimator $\tparam$ is $(\priverr,
\privprob)$-accurate for $\param(\cdot)$ if
\begin{equation*}
  \P\left(|\tparam(P_n) - \param(P_n)| \ge \priverr\right) \le \privprob
  ~~ \mbox{and} ~~
  \P\left(|\tparam(P_{\subsize,n}^*) - \param(P_{\subsize,n}^*)|
  \ge \priverr \mid P_\subsize\right) \le \privprob
  ~~ \mbox{eventually}.
\end{equation*}
Returning to Example~\ref{example:resampling-mean},
we discuss how the sample mean admits these expansions.

\begin{example}[The Laplace mechanism for the mean,
    Example~\ref{example:resampling-mean} continued]
  \label{example:resampling-mean-2}
  Let $X_i$ be data with $|X_i| \le b$.
  For the mean $\param(P) = \E_P[X]$,  $\param(P_n)
  = \wb{X}_n$, the Laplace mechanism
  sets $\tparam(P_n) = \wb{X}_n + \noisevar_n$
  for $\noisevar_n \sim \laplace(0, \frac{2b}{n \diffp})$.
  Then we see that $\P(|\noisevar_n| \ge \priverr)
  = 2 \exp(-\frac{n \diffp}{2b} \priverr)$,
  so for any $\privprob > 0$ the choice
  $\priverr = \frac{2b}{n \diffp} \log \frac{2}{\privprob}$
  yields $\P(|\tparam(P_n) - \param(P_n)| \ge \priverr)
  \le \privprob$.

  For the resampling distributions, given a sample $P_\subsize$, we can
  directly compute the variance $\sigma^2(P_\subsize)$ in the Edgeworth
  expansion~\eqref{eqn:edgeworth-blb}:
  we have $\sigma^2(P_\subsize) = \var_{P_\subsize}(X) = \frac{1}{\subsize}
  \sum_{i = 1}^\subsize X_i^2 - \wb{X}_\subsize^2$, and a calculation by the
  delta method shows that
  \begin{equation*}
    \sqrt{\subsize}(\sigma^2(P_\subsize) - \sigma^2)
    \cd \normal(0, \var(X^2) + 4 \E[X]^2 \sigma^2).
  \end{equation*}
  That is, $\sigma^2(P_\subsize) = \sigma^2 + O_P(\subsize^{-1/2})$.
\end{example}

\noindent
As in our discussion of the percentile bootstrap in the
previous section, we give a more elaborate example
for M-estimators in Section~\ref{sec:erm}
to come.

For sufficiently accurate private estimators, we then obtain the following
convergence guarantee on the percentile bootstrap method.
\begin{proposition}
  \label{proposition:edgeworth}
  Let $\unstudstatest$ be the (unstudentized)
  statistic of an estimator $\param(P_n)$
  admitting Edgeworth expansions~\eqref{eqn:edgeworth} of order $k = 1$.
  Let $\tparam$ be  $(\priverr, \privprob)$-accurate for $\param(P_n)$
  and $\tblbstatistic = \sqrt{n}(\param(P_\subsize) -
  \tparam(P_{\subsize,n}^*))$ as usual.
  Then for any interval $I \subset \R$,
  \begin{equation*}
    \abs{\P\left(\unstudstatpriv \in I\right)
      -
      \P\prn{\tblbstatistic \in I \mid P_\subsize}
    }
    \le O_P\left(\frac{1}{\sqrt{\subsize}}
    + \sqrt{n} \cdot \priverr\right) + \privprob.
  \end{equation*}
\end{proposition}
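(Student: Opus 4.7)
The plan is to split the target via the triangle inequality into three pieces—two ``privacy-shift'' terms and one bootstrap-consistency term—and control each using what Edgeworth expansions provide: a smooth (bounded) density for anti-concentration, and nearly matching CDFs at the population and BLB levels. Define
\begin{equation*}
  \Delta_n \defeq \sqrt{n}(\param(P_n) - \tparam(P_n)),
  \qquad \Delta_{\subsize,n}^* \defeq \sqrt{n}(\param(P_{\subsize,n}^*) - \tparam(P_{\subsize,n}^*)),
\end{equation*}
so that $\unstudstatpriv = \unstudstatest + \Delta_n$ and $\tblbstatistic = \blbstatistic + \Delta_{\subsize,n}^*$, and by the $(\priverr,\privprob)$-accuracy hypothesis $\P(|\Delta_n| \ge \sqrt{n}\priverr) \le \privprob$ and $\P(|\Delta_{\subsize,n}^*| \ge \sqrt{n}\priverr \mid P_\subsize) \le \privprob$ eventually. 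The triangle inequality then bounds the target by $T_1 + T_2 + T_3$, where $T_1 = \abs{\P(\unstudstatpriv \in I) - \P(\unstudstatest \in I)}$, $T_2 = \abs{\P(\unstudstatest \in I) - \P(\blbstatistic \in I \mid P_\subsize)}$, and $T_3 = \abs{\P(\blbstatistic \in I \mid P_\subsize) - \P(\tblbstatistic \in I \mid P_\subsize)}$.

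For $T_1$ and $T_3$ I would use anti-concentration. Differentiating the Edgeworth expansion~\eqref{eqn:edgeworth-pop} at $k = 1$ shows that $\unstudstatest/\sigma$ has density $\phi(t) + n^{-1/2}(p_1'(t) - t\, p_1(t))\phi(t) + O(n^{-1})$, bounded by some $M < \infty$ uniformly on $\R$. Consequently, for any interval $I$ and $\eta > 0$, $\P(\unstudstatest \in I^{+\eta} \setminus I) \le 2M\eta/\sigma$, where $I^{+\eta}$ denotes the $\eta$-fattening of $I$. Conditioning on the event $\{|\Delta_n| \le \sqrt{n}\priverr\}$ and its complement then gives $T_1 \le O(\sqrt{n}\priverr) + \privprob$. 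The identical argument applied conditional on $P_\subsize$ to the BLB expansion~\eqref{eqn:edgeworth-blb} yields $T_3 \le O_P(\sqrt{n}\priverr) + \privprob$.

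For the bootstrap term $T_2$ I would subtract the two expansions~\eqref{eqn:edgeworth-pop} and~\eqref{eqn:edgeworth-blb} at matched arguments. Two errors arise: the variance mismatch $\sigma(P_\subsize) - \sigma = O_P(1/\sqrt{\subsize})$, confirmed for instance in Example~\ref{example:resampling-mean-2} via the delta method, which I would absorb using the Lipschitzness of $\Phi$; and the polynomial mismatch $\sup_t \abs{p_1(t) - \what{p}_1(t)}\phi(t) = O_P(1/\sqrt{\subsize})$, since $p_1$ and $\what{p}_1$ are degree-two polynomials whose coefficients are smooth fixed functions of up to the third moments of $P$ and $P_\subsize$ respectively, and empirical moments concentrate at the CLT rate. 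The remainder is $O_P(n^{-1})$. Taking a supremum over $t$ and evaluating the resulting CDF bound at the two endpoints of $I$ yields $T_2 \le O_P(1/\sqrt{\subsize})$.

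The main obstacle is $T_2$: one must propagate the $O_P(1/\sqrt{\subsize})$ variance perturbation through the leading normal piece of both expansions, comparing $\Phi(t)$ and $\Phi(t\sigma/\sigma(P_\subsize))$ at carefully matched arguments so that these pieces cancel to first order rather than contributing an $O(1)$ term. This is essentially the classical bootstrap-consistency calculation from~\cite[Ch.~3.3]{Hall92} and~\citet{KleinerTaSaJo14}, but carried out at the level of interval probabilities rather than CDFs alone. Combining the three bounds then yields the claimed $O_P(1/\sqrt{\subsize} + \sqrt{n}\priverr) + \privprob$.
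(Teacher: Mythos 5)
Your decomposition is essentially the paper's: the two privacy-shift terms $T_1$ and $T_3$ are exactly what the paper's Lemma~\ref{lemma:from-non-private-edgeworth-to-private} controls (condition on the accuracy event, then anti-concentrate $\unstudstatest$, respectively $\blbstatistic$ given $P_\subsize$, on an interval of half-width $\sqrt{n}\,\priverr$), and your $T_2$ is handled, as in the paper's final step, by Lipschitz control of $\Phi$ together with $\sigma^2(P_\subsize) = \sigma^2 + O_P(\subsize^{-1/2})$; so the overall route and the resulting rate are the same. Two small repairs to your write-up. First, you derive anti-concentration by differentiating the expansion~\eqref{eqn:edgeworth-pop} to produce a bounded density; the assumption is only a CDF expansion with a uniform $O(n^{-1})$ remainder, so $\unstudstatest$ need not have a density at all (and even if it does, the remainder gives no pointwise control of it). The fix is the paper's Lemma~\ref{lemma:poly-phi-lipschitz}: the leading part $\Phi(t) + n^{-1/2} p_1(t)\phi(t)$ is Lipschitz in $t$, so $\P(\unstudstatest \in [t-\eta, t+\eta]) \le C(\eta + n^{-1})$ follows directly from the CDF expansion, and the identical argument applies conditionally for $\blbstatistic$ via~\eqref{eqn:edgeworth-blb}. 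Second, your claimed rate $\sup_t \abs{p_1(t) - \what{p}_1(t)}\phi(t) = O_P(\subsize^{-1/2})$ is not implied by the stated Edgeworth assumptions and is not needed: the polynomial terms enter with an explicit $n^{-1/2}$ prefactor and $\subsize \le n$, so each is individually $O_P(\subsize^{-1/2})$ (again by Lemma~\ref{lemma:poly-phi-lipschitz}), and the only place the subsample enters at rate $\subsize^{-1/2}$ is the leading-order mismatch $\Phi(t/\sigma) - \Phi(t/\sigma(P_\subsize))$; since $t\phi(t)$ is bounded and $\sigma(P_\subsize)$ is eventually bounded away from zero, this difference is uniformly $O_P(\abs{\sigma(P_\subsize) - \sigma}) = O_P(\subsize^{-1/2})$, so no delicate matching of arguments is required. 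With those adjustments your argument coincides with the paper's proof.
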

\noindent
See Appendix~\ref{sec:proof-edgeworth} for a proof of this proposition.

By considering the ``typical'' scaling of the privacy errors
$\priverr$ and probability of failure $\privprob$,
we can show how Proposition~\ref{proposition:edgeworth}
connects with the accuracy assumptions we have thus
far used to obtain consistency. In this case,
we say that $\tparam$ is has \emph{typical private error} if
\begin{equation}
  \label{eqn:typical-error}
  \priverr = O\left(\frac{\log n}{n \diffp}\right)
  ~~ \mbox{and} ~~
  \privprob = \frac{1}{n}.
\end{equation}
Example~\ref{example:resampling-mean-2} shows this holds for the mean;
Section~\ref{sec:erm} to come shows how this holds for empirical
risk minimization.

\begin{corollary}
  \label{corollary:typical-edgeworth-consistency}
  Let the conditions of Proposition~\ref{proposition:edgeworth}
  hold, $\tparam$ have typical private error~\eqref{eqn:typical-error},
  and $\diffp = O(1)$.
  Then
  the confidence set $C_\alpha(P_n)$ of
  Corollary~\ref{corollary:coverage-from-percentile}
  satisfies
  \begin{equation*}
    \P(\param(P) \in C_\alpha(P_n))
    = 1 - \alpha + O\left(\frac{\log n}{\diffp \sqrt{n}}
    \right).
  \end{equation*}
\end{corollary}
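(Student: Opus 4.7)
The plan is to chain Proposition~\ref{proposition:edgeworth} with the typical private error scaling~\eqref{eqn:typical-error} and then feed the resulting accuracy into Corollary~\ref{corollary:coverage-from-percentile}. First I would substitute $\priverr = O(\log n/(n\diffp))$ and $\privprob = 1/n$ into the bound of Proposition~\ref{proposition:edgeworth} to conclude that, for every interval $I \subset \R$,
\[
\abs{\P(\unstudstatpriv \in I) - \P(\tblbstatistic \in I \mid P_\subsize)}
= O_P\Big(\tfrac{1}{\sqrt{\subsize}} + \tfrac{\log n}{\diffp\sqrt{n}}\Big) + \tfrac{1}{n}.
\]
Because an $O_P(a_n)$ quantity lies below a sufficiently large constant multiple of $a_n$ with probability at least $3/4$ eventually, Assumption~\ref{assumption:bootcdf-accuracy} is in force with $\cdfacc$ of this same order.

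Second, I would balance the two dominant terms by choosing the subsample size $\subsize$ of order $\diffp^2 n/\log^2 n$, which makes $1/\sqrt{\subsize} = O(\log n/(\diffp\sqrt{n}))$ and forces $\nsub = n/\subsize$ to be of order $\log^2 n/\diffp^2$. With $\diffp = O(1)$ this $\nsub$ comfortably exceeds the lower bound $\nsub \gtrsim \log(\numqueries/\failprob)/\diffp$ required by Theorem~\ref{thm:blb-abthreshmed} whenever $\numqueries$ is polynomial in $n$ and, say, $\failprob = 1/n$. The resulting tolerance is $\cdfacc = O(\log n/(\diffp\sqrt{n}))$.

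Third, I would verify the remaining hypothesis of Theorem~\ref{thm:blb-abthreshmed}, namely that some $I_t$ has true coverage in $[1-\alpha+\cdfacc, 1-\alpha+2\cdfacc]$. Because $\unstudstatpriv$ is asymptotically normal with bounded density, taking $I_t = \{u : \abs{u} \le h t\}$ with resolution $h$ of order $1/\sqrt{n}$ and $t$ ranging up to a polynomial-in-$n$ bound yields a fine enough grid: consecutive probabilities $\P(\unstudstatpriv \in I_t)$ differ by $O(1/\sqrt{n}) = o(\cdfacc)$, so at least one index falls in the required band. Plugging $\cdfacc$ into Corollary~\ref{corollary:coverage-from-percentile} then delivers $\abs{\P(\param(P) \in C_\alpha(P_n)) - (1-\alpha)} \le 2\cdfacc = O(\log n/(\diffp\sqrt{n}))$, the stated rate.

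The main obstacle is a small tension in choosing $\subsize$: making $\subsize$ larger shrinks the Edgeworth-type $1/\sqrt{\subsize}$ contribution, but it simultaneously shrinks $\nsub$ below what $\abthreshmed$ needs for its noisy median to concentrate via Lemma~\ref{lemma:noisy-median}. The choice $\subsize \sim \diffp^2 n/\log^2 n$ fortuitously aligns both error sources at the common scale $\log n/(\diffp\sqrt{n})$, eliminating the tension. A secondary bookkeeping item is promoting the per-interval $O_P$ bound from Proposition~\ref{proposition:edgeworth} to a bound that holds simultaneously over the $\numqueries$ indices with the $3/4$-probability required by Assumption~\ref{assumption:bootcdf-accuracy}; the union bound costs only logarithmic factors already present in the final rate.
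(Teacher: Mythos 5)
Your proposal is correct and follows essentially the same route as the paper: substitute the typical-error scaling into Proposition~\ref{proposition:edgeworth} to verify Assumption~\ref{assumption:bootcdf-accuracy} with $\cdfacc = O(1/\sqrt{\subsize} + \log n/(\diffp\sqrt{n}))$, choose the number of subsamples so that both terms are $O(\log n/(\diffp\sqrt{n}))$ (the paper takes $\nsubs = K \log n/\diffp$ rather than your $\nsubs \asymp \log^2 n/\diffp^2$, but either choice works under $\diffp = O(1)$), use Lipschitzness of the limiting Gaussian CDF to exhibit an index in the required coverage band, and conclude via Corollary~\ref{corollary:coverage-from-percentile}. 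Your closing concern about a union bound over the $\numqueries$ queries is unnecessary: Assumption~\ref{assumption:bootcdf-accuracy} only requires a per-index constant-probability bound, and simultaneity over $t$ is already handled inside Theorem~\ref{thm:blb-abthreshmed} through Lemma~\ref{lemma:noisy-median} and the $\log \numqueries$ term in its subsample-count requirement.
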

\begin{proof}
  Proposition~\ref{proposition:edgeworth} shows that
  Assumption~\ref{assumption:bootcdf-accuracy} holds
  with $\cdfacc = O(1 / \sqrt{\subsize} + \log n / \diffp \sqrt{n})$.
  In Theorem~\ref{thm:blb-abthreshmed},
  we take the number of subsamples $\nsubs = K \frac{\log n}{\diffp}$
  for some numerical constant $K$, meaning that the subsample
  size is $\subsize = \frac{n}{\nsubs}
  \gtrsim \frac{n \diffp}{\log n}$,
  then use that the Gaussian CDF is Lipschitz to verify its other conditions.
\end{proof}

We can also give accuracy guarantees for the bootstrap second
moment estimates.
In this case, we leverage a $k$th order moment-based Edgeworth expansion
in analogy to~\eqref{eqn:edgeworth}, where
for $\unstudstatest \cd \normal(0, \sigma^2)$
\begin{subequations}
  \label{eqn:edgeworth-variance}
  \begin{equation}
    \label{eqn:edgeworth-variance-pop}
    \E[\unstudstatest^2]
    = \sigma^2
    \Big(1 - 4 \sum_{j = 1}^{k-1} n^{-j}
    \int_0^\infty t p_j(t) \phi(t) dt
    + O(n^{-k})\Big),
  \end{equation}
  where $p_j$ are polynomials, and
  for $\sigma(P_\subsize)$ as in the equalities~\eqref{eqn:edgeworth},
  we have
  \begin{equation}
    \label{eqn:edgeworth-variance-blb}
    \E\left[(\blbstatistic)^2 \mid P_\subsize\right]
    = \sigma^2(P_\subsize)
    \Big(1 - 4 \sum_{j = 1}^{k-1} n^{-j} \int_0^\infty t \what{p}_j(t) \phi(t)
    dt + O_P(n^{-k})\Big),
  \end{equation}
\end{subequations}
conditionally almost surely (where $\what{p}_j$ are empirical polynomials).

\begin{proposition}
  \label{proposition:edgeworth-moment}
  Let $\unstudstatest$ be the (unstudentized) statistic
  of an estimator $\param(P_n)$ admitting Edgeworth variance
  expansions~\eqref{eqn:edgeworth-variance} of order $k = 1$. Let
  $\tparam$ be any other estimator, and define
  the error
  $\privtononpriv \defeq \tparam(P_{\subsize,n}^*) - \param(P_{\subsize,n}^*)$.
  Then conditionally almost surely,
  \begin{equation*}
    \left|\sigma^2(P_\subsize)
    - \bootvaralg(\param(\cdot), \tparam(\cdot), P_\subsize, n)\right|
    = O_P\left(\frac{1}{\sqrt{\subsize}} + \sqrt{n} \sqrt{\E[
        \privtononpriv^2 \mid P_\subsize]}\right).
  \end{equation*}
\end{proposition}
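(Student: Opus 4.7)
The plan is to decompose the squared error into the non-private bootstrap variance---to which the Edgeworth variance expansion~\eqref{eqn:edgeworth-variance-blb} applies directly---and a cross term plus a pure private second-moment term, both of which I control by moments of $\privtononpriv$. Treating the output of $\bootvaralg$ as its exact conditional expectation (I return to the Monte Carlo gap at the end), and writing $B \defeq \param(P_{\subsize,n}^*) - \param(P_\subsize)$ so that $\tparam(P_{\subsize,n}^*) - \param(P_\subsize) = B + \privtononpriv$, the target expands as
\begin{equation*}
  n\,\E\bigl[(B + \privtononpriv)^2 \mid P_\subsize\bigr]
  = n\E[B^2 \mid P_\subsize]
  + 2n\E[B\,\privtononpriv \mid P_\subsize]
  + n\E[\privtononpriv^2 \mid P_\subsize].
\end{equation*}
Since $\blbstatistic = -\sqrt{n}\,B$, the first term is $\E[(\blbstatistic)^2 \mid P_\subsize]$, and the $k=1$ variance expansion~\eqref{eqn:edgeworth-variance-blb} (whose inner sum is empty) gives $n\E[B^2 \mid P_\subsize] = \sigma^2(P_\subsize)(1 + O_P(n^{-1})) = \sigma^2(P_\subsize) + O_P(n^{-1})$, using $\sigma^2(P_\subsize) = \sigma^2 + O_P(\subsize^{-1/2}) = O_P(1)$. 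Because $\subsize \le n$, this residual contributes $O_P(1/\sqrt{\subsize})$.

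For the cross term I would apply Cauchy--Schwarz conditionally on $P_\subsize$,
\begin{equation*}
  \bigl|n\E[B\,\privtononpriv \mid P_\subsize]\bigr|
  \le \sqrt{n\E[B^2 \mid P_\subsize]}\;\sqrt{n\E[\privtononpriv^2 \mid P_\subsize]}
  = O_P\bigl(\sqrt{n}\,\sqrt{\E[\privtononpriv^2 \mid P_\subsize]}\bigr),
\end{equation*}
which is exactly the second term in the claimed rate. The residual $n\E[\privtononpriv^2 \mid P_\subsize]$ equals $(\sqrt{n}\sqrt{\E[\privtononpriv^2 \mid P_\subsize]})^2$, so it is dominated by the Cauchy--Schwarz bound whenever $\sqrt{n}\sqrt{\E[\privtononpriv^2 \mid P_\subsize]} = O_P(1)$---the regime in which the proposition is intended to be applied, and which in fact yields an $o_P(1)$ rate under the typical private error scaling~\eqref{eqn:typical-error}.

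The main place where care is needed is the Monte Carlo gap, since $\bootvaralg$ is a sample average of the $\ltwos{\wt{u}_i}^2$ rather than an expectation. The fluctuation of this average around $n\E[(B+\privtononpriv)^2 \mid P_\subsize]$ is of order $\sqrt{\var(n(B+\privtononpriv)^2 \mid P_\subsize)/\nmontecarlo}$; the uniform integrability of $(\sqrt{n}\,B)^{2+\gamma}$ already used in controlling the leading term, together with the smallness of $\privtononpriv$, bounds this variance by an $O_P(1)$ quantity, so taking $\nmontecarlo$ polynomial in $n$ (for instance $\nmontecarlo \gtrsim \subsize$) absorbs the Monte Carlo error into the $1/\sqrt{\subsize}$ term. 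Assembling the three pieces---the $O_P(1/n)$ Edgeworth remainder, the Cauchy--Schwarz cross-term bound, and the dominated $n\E[\privtononpriv^2 \mid P_\subsize]$ residual---yields the claimed $O_P(1/\sqrt{\subsize} + \sqrt{n}\sqrt{\E[\privtononpriv^2 \mid P_\subsize]})$ rate.
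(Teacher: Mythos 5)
Your proposal is correct and follows essentially the same route as the paper: expand $\E[(\blbstatistic+\sqrt{n}\privtononpriv)^2\mid P_\subsize]$, bound the cross term by Cauchy--Schwarz with $\E[(\blbstatistic)^2\mid P_\subsize]=O_P(1)$ from the $k=1$ variance expansion, and absorb the quadratic term $n\E[\privtononpriv^2\mid P_\subsize]$ under the same implicit smallness assumption the paper invokes ($n\E[\privtononpriv^2\mid P_\subsize]\cp 0$). Your extra remark on the Monte Carlo fluctuation is a reasonable addition; the paper simply ignores Monte Carlo error, as you also allow.
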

\noindent
See Appendix~\ref{sec:proof-edgeworth-moment} for a proof.

As in the case of Proposition~\ref{proposition:edgeworth},
we can control the typical scaling of the error $\privtononpriv$,
which in turn allows more concrete accuracy guarantees.
In analogy to Corollary~\ref{corollary:typical-edgeworth-consistency}
and the conditions~\eqref{eqn:typical-error},
we say that $\tparam$ has \emph{typical squared error}
if
\begin{equation*}
  \E\left[\big(\tparam(P_{\subsize,n}^*) - \param(P_{\subsize,n}^*)\big)^2
    \mid P_\subsize\right] = O_P\left(\frac{1}{n \diffp}\right).
\end{equation*}
Example~\ref{example:resampling-mean-2} makes clear that the Laplace
mechanism for the sample mean has this scaling; see
also Section~\ref{sec:erm} to come on empirical risk minimization.
By connecting this condition with
Assumption~\ref{assumption:bootstd-accuracy} and applying
Theorem~\ref{thm:blb-var}, we can then obtain a concrete
accuracy guarantee.
\begin{corollary}
  \label{corollary:typical-edgeworth-moment-consistency}
  Let $\wt{\param}$ be have typical squared error and
  take $\sigmamax^2 = n^2$. Then
  there exists $N$ such that $n \ge N$ implies
  with probability at least $1 - 1/n^2$,
  \begin{equation*}
    \abs{\sigma_n^2 - \blbvar(\param(\cdot),
      \tparam(\cdot), P_n, \subsize, \sigmamax, \rho, \diffp)}
    \le \sqrt{\frac{\log n}{n \diffp}}.
  \end{equation*}
\end{corollary}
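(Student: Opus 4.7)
The plan is to combine Proposition~\ref{proposition:edgeworth-moment} with Theorem~\ref{thm:blb-var}: I verify Assumption~\ref{assumption:bootstd-accuracy} by bounding $|\sigma_n^2 - \what v|$ through the intermediary $\sigma^2(P_\subsize)$, then choose hyperparameters so that the subsample-size precondition of Theorem~\ref{thm:blb-var} is met at failure probability $\failprob = 1/n^2$.

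I start by setting $\subsize \asymp n\diffp/\log n$, giving $\nsubs = \floor{n/\subsize} \asymp \log n/\diffp$. With $\sigmamax^2 = n^2$, $\smoothparam = 1/n$, and $\failprob = 1/n^2$, the requirement $\nsubs \ge \max\{(16/\diffp)\log(\sigmamax^2/(\failprob\smoothparam)),\,32\log(1/\failprob)\}$ from Theorem~\ref{thm:blb-var} reduces (up to constants) to $\nsubs \gtrsim \log n/\diffp$, which this choice satisfies for a sufficiently large hidden constant. The target bound $\sqrt{\log n/(n\diffp)}$ will emerge as the $1/\sqrt{\subsize}$ term from Proposition~\ref{proposition:edgeworth-moment}.

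To verify Assumption~\ref{assumption:bootstd-accuracy}, I split $|\sigma_n^2 - \bootvaralg|$ via the triangle inequality. Proposition~\ref{proposition:edgeworth-moment} controls one leg: conditionally almost surely, $|\sigma^2(P_\subsize) - \bootvaralg| = O_P(1/\sqrt{\subsize} + \sqrt{n\,\E[\privtononpriv^2 \mid P_\subsize]})$. The Edgeworth variance expansions~\eqref{eqn:edgeworth-variance} control the other: $\sigma_n^2 = \sigma^2(1 + O(1/n))$ and $\sigma^2(P_\subsize) = \sigma^2 + O_P(1/\sqrt{\subsize})$, yielding $|\sigma_n^2 - \sigma^2(P_\subsize)| = O_P(1/\sqrt{\subsize} + 1/n)$. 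Invoking typical squared error, the residual $\sqrt{n\,\E[\privtononpriv^2 \mid P_\subsize]}$ is dominated by $1/\sqrt{\subsize} \asymp \sqrt{\log n/(n\diffp)}$ for canonical private mechanisms (e.g., the Laplace mechanism of Example~\ref{example:resampling-mean-2}, whose second moment scales as $1/(n\diffp)^2$, so $\sqrt{n\,\E[\privtononpriv^2]} = O(1/(\sqrt{n}\diffp))$). This establishes the assumption with $\varacc \lesssim \sqrt{\log n/(n\diffp)}$.

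Theorem~\ref{thm:blb-var} applied with $\failprob = 1/n^2$ then yields $|\sigma_n^2 - \blbvar(\cdots)| \le \varacc + \smoothparam \lesssim \sqrt{\log n/(n\diffp)} + 1/n \lesssim \sqrt{\log n/(n\diffp)}$ with probability at least $1 - 1/n^2$, as claimed. The main subtlety is ensuring the $\sqrt{n\,\E[\privtononpriv^2 \mid P_\subsize]}$ term in Proposition~\ref{proposition:edgeworth-moment} remains $o(1/\sqrt{\subsize})$: the typical-squared-error hypothesis as literally stated yields only $O(1/\sqrt{\diffp})$, so the argument must lean on the sharper concentration of the privacy mechanism (additive-noise mechanisms have second moments scaling as $1/(n\diffp)^2$, not merely $1/(n\diffp)$) in order to keep this term lower-order at the claimed rate.
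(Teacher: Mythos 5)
Your proposal is correct and follows essentially the same route as the paper's proof: verify Assumption~\ref{assumption:bootstd-accuracy} by combining Proposition~\ref{proposition:edgeworth-moment} with the Edgeworth control $\sigma^2(P_\subsize) = \sigma^2 + O_P(1/\sqrt{\subsize})$, take $\nsubs = K\log n/\diffp$ so that $\subsize \asymp n\diffp/\log n$, and then invoke Theorem~\ref{thm:blb-var} with $\failprob = 1/n^2$ and $\smoothparam = 1/n$. Your closing remark is a fair catch that the paper elides: the typical-squared-error condition as literally stated only gives $\sqrt{n\E[\privtononpriv^2 \mid P_\subsize]} = O_P(1/\sqrt{\diffp})$, whereas the paper's proof silently asserts $\varacc = O(1/\sqrt{\subsize} + 1/\sqrt{n\diffp})$, which requires the sharper second-moment scaling (as in the Laplace mechanism of Example~\ref{example:resampling-mean-2}) that you supply.
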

\begin{proof}
  Let
  $v^2(P_\subsize)$ be the output of $\bootvaralg$ as
  in Proposition~\ref{proposition:edgeworth-moment}.
  Then by the proposition and
  the typical squared error condition, we see that for
  $\varacc = O(1/\sqrt{\subsize} + 1 / \sqrt{n \diffp})$,
  there is some $N$ such that $n \ge N$ implies
  $\P(|v^2(P_\subsize) - \sigma^2(P_\subsize)| \ge \varacc) \le \frac{1}{4}$.
  We also have $\sigma^2(P_\subsize) = \sigma^2 + O_P(1/\sqrt{\subsize})$
  by the assumed Edgeworth expansions,
  so taking $\nsubs = K \frac{\log n}{\diffp}$ for a numerical
  constant $K$ as in
  Theorem~\ref{thm:blb-var}
  yields $\subsize = \frac{n}{\nsubs} = \frac{n \diffp}{\log n}$,
  and Assumption~\ref{assumption:bootstd-accuracy} holds.
  Substitute.
\end{proof}

\subsubsection{On higher-order accuracy and studentization}

As a brief remark for readers familiar with the classical literature on
Edgeworth expansions and bootstrap accuracy~\cite{Hall92},
when we seek symmetric
sets $I_t = [-ht, ht]$, the classical bootstrap with studentization achieves
coverage error $O(1/n)$ rather than the $O(1/\sqrt{n})$ above. The failure
to achieve the faster rate follows because we do not assume we have a
consistent variance estimate $\wt{\sigma}$ of $\tparam$.  Thus, comparing
the estimates~\eqref{eqn:edgeworth}, we achieve roughly
\begin{align*}
  \P(\tblbstatistic \in [-t, t] \mid P_\subsize)
  - \P(\unstudstatpriv \in [-t, t])
  & = 2 \Phi(t \sigma(P_\subsize))
  - 2 \Phi(t \sigma)
  + O(1/n) \\
  & = 2 \phi(t \sigma) t \cdot (\sigma(P_m) - \sigma)
  + O\left((\sigma(P_\subsize) - \sigma)^2 + 1/n\right)
\end{align*}
because $p_1$ is even. So the error necessarily scales with
the difference $\sigma(P_\subsize) - \sigma$, which typically is
roughly of order $1 / \sqrt{\subsize}$. One could
in principal use a private variance estimates $\wt{\sigma}^2(P_\subsize)$
for private estimators applied to $P_\subsize\subsampleind{i}$, plausibly
computed via an adaptation
of
Algorithm~\ref{algorithm:blb-var}, and resample
the studentized statistic
\begin{equation*}
  \wt{U}_{\subsize,n}^{*,\textup{student}}
  \defeq \sqrt{n} \frac{\tparam(P_{\subsize,n}^*) - \param(P_\subsize)}{
    \wt{\sigma}(P_\subsize)}
  \cd \normal(0, 1).
\end{equation*}
But this requires more care in the analysis, as we have multiple
levels of variance estimation and resampling, so we leave it to
future work.

\subsection{Empirical risk minimization, objective perturbation, and accuracy}
\label{sec:erm}

In this section, we provide a more sophisticated set of examples that depart
from simple mean and other moment-based estimators, considering M-estimators
(or empirical risk minimization).  Here, for a convex loss $\loss_\param(x)$
measuring the performance of a parameter $\param$ for predicting on data $x
\in \mc{X}$, the goal is to find $\param(P)$ minimizing the population risk
\begin{equation*}
  \poploss_P(\param) \defeq \E_P[\loss_\param(X)] = P \loss_\param,
\end{equation*}
where we use the empirical process notation $P f
= \int f(x) dP(x)$, so that for the empirical measure $P_n$, $P_n f =
\frac{1}{n} \sum_{i = 1}^n f(X_i)$.  The classical M-estimator choses
$\param(P_n) = \argmin_\param P_n \loss_\theta$. When $\loss_\theta$ is
suitably smooth in $\theta$, then classical
asymptotics~\cite[Ch.~5.6]{VanDerVaart98}
give
\begin{equation*}
  \sqrt{n}\left(\param(P_n) - \param(P)\right)
  = -\sqrt{n} (P \ddot{\loss}_{\param(P)})^{-1}
  P_n \dot{\loss}_{\theta(P)}
  + o_P(1)
\end{equation*}
where $\dot{\loss}$ and $\ddot{\loss}$ denote gradient and Hessian,
respectively. For $H = \nabla^2 \poploss_P(\param(P))$,
we then have $\sqrt{n}(\param(P_n) - \param(P))
\cd \normal(0, H^{-1} \cov(\dot{\loss}_{\param(P)}) H^{-1})$,
the optimal asymptotic convergence~\cite{DuchiRu21}.

\providecommand{\radius}{\textup{rad}_2}

Because we develop results under privacy constraints,
we require somewhat explicit smoothness assumptions.
\begin{assumption}
  \label{assumption:loss-stuff}
  For each $x \in \mc{X}$, the loss $\theta \mapsto \loss_\theta(x)$ is
  convex, $\mc{C}^2$, $\lipobj$-Lipschitz continuous, and has
  $\lipconst_i$-Lipschitz $i$th derivatives, $i = 1, 2$. Additionally, $\nabla^2
  \loss_\theta(x)$ is at most rank $r$.
\end{assumption}
\noindent
Any generalized linear model loss has rank-1 Hessian, while
smoothness properties follow on a per-loss basis.
Logistic regression (which has inputs $x \in \R^d$ and
$y \in \{0, 1\}$) has loss
\begin{equation*}
  \loss_\theta(x, y) = \log(1 + \exp(\<x, \theta\>)) - y \<x, \theta\>,
\end{equation*}
and when $x \in \mc{X} \subset \R^d$ has finite
radius $\radius(\mc{X}) = \sup_{x \in \mc{X}} \ltwo{x}$,
the loss has Lipschitz constants
$\lipobj = \radius(\mc{X})$, $\lipgrad = \frac{1}{4} \radius(\mc{X})^2$,
and $\liphess \lesssim \radius(\mc{X})^3$.

As the prototypical private estimator, we consider objective
perturbation~\cite{ChaudhuriMoSa11}, which chooses $\wt{\param}$ to minimize
a randomly tilted (and regularized) version of the empirical objective $P_n
\loss_\param$.  To make dependence on the noise clear and allow us to
provide a unified analysis, for a regularization $\lambdareg \ge 0$ and
tilting vector $\noiseval$, define
\begin{equation}
  \label{eqn:obj-pert-param}
  \param_{\lambdareg}(P_n, \noiseval) \defeq \argmin_\param
  \left\{P_n \loss_\param + \frac{\lambdareg}{2} \ltwo{\param}^2
  + \<\noiseval, \param\> \right\}.
\end{equation}
We let $\param_\lambda(P_n) = \param_\lambda(P_n, \zeros)$ for shorthand.
Then under appropriate conditions on the loss $\loss$, regularization
$\lambdareg$, and noise $\noisevar$, $\param_{\lambdareg}(P_n, \noisevar_n)$
is $(\diffp, \delta)$-differentially private. A recent analysis gives the
following.
\begin{corollary}[Lemmas 7 and 24, \cite{AgarwalKaSiTh23}]
  \label{corollary:objective-perturbation-privacy}
  Let Assumption~\ref{assumption:loss-stuff} hold
  and $\diffp > 0$, and $\delta \in (0, 1)$.
  Then there is a 
  numerical constant $C$ such that if
  \begin{equation*}
    \sigma_n^2 \ge C \frac{\lipobj^2(d + \log \frac{1}{\delta})}{n^2 \diffp^2},
    ~~ \noisevar_n \sim \normal(0, \sigma_n^2 I_d),
    ~~ \mbox{and} ~~
    \lambdareg[n] \ge C \cdot \min\{r, d\} \frac{\lipgrad}{n \diffp}
  \end{equation*}
  the objective
  perturbation estimate $\param_{\lambdareg[n]}(P_n, \noisevar_n)$ is $(\diffp,
  \delta)$-differentially private.
\end{corollary}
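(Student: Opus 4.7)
The plan is to follow the classical objective-perturbation analysis of Chaudhuri--Monteleoni--Sarwate, but to sharpen the regularization requirement using the rank bound on $\nabla^2\loss_\param$. First I would observe that because $\loss_\param$ is convex and $\mc{C}^2$ and the quadratic term $\frac{\lambdareg[n]}{2}\ltwo{\param}^2$ makes the objective in~\eqref{eqn:obj-pert-param} strongly convex, the minimizer $\wt{\param}=\param_{\lambdareg[n]}(P_n,\noisevar)$ is unique and characterized by the stationary condition
\begin{equation*}
P_n\dot{\loss}_{\wt{\param}} + \lambdareg[n]\wt{\param} + \noisevar = 0,
\qquad \mbox{i.e.,}\qquad \noisevar = -P_n\dot{\loss}_{\wt{\param}} - \lambdareg[n]\wt{\param}.
\end{equation*}
This makes $\noisevar\mapsto\wt{\param}$ a diffeomorphism with inverse Jacobian $P_n\ddot{\loss}_{\wt{\param}} + \lambdareg[n] I$, and a change of variables gives the density
\begin{equation*}
p(\wt{\param}\mid P_n) = \phi_{\sigma_n}\!\left(-P_n\dot{\loss}_{\wt{\param}} - \lambdareg[n]\wt{\param}\right)
\cdot \det\!\left(P_n\ddot{\loss}_{\wt{\param}} + \lambdareg[n] I\right),
\end{equation*}
where $\phi_{\sigma_n}$ is the $\normal(0,\sigma_n^2 I)$ density.

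Next I would bound the density ratio for neighboring samples $P_n,P_n'$ by its two natural pieces. For the Gaussian piece, changing a single datum perturbs $P_n\dot{\loss}_{\wt{\param}}$ in $\ell_2$-norm by at most $\frac{2\lipobj}{n}$ using Assumption~\ref{assumption:loss-stuff}; the standard Gaussian-mechanism tail bound then gives that this contribution is $(\tfrac{\diffp}{2},\delta)$-indistinguishable whenever $\sigma_n \gtrsim \lipobj\sqrt{d+\log(1/\delta)}/(n\diffp)$, which matches the stated condition on $\sigma_n^2$.

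The main obstacle, and where the rank hypothesis earns its keep, is the Jacobian ratio
\begin{equation*}
\frac{\det(P_n\ddot{\loss}_{\wt{\param}} + \lambdareg[n] I)}{\det(P_n'\ddot{\loss}_{\wt{\param}} + \lambdareg[n] I)}.
\end{equation*}
Since $P_n\ddot{\loss}_{\wt{\param}} - P_n'\ddot{\loss}_{\wt{\param}} = \tfrac{1}{n}(\ddot{\loss}_{\wt{\param}}(x) - \ddot{\loss}_{\wt{\param}}(x'))$ is a difference of two matrices, each of rank at most $r$ and spectral norm at most $\lipgrad$, I would apply Sylvester's determinant identity (or, equivalently, reduce to the non-zero eigenvalues of $A^{-1/2}UA^{-1/2}$ where $A = P_n'\ddot{\loss}_{\wt{\param}} + \lambdareg[n] I$) to get at most $2r$ non-trivial factors, each bounded by $1 + \lipgrad/(n\lambdareg[n])$. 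In the regime $r > d$ I would instead use the trivial bound over all $d$ eigenvalues. Taking $\lambdareg[n] \ge C\min\{r,d\}\lipgrad/(n\diffp)$ with a suitable numerical $C$ controls the log-ratio by $\diffp/2$.

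Finally, combining the two contributions---the approximate-DP Gaussian piece and the pure-DP Jacobian piece---by basic composition of log-density bounds yields $(\diffp,\delta)$-DP with exactly the thresholds stated. The delicate step is bookkeeping: tracking the constants so that the two halves each absorb $\diffp/2$, and verifying that the minimum-$\{r,d\}$ dependence is tight by invoking rank-$r$ determinant identities rather than the naive $d$-dimensional determinant bound used in the original Chaudhuri--Monteleoni--Sarwate argument.
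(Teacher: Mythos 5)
The paper does not actually prove this corollary: it is imported verbatim from Lemmas~7 and~24 of \cite{AgarwalKaSiTh23}, so there is no in-paper argument for your sketch to match. What you propose is the classical objective-perturbation analysis (Chaudhuri--Monteleoni--Sarwate as refined by Kifer--Smith--Thakurta and the cited work): a change of variables through the stationarity condition, a Gaussian density-ratio term for the perturbed gradient, and a Jacobian determinant ratio controlled through the rank-$r$ structure of the per-example Hessians. As a route to the stated thresholds this is sound---the bijection is legitimate because the objective is strongly convex and $\mc{C}^2$, and the $\min\{r,d\}$ dependence does come out the way you indicate---so your derivation is a reasonable self-contained substitute for the citation.

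Two steps need more care than your sketch gives them. First, the Gaussian piece is not an instance of the standard Gaussian mechanism: at a fixed output $\tparam$ the log-ratio of noise densities is $\frac{1}{2\sigma_n^2}\langle g'-g,\, g'+g\rangle$ with $g(\tparam) = -P_n\dot{\loss}_{\tparam} - \lambdareg[n]\tparam = -\noisevar_n$, and the perturbation $g'-g$ depends on $\tparam$, hence on the realized noise, so you cannot treat $\langle g'-g, \noisevar_n\rangle$ as a fixed-direction Gaussian (that reasoning would give a threshold with only $\log\frac{1}{\delta}$, not $d+\log\frac{1}{\delta}$). The correct argument conditions on the event $\ltwo{\noisevar_n} \lesssim \sigma_n\sqrt{d + \log\frac{1}{\delta}}$, which holds with probability at least $1-\delta$, and applies Cauchy--Schwarz; this is exactly where the $d$ in the noise calibration arises, and it is also why the conclusion is $(\diffp,\delta)$-DP via a pointwise-ratio-outside-a-bad-event lemma rather than by ``composition'' of the two pieces. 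Second, for the Jacobian ratio you can avoid Sylvester-style bookkeeping on the difference $\ddot{\loss}_{\tparam}(x) - \ddot{\loss}_{\tparam}(x')$, whose negative eigenvalues would force an additional requirement of the form $n\lambdareg[n] \gtrsim \lipgrad$: since each per-example Hessian is positive semidefinite of rank at most $r$, bound numerator and denominator separately, namely $\det(A + \tfrac{1}{n}\ddot{\loss}_{\tparam}(x'))/\det(A + \tfrac{1}{n}\ddot{\loss}_{\tparam}(x)) \le \det(A + \tfrac{1}{n}\ddot{\loss}_{\tparam}(x'))/\det(A) \le (1 + \lipgrad/(n\lambdareg[n]))^{\min\{r,d\}}$ with $A \succeq \lambdareg[n] I$, which yields the log-ratio bound $\min\{r,d\}\,\lipgrad/(n\lambdareg[n]) \le \diffp/C$ directly under the stated choice of $\lambdareg[n]$.
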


In this case, we perform bootstrap resampling on
the statistic $\param_\lambda(P_{\subsize,n}^*, \noisevar_n)
- \param_\lambda(P_\subsize)$.
The following proposition captures the asymptotic properties of the
subsampling/resampling estimator~\eqref{eqn:obj-pert-param}, which also
includes classical M-estimators with $\noiseval = \zeros$ and $\lambdareg =
0$ as a special case.
\begin{proposition}
  \label{proposition:objective-perturbation-subsample}
  Assume the subsample size $\subsize  \to \infty$
  as $n \to \infty$. Then for any sequence of vectors $\noiseval_n
  \to 0$ and any bounded
  sequence of regularization values
  $\lambda = \lambdareg[n] \ge 0$,
  \begin{align*}
    \theta_\lambda(P_n, \noiseval_n)
    - \theta_\lambda(P)
    & = -(\nabla^2 \poploss_P(\theta_\lambda(P))
    + \lambda I)^{-1}
    (P_n \dot{\loss}_{\theta_\lambda(P)} + \lambda
    \param_\lambda(P) + \noiseval_n)
    + R_n,
  \end{align*}
  where for a problem-dependent constant $C < \infty$
  and numerical constant $c > 0$, for all $t \le c n$ the
  remainder $R_n$ satisfies
  \begin{equation*}
    \P\left(\norm{R_n}
    \ge C
    \sqrt{\frac{t}{n}}
    \bigg(\ltwo{\noiseval_n} + \sqrt{\frac{t}{n}}
    \bigg)\right)
    \le e^{-t}.
  \end{equation*}
  Additionally, we have the (conditional on $P_n$) almost sure
  expansions
  \begin{align*}
    \param_\lambda(P_{n,m}^*, \noiseval_n)
    - \param_\lambda(P_m\subsample)
    & = -(H_m + \lambdareg I)^{-1} (P_{n,m}^* \dot{\loss}_{\param_\lambda(P_m\subsample)}
    + \lambdareg \param_\lambda(P_\subsize)
    + \noiseval_n) + R_{n,\subsize}^* ~~ \mbox{and} \\
    \param_\lambda(P_{n,m}^*, \noiseval_n)
    - \param_\lambda(P_{n,m}^*)
    & = -(H_m + \lambdareg I)^{-1} \noiseval_n + R_n^*,
  \end{align*}
  where $H_\subsize = P_\subsize \ddot{\loss}_{\param_\lambda(P_\subsize)} \to
  \nabla^2 \poploss_P(\param_\lambda(P))$ conditionally almost surely.  The
  remainders $R_n^*$ and $R_{n,\subsize}^*$ satisfy the same probabilistic
  guarantees as $R_n$ conditional on $P_\subsize$ asymptotically almost
  surely.
\end{proposition}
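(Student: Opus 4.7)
The plan is to linearize the first-order optimality conditions and extract a Taylor expansion in $\Delta_n := \param_\lambda(P_n, \noiseval_n) - \param_\lambda(P)$. Writing the two KKT conditions
\begin{equation*}
P_n \dot\loss_{\param_\lambda(P_n, \noiseval_n)} + \lambda \param_\lambda(P_n, \noiseval_n) + \noiseval_n = 0, \qquad P \dot\loss_{\param_\lambda(P)} + \lambda \param_\lambda(P) = 0,
\end{equation*}
subtracting them, and Taylor-expanding $\theta \mapsto P_n \dot\loss_\theta$ around $\param_\lambda(P)$ (valid since Assumption~\ref{assumption:loss-stuff} guarantees a $\liphess$-Lipschitz Hessian) yields the linearized system $(P_n \ddot\loss_{\param_\lambda(P)} + \lambda I)\Delta_n = -(P_n - P)\dot\loss_{\param_\lambda(P)} - \noiseval_n + O(\liphess \|\Delta_n\|^2)$, where I use that $P\dot\loss_{\param_\lambda(P)} + \lambda\param_\lambda(P) = 0$. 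Replacing the empirical Hessian by $H_P := \nabla^2 \poploss_P(\param_\lambda(P))$ via the resolvent identity $(A+\lambda I)^{-1} - (B+\lambda I)^{-1} = -(A+\lambda I)^{-1}(A-B)(B+\lambda I)^{-1}$ then gives the displayed identity, with $R_n$ consisting of a quadratic-in-$\|\Delta_n\|$ Taylor term and a Hessian-mismatch term.

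The remainder bound comes from three high-probability inequalities. Bernstein's inequality for the $\lipobj$-bounded summands $\dot\loss_{\param_\lambda(P)}(X_i)$ gives $\|(P_n - P)\dot\loss_{\param_\lambda(P)}\| \lesssim \sqrt{t/n}$ with probability $\ge 1 - e^{-t}$, and matrix Bernstein applied to the rank-$r$ Hessian summands (operator-norm bounded by $\lipgrad$) gives $\|P_n \ddot\loss_{\param_\lambda(P)} - H_P\|_{\textup{op}} \lesssim \sqrt{t/n}$ on the same event. Using strong convexity $\lambda_{\min}(H_P + \lambda I) > 0$, a first pass through the linearized equation gives the sharp a priori bound $\|\Delta_n\| \lesssim \sqrt{t/n} + \|\noiseval_n\|$. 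Substituting back, the quadratic Taylor term contributes $O(t/n + \|\noiseval_n\|^2)$ and the resolvent-mismatch term contributes $O(\sqrt{t/n}\,(\sqrt{t/n} + \|\noiseval_n\|))$, summing to the claimed bound $\|R_n\| \lesssim \sqrt{t/n}\,(\sqrt{t/n} + \|\noiseval_n\|)$ whenever $t \le cn$.

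For the resampling identities, I rerun the argument conditionally on $P_\subsize$: since $P_{\subsize,n}^*$ is an i.i.d.\ sample of size $n$ from $P_\subsize$, the derivation applies verbatim with $P_\subsize$ in place of $P$ and $P_{\subsize,n}^*$ in place of $P_n$, producing the first displayed resampling expansion with $H_\subsize = P_\subsize \ddot\loss_{\param_\lambda(P_\subsize)}$ in the leading term. The convergence $H_\subsize \to \nabla^2 \poploss_P(\param_\lambda(P))$ conditionally almost surely is standard, from $P_\subsize \Rightarrow P$ and continuity of $\ddot\loss$. The noise-only expansion follows by differencing the KKT conditions at $\param_\lambda(P_{\subsize,n}^*, \noiseval_n)$ and $\param_\lambda(P_{\subsize,n}^*)$ on the \emph{same} resampled dataset: this immediately gives $(P_{\subsize,n}^* \ddot\loss_{\param_\lambda(P_{\subsize,n}^*)} + \lambda I)\delta = -\noiseval_n + O(\liphess\|\delta\|^2)$ for $\delta := \param_\lambda(P_{\subsize,n}^*, \noiseval_n) - \param_\lambda(P_{\subsize,n}^*)$, and swapping $P_{\subsize,n}^* \ddot\loss_{\param_\lambda(P_{\subsize,n}^*)}$ for $H_\subsize$ (justified by the main expansion and continuity of $\ddot\loss$) yields the stated form with the swap error absorbed into $R_n^*$.

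The main obstacle is the chicken-and-egg character of the linearization: a sharp Taylor remainder requires sharp a priori control of $\|\Delta_n\|$, but the sharp bound on $\|\Delta_n\|$ is what the linearization itself produces. The resolution is the two-stage bootstrap above, using a crude $O(1)$ strong-convexity bound to control the quadratic Taylor term and then refining via the linearized equation. A secondary subtlety is that inverting $H_P + \lambda I$ requires either $\lambda > 0$ or $H_P \succ 0$, one of which must be in force (and will be absorbed into the problem-dependent constant $C$) for the expansion to be well-defined.
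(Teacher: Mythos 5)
Your proposal is correct and follows essentially the same route as the paper: linearize the stationarity conditions, use a crude convexity-based localization to control the quadratic Taylor term and then refine, swap the empirical Hessian for $\nabla^2 \poploss_P(\theta_\lambda(P)) + \lambda I$ via a resolvent/Neumann expansion, control gradient and Hessian fluctuations by (matrix) concentration, and rerun the argument conditionally on $P_\subsize$ for the resampled expansions. The only cosmetic differences are that the paper handles the subsample $P_\subsize\subsample$ through a dedicated exchangeable-pairs matrix concentration bound conditional on $P_n$ where you invoke the marginal i.i.d.\ structure, and it obtains the noise-only expansion from uniformity of its master expansion in $w$ rather than your direct differencing of the optimality conditions on the same resample.
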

\noindent
As Proposition~\ref{proposition:objective-perturbation-subsample}
is not the main focus, we defer its proof to
Appendix~\ref{sec:proof-objective-perturbation-subsample}.

\subsubsection{Subsampling consistency for objective perturbation}

We leverage Proposition~\ref{proposition:objective-perturbation-subsample}
to show how objective perturbation meets the conditions
of Propositions~\ref{proposition:asymp-norm-consistency}
and~\ref{proposition:edgeworth}.
Throughout this argument, we let the privacy levels $\diffp = \diffp_n$
decrease, but assume they satisfy $\diffp_n \ge n^{-\beta}$ for
some $\beta < \half$.
We first argue that so long as the regularization
$\lambdareg[n] \to 0$ quickly enough, then the regularized
(population) minimizers are equivalent to $\param(P)$.
A quick calculation
with the implicit function theorem shows that
if $\theta_\lambda = \argmin_\theta \{\poploss_P(\theta) + \lambda
\ltwo{\theta}^2\}$, then
\begin{equation}
  \theta_\lambda = \theta(P) - \lambda \nabla^2 \poploss_P(\theta(P))^{-1}
  \theta(P) + O(\lambda^2)
  \label{eqn:implicit-theta-lambda}
\end{equation}
as $\lambda \to 0$.
Define the asymptotic covariance
\begin{equation*}
  \Sigma_P \defeq \nabla^2 \poploss_P(\param(P))^{-1}
  \cov_P(\dot{\loss}_{\param(P)})
  \nabla^2 \poploss_P(\param(P))^{-1}.
\end{equation*}
Then the choice $\lambda = \lambdareg[n] = O(1/n\diffp)$ in
Corollary~\ref{corollary:objective-perturbation-privacy} guarantees
the asymptotic normality
$\sqrt{n}\left(\param_{\lambdareg[n]}(P_n, \noisevar_n)
- \param(P)\right)
\cd \normal(0, \Sigma_P)$.
Additionally, so long as $\lambdareg[n] \ll 1/\sqrt{n}$,
the resampled estimator
\begin{equation*}
  \blbstatistic
  \defeq \sqrt{n}\left(\param_\lambda(P_{\subsize,n}^*) - \param_\lambda(P_\subsize)\right)
  = -\sqrt{n} (H_\subsize + \lambda)^{-1} P_{n,m}^* \dot{\loss}_{\param_\lambda(P_m)}
  + O_P(\lambda \sqrt{n})
  + \remainder_{\subsize,n}^*
\end{equation*}
similarly satisfies $\blbstatistic \cd \normal(0, \Sigma_P)$ conditionally
almost surely by Slutsky's lemmas and the Lindeberg central limit
theorem~\cite[cf.][Ch.~23]{VanDerVaart98}.  That is, $\blbstatistic$ is
KS-consistent (Def.~\ref{def:ks-consistency}) for $\unstudstatest =
\sqrt{n}(\param_\lambda(P) - \param_\lambda(P_n))$, as
Proposition~\ref{proposition:asymp-norm-consistency} requires. That
$\param_\lambda - \param_0 = O(\lambda) = O(\frac{1}{n \diffp})$ by the
expansion~\eqref{eqn:implicit-theta-lambda} shows that so long as $\diffp
\gg 1/\sqrt{n}$, the regularization has negligible asymptotic effect.  We
more explicitly give the resampling
consistency~\eqref{eqn:resample-consistency} as an example:

\begin{example}[Resampling consistency of objective perturbation]
  \label{example:obj-pert-resampling}
  Let Assumption~\ref{assumption:loss-stuff} hold on the losses, and assume
  the subsample size $\subsize$ is at least polynomial in $n$. Assume as
  above that the privacy $\diffp = \diffp_n \ge
  n^{-\beta}$ for some $\beta < \half$. Then as in
  Corollary~\ref{corollary:objective-perturbation-privacy}, we have
  $\sigma_n^2 = O(n^{-2(1 - \beta)})$, and so $\sqrt{n} \ltwo{\noisevar_n}
  \cas 0$.  In particular, the remainder terms $R_n$ and $R_n^*$ in
  Proposition~\ref{proposition:objective-perturbation-subsample} satisfy
  $\sqrt{n} \max\{R_n, R_n^*\} \cas 0$, and so the objective perturbation
  estimator is rate-$\sqrt{n}$ subsampling
  consistent~\eqref{eqn:resample-consistency}.

  Recalling Corollary~\ref{corollary:basic-percentile-consistency}, we see
  that if $\what{t}$ is the index $\blbquant$ returns, then with the choices
  $\lambdareg[n]$ and $\sigma_n^2$ in
  Corollary~\ref{corollary:objective-perturbation-privacy},
  the confidence set
  \begin{equation*}
    C_\alpha(P_n) = \param_{\lambdareg[n]}(P_n, \noisevar_n)
    + \frac{1}{\sqrt{n}} I_{\what{t}}
  \end{equation*}
  is $(2 \diffp, \delta)$-differentially private, and
  satisfies $\P(\param(P) \in C_\alpha(P_n)) \to 1 - \alpha$.
\end{example}

\subsubsection{Consistency via Edgeworth expansions and objective
  perturbation}

The Edgeworth expansions that Proposition~\ref{proposition:edgeworth}
requires follow for M-estimators from the asymptotics in
Proposition~\ref{proposition:objective-perturbation-subsample}, so that
$\param(P_n)$ (taking $\noiseval_n = \zeros$) has an Edgeworth
expansion~\cite{Hall92}.
We can then provide the following high-probability guarantees
on deviation, where we are not explicit about constants.

\begin{example}[Accuracy of objective perturbation]
  Let $\tparam(P_n) = \param_{\lambdareg[n]}(P_n, \noisevar_n)$ for
  $\noisevar_n \sim \normal(0, \sigma_n^2)$ and regularization
  $\lambdareg[n]$ as in
  Corollary~\ref{corollary:objective-perturbation-privacy}, and let
  $\param(P_n) = \param_0(P_n, \zeros)$ be the standard M-estimator. We give
  a completely asymptotic expansion, taking $\lambdareg[n] = O(1 / n
  \diffp)$ and $\sigma_n^2 = O(1 / n^2 \diffp^2)$, ignoring the privacy
  parameter $\delta > 0$ and other problem dependent constants. Define
  $\theta = \param(P)$ and $\theta_\lambda = \argmin_\param
  \{\poploss_P(\param) + \frac{\lambda}{2} \ltwo{\theta}^2\}$, so that
  $\theta_\lambda = \theta + O(\lambda)$ as a consequence of the implicit
  function theorem, as above.  Similarly, let $H = \nabla^2
  \poploss_P(\param)$ and $H_\lambda = \nabla^2 \poploss_P(\param_\lambda) =
  H + O(\lambda)$, where we use the smoothness of $\ddot{\loss}_\theta$ from
  Assumption~\ref{assumption:loss-stuff}.
  
  We claim that for some $C < \infty$ depending only on the problem
  parameters in Assumption~\ref{assumption:loss-stuff},
  \begin{equation*}
    \P\left(\norms{\tparam(P_n) - \param(P_n)}
    \ge
    C \left(\frac{1}{n \diffp}
    + \frac{1}{n}\right) (1 + t)
    \right) \le e^{-t}
  \end{equation*}
  for all $0 \le t \le n$.
  As a consequence, the private estimator $\tparam(P_n)$
  is $(C \cdot \frac{\log n}{n \diffp}, \frac{1}{n^2})$-accurate for
  $\param(P_n)$.  
  To see this claim, note that 
  by Proposition~\ref{proposition:objective-perturbation-subsample},
  \begin{equation*}
    \tparam(P_n) - \param(P_n)
    = -(H_{\lambdareg[n]} + \lambdareg[n] I)^{-1}
    (P_n \dot{\loss}_{\theta_{\lambdareg[n]}}
    + \lambdareg[n] \theta_{\lambdareg[n]}
    + \noisevar_n)
    + H^{-1} P_n \dot{\loss}_\theta
    + \remainder_n,
  \end{equation*}
  where $\norm{R_n} \le C \sqrt{t / n} (\norm{\noisevar_n} + \sqrt{t/n})$
  with probability at least $1 - e^{-t}$.  Performing an asymptotic
  expansion that $(H_\lambda + \lambda I)^{-1} = (H + O(\lambda) + \lambda
  I)^{-1} = H^{-1} + O(\lambda)$ as $\lambda \to 0$, we therefore obtain
  \begin{align*}
    \tparam(P_n) - \param(P_n)
    & = H^{-1} (P_n \dot{\loss}_\theta
    - P_n \dot{\loss}_{\theta_{\lambdareg[n]}} - \noisevar_n
    - \lambdareg[n] \theta_{\lambdareg[n]})
    + O(\lambdareg[n]) (P_n \dot{\loss}_{\theta_{\lambdareg[n]}}
    + \noisevar_n) + \remainder_n \\
    & = -H^{-1} \noisevar_n
    + O(\lambdareg[n])(1 + \noisevar_n) + \remainder_n,
  \end{align*}
  where we use that $\ltwos{\dot{\loss}_{\param_{\lambdareg[n]}}
    - \dot{\loss}_\param}
  \le \lipgrad \ltwos{\param_{\lambdareg[n]}
    - \param}
  = O(\lambdareg[n])$.
  In particular, because
  $\noisevar_n \sim \normal(0, \sigma_n^2)$,
  there is some $C$ such that
  $\P(\ltwos{\noisevar_n} \ge C \frac{1 + t}{n \diffp})
  \le e^{-t^2}$ (cf.~\cite[Lemma 1]{LaurentMa00})
  for $t \ge 0$. Applying the triangle inequality
  a few times then gives the original claimed accuracy guarantee.
\end{example}

The Edgeworth expansions~\eqref{eqn:edgeworth-blb}
also require accuracy of a variance proxy $\sigma(P_\subsize)$. For this,
we have Proposition~\ref{proposition:objective-perturbation-subsample}
again: let
\begin{equation*}
  \Sigma_{P_\subsize}
  = (P_\subsize \ddot{\loss}_{\param_{\lambdareg[n]}(P_\subsize)}
  + \lambdareg[n] I)^{-1}
  \cov_{P_\subsize}(\dot{\loss}_{\param_{\lambdareg[n]}(P_\subsize)})
  (P_\subsize \ddot{\loss}_{\param_{\lambdareg[n]}(P_\subsize)}
  + \lambdareg[n] I)^{-1}.
\end{equation*}
Then $\blbstatistic = \sqrt{n} (\param(P_\subsize) -
\param(P_{\subsize,n}^*))$ satisfies $\Sigma_{P_\subsize}^{-1/2}
\blbstatistic \to \normal(0, I)$ by an argument analogous to that preceding
Example~\ref{example:obj-pert-resampling}, and we certainly have
$\sqrt{\subsize}(\param(P_\subsize) - \param(P)) \cd \normal(0, \Sigma_P)$,
so that $\Sigma_{P_\subsize} = \Sigma_P + O_P(1/\sqrt{\subsize})$.  In
particular, Corollary~\ref{corollary:typical-edgeworth-consistency} applies,
because the objective perturbation estimator $\param_{\lambdareg[n]}$ has
typical private error~\eqref{eqn:typical-error}, and so the confidence set
$C_\alpha(P_n)$ centered around
$\param_{\lambdareg[n]}(P_n, \noisevar_n)$
of Corollary~\ref{corollary:coverage-from-percentile}
satisfies
\begin{equation*}
  \P(\param(P) \in C_\alpha(P_n)) = 1 - \alpha + O\left(\frac{\log n}{
    \diffp \sqrt{n}}\right).
\end{equation*}

\section{Experiments}\label{sec:expts}

\begin{figure*} %
  \begin{subfigure}{0.33\textwidth}
    \includegraphics[width=\linewidth]{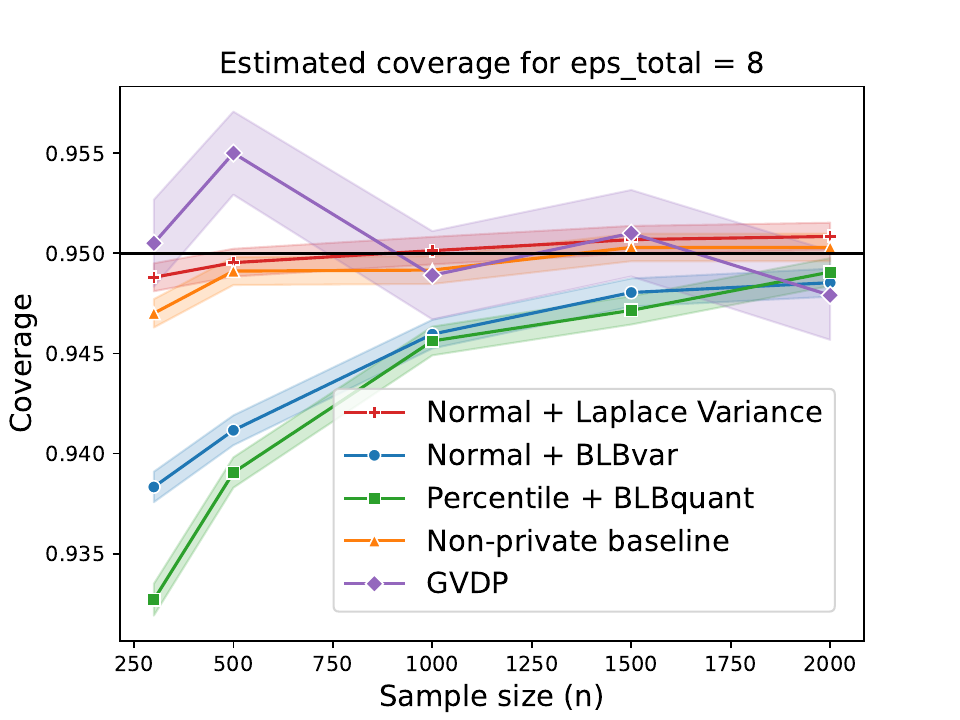}
    \caption{\small Mean estimation coverage} \label{fig:mean-coverage}
  \end{subfigure}
  \begin{subfigure}{0.33\textwidth}
    \includegraphics[width=\linewidth]{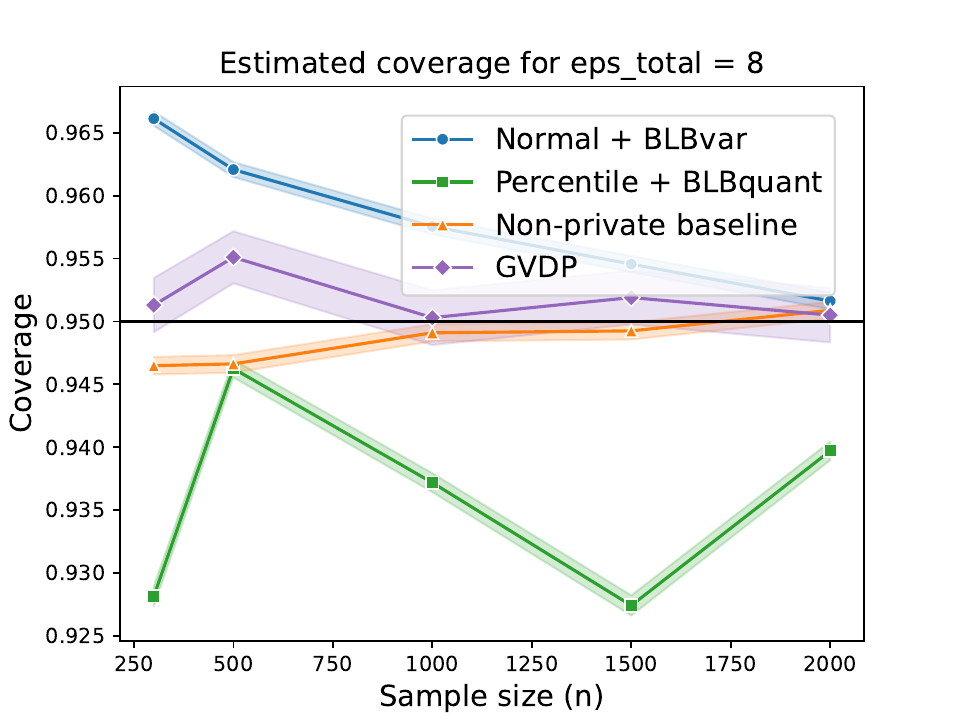}
    \caption{\small Median estimation coverage} \label{fig:median-coverage}
  \end{subfigure}\hspace*{\fill}
  \begin{subfigure}{0.33\textwidth}
    \includegraphics[width=\linewidth]{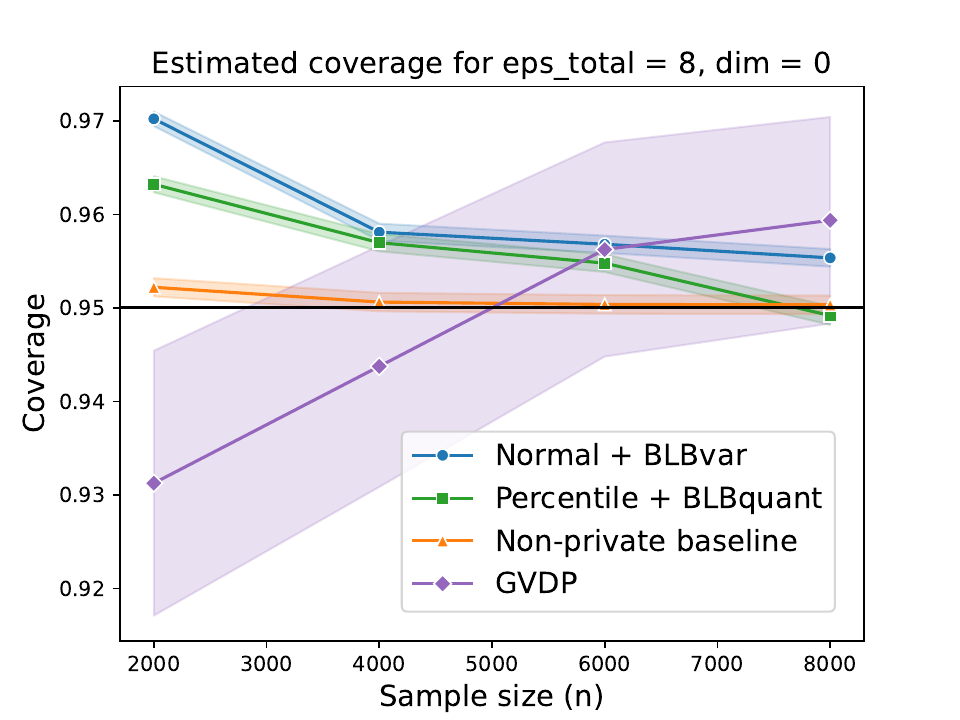}
    \caption{\small Logistic regression coverage} \label{fig:logreg-coverage}
  \end{subfigure}

  \medskip
  \begin{subfigure}{0.33\textwidth}
    \includegraphics[width=\linewidth]{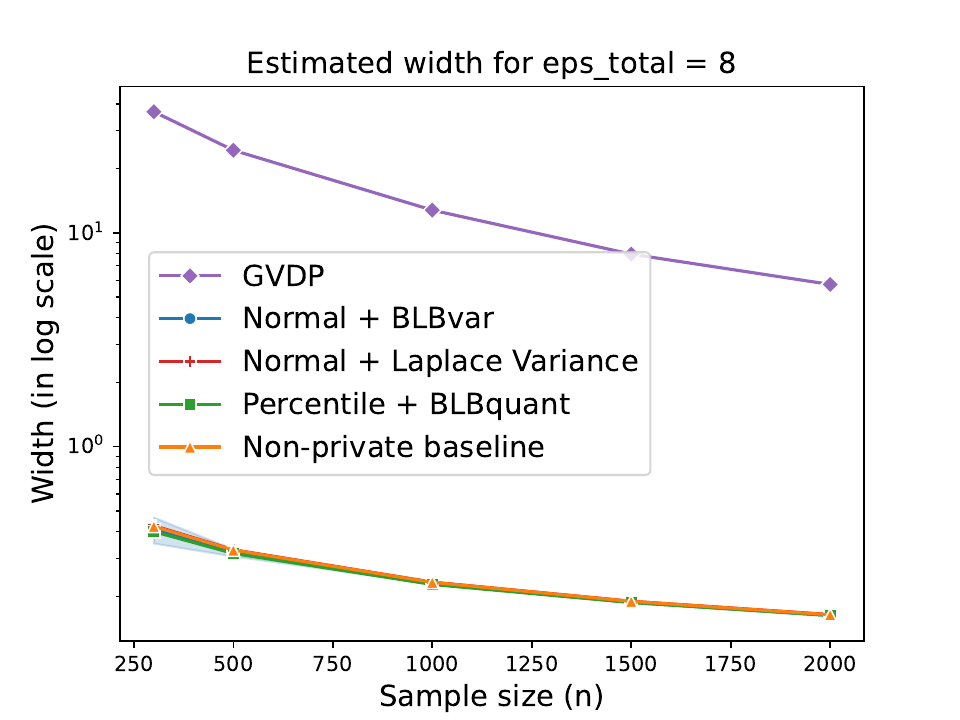}
    \caption{\small Mean estimation widths} \label{fig:mean-widths}
  \end{subfigure}
  \begin{subfigure}{0.33\textwidth}
    \includegraphics[width=\linewidth]{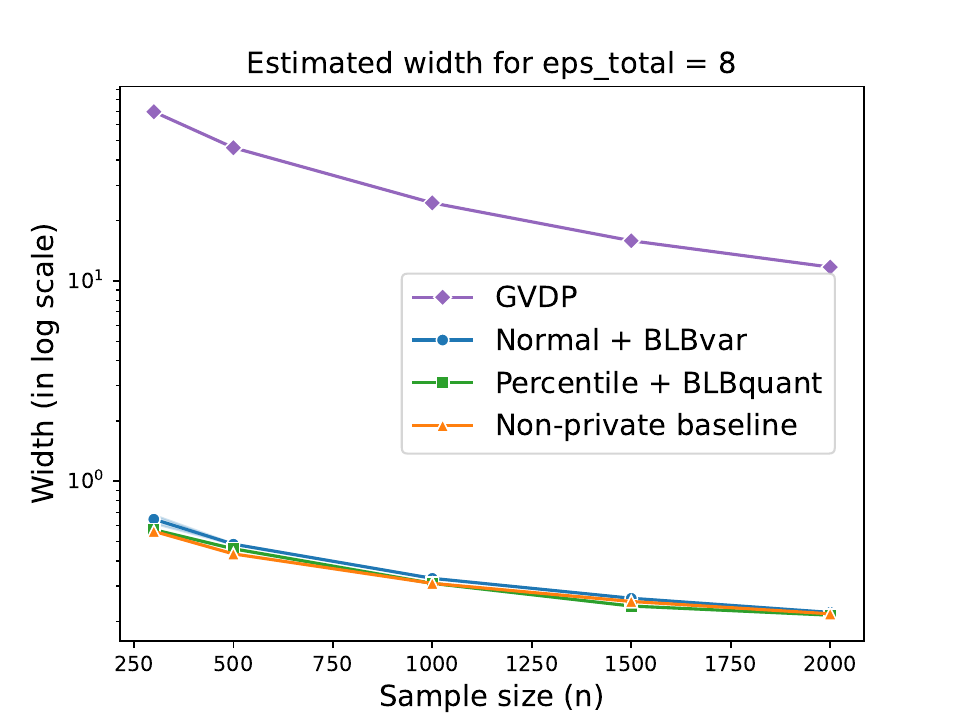}
    \caption{\small Median estimation width} \label{fig:median-width}
  \end{subfigure}\hspace*{\fill}
  \begin{subfigure}{0.33\textwidth}
    \includegraphics[width=\linewidth]{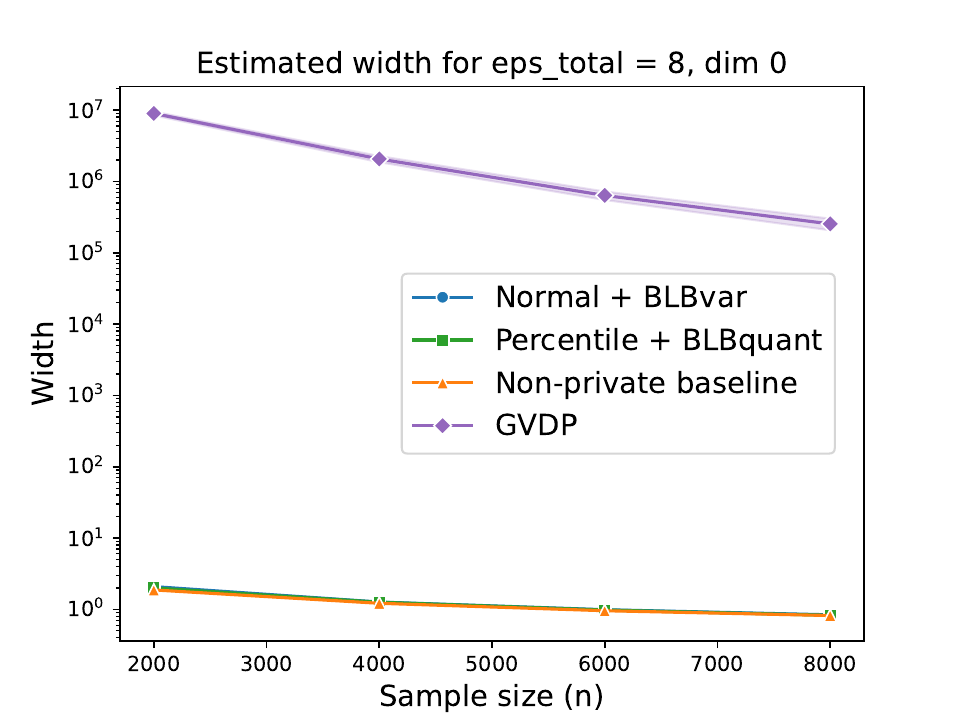}
    \caption{\small Logistic regression width} \label{fig:logreg-width}
  \end{subfigure}

  \looseness=-1
  \caption{Coverage and confidence interval
    widths for all experiments. Each uses $\totdiffp =  8$.}
  \label{fig:1}
\end{figure*}

We study the empirical performance of the proposed methods by constructing
confidence intervals and estimating their coverage and widths on the
following three tasks, which we choose to elucidate the performance
of private estimators for estimators with different characteristics.
As the confidence sets always center around a private estimator
$\tparam(P_n)$, we use a total privacy budget $\totdiffp$ that
we divide evenly between $\tparam(P_n)$ and the resampling methods.
Appendix~\ref{appen:expts} includes more detail and additional ablation
studies. 

\begin{itemize}[leftmargin=*]
  \setlength{\itemsep}{0mm}
\item \textbf{Mean estimation}: (\Cref{fig:mean-coverage,fig:mean-widths})
  We generate synthetic data from a Truncated Gaussian and use
  the sample mean with additive Laplace noise as the private estimator.
\item \textbf{Median estimation}:
  (\Cref{fig:median-coverage,fig:median-width}) We generate synthetic data
  from a Truncated Gaussian distribution, and use the sample median with the
  inverse sensitivity mechanism \citep{AsiDu20} as the private
  estimator. The median fails to satisfy the Edgeworth expansion
  assumption~\eqref{eqn:edgeworth-blb}, but it is resampling consistent.
\item \textbf{Logistic regression}:
  (\Cref{fig:logreg-coverage,fig:logreg-width}) We use the Adult income
  dataset \citep{DingHaMiSc21}, where the task is to predict whether the
  income of individuals is greater than a threshold given some of
  their features. To simulate sampling from a
  population distribution, we consider the ERM solution
  on the whole dataset to be the true parameter, and
  we draw samples of size $n$ from the whole dataset.
  We use the inverse sensitivity mechanism \citep[Sec
    5.2]{AsiDu20} as the private estimator, estimating
  the multiplier on the \texttt{Sex} variable.
\end{itemize}

\newcommand{\Ntrial}{N_{\textup{trials}}}
\newcommand{\Nresamp}{N_{\textup{resamp}}}

\textbf{Methodology}: We evaluate each method on each task by repeating the
following individual trial $\Ntrial$ times. Within each trial, we generate a
confidence interval for the private statistic
$\unstudstatpriv$~\eqref{eqn:private-unstudentized}, then record whether the
true parameter lies in it and its width.
To evaluate $\P(\param(P) \in C_\alpha(P_n))$, we employ a computational
shortcut. Recognizing that $C_\alpha(P_n) = \tparam(P_n) + I_{\what{t}}$,
and $\tparam$ and the index $\what{t}$ use independent randomness and
so are conditionally independent given the sample $P_n$,
we have
\begin{equation*}
  \P(\param(P) \in C_\alpha(P_n) \mid P_n)
  = \P\left(\param(P) \in \tparam(P_n, \noisevar_n^{(*)}) + n^{-1/2} I_{\what{t}}
  \mid P_n\right),
\end{equation*}
where $\noisevar_n^{(*)}$ is a resampled independent random variable that
$\tparam$ uses to guarantee privacy. Thus, for a given interval
$I_{\what{t}}$, we draw $\Nresamp$ independent copies of $\noisevar_n^{(*)}$
and compute the frequency with which $\param(P) \in \tparam(P_n,
\noisevar_n^{(*)}) + n^{-1/2}I_{\what{t}}$.
Figures~\ref{fig:mean-coverage}--\ref{fig:logreg-coverage} plot
this coverage, while
Figures~\ref{fig:mean-widths}--\ref{fig:logreg-width} plot average
confidence interval widths.

We plot the performance of the following algorithms:

\textsc{NonprivateBaseline}: We use the non-private percentile method to
approximate the distribution of the private statistic $\tparam(P_n) -
\param(P)$ via directly bootstrapping $\tparam(P_n^*) - \param(P_n)$.
Across all tasks, this has the lowest coverage error and its width serves as
a baseline for the performance of the private methods.

$\textsc{Percentile}+\blbquant$ and $\textsc{Normal}+\blbvar$: We use the
private percentile method and private normal approximation, with the private
quantile and variance estimators from \Cref{algorithm:blb-quant} and
\Cref{algorithm:blb-var} respectively, as
Corollaries~\ref{corollary:coverage-from-percentile}
and~\ref{corollary:coverage-from-variance} describe. Across the three tasks,
we observe that both the coverage and the average width for these are
is similar to the
$\textsc{NonprivateBaseline}$.
Typically, the normal approximation via $\blbvar$ provides higher
coverage and a larger confidence set than
$\blbquant$, though there is no clear winner.

\Cref{fig:mean-hist} focuses on mean estimation and plots a histogram of
confidence interval widths across trials.  Both $\blbquant$ and $\blbvar$
return confidence intervals with widths typically between $.3$ and $.9$, and
the spread of reported confidence intervals is wider than the
\textsc{NonprivateBaseline}, highlighting a cost to privacy.  $\blbvar$ has
an infrequent failure mode where it returns a grossly too large confidence
set; by Theorem~\ref{thm:blb-var} this failure occurs with probability at
most $(\sigmamax^2 / \smoothparam) \exp(-c \cdot \nsubs \diffp)$, where
$\nsubs$ is the number of subsamples, so slightly increasing this can reduce
this failure mode significantly. (See \Cref{appen:expts} for more details.)

\newcommand{\gvdp}{\textsc{Gvdp}\xspace}

\gvdp (General Valid Differential Privacy) \citep{CovingtonHeHoKa21}: \gvdp
is a fully black box algorithm that uses the
Sample-Aggregate~\cite{NissimRaSm07} method to privatize a (potentially)
non-private estimator, then uses \textsc{CoinPress}~\cite{BiswasDoKaUl20} to
aggregate BLB-subsampled results.  We use the implementation of \gvdp the
authors provide.  For mean and median estimation, the coverage of \gvdp is
as good as our proposed methods, but logistic regression
(Fig.~\ref{fig:logreg-coverage}) proves challenging. Likely because it is
black box and because mean (as opposed to median) aggregation is inherently
unstable, the confidence intervals it outputs are at least $10\times$ wider
than other procedures. While \gvdp's coverage is strong, the
more robust median-based aggregation of Algorithms~\ref{algorithm:blb-quant}
and~\ref{algorithm:blb-var}, which privately adapt
to estimator accuracy, allows sharper confidence intervals, especially
when paired with a known private estimator.

$\algname{Normal+LaplaceVariance}$: In the
special case of mean estimation with $\tparam(P_n, \noiseval)
= P_n X + \noiseval$ (see Example~\ref{example:resampling-mean})
for $x \in [-b, b]$,
the sample variance
\begin{equation*}
  \sigma_n^2 \defeq P_n X^2 - (P_n X)^2
  = \frac{1}{n^2} \sum_{i < j} (X_i - X_j)^2
\end{equation*}
is stable, with global sensitivity at most $\frac{4 b^2}{n}$, so that
$\wt{\sigma}_n^2 = \sigma_n^2 + \noisevar_n^{(2)}$
with $\noisevar_n^{(2)} \sim \laplace(0, \frac{8 b^2}{n \diffp})$
is $\diffp$-differentially private. Thus,
we may directly and privately estimate variance and give
a normal confidence interval.
The better performance of this ad-hoc method highlights that
if it is possible to directly estimate estimator variance---which
requires bespoke approaches---one can achieve stronger confidence
sets.

\begin{figure}
  \centering
  \includegraphics[width=0.9\linewidth]{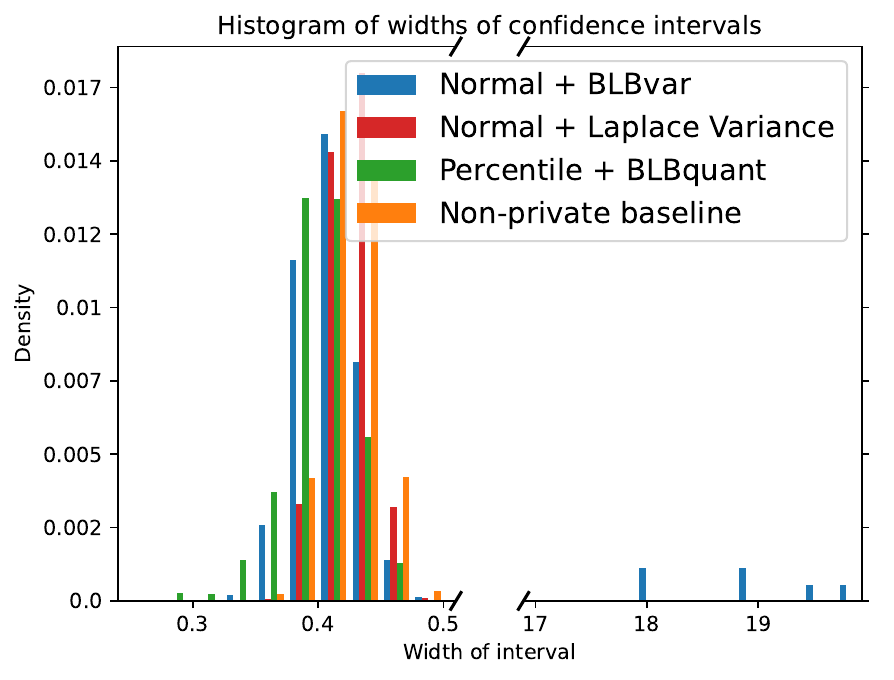}
  \caption{Histogram of widths of confidence intervals constructed for mean estimation for $n = 300$}
  \label{fig:mean-hist}
\end{figure}

\section{Discussion and conclusions}

We have introduced two general purpose data-driven methods for constructing
private confidence sets and intervals given a private estimator for a
parameter of interest. One challenge our experiments highlight is the
challenge of obtaining useful confidence intervals under privacy.  The more
accurate a \emph{private} estimator we have available, the tighter
confidence intervals we can obtain. For example, our experiments with mean
estimation and median achieve confidence set widths (at level $\alpha =
.05$) of roughly $.2$ at sample size $n = 1000$, but for logistic
regression, where effective private estimators are more challenging, the
confidence intervals are of width roughly $1$. Without a strong private
estimator (see Figure~\ref{fig:logreg-width}), confidence interval widths
remain unusably large. At some level, then, our results highlight the need
for practicable private algorithms for basic statistical estimation tasks,
especially generalized linear models, for us to be able to tackle real-world
problems.

It may be interesting to develop higher-order convergence guarantees for
bootstrap and other resampling algorithms for privacy. It is plausible that
by modifying Assumption~\ref{assumption:bootcdf-accuracy} to something along
the lines of $\P(|p - \what{p}| \ge \cdfacc \mid P_\subsize) \ge \half +
\eta$ for some $\eta > 0$, which should hold with high probability over the
subsample $P_\subsize$, we might achieve higher-order accuracy in Edgeworth
expansions, though this would require some more sophisticated conditioning
arguments on a ``good'' event that subsamples are accurate enough.  Our
consistency results also rely on private estimators achieving rates of
convergence comparable to non-private estimators; moving to analysis regimes
where privacy induces substantial additional error could provide new
insights. In scenarios where the private estimator simply adds noise
independent of the underlying statistic $\param(P_n)$, such as basic mean
estimation problems (Examples~\ref{example:resampling-mean}
and~\ref{example:resampling-mean-2}), we can fairly trivially analyze
coverage directly, as the estimator's error simply decomposes, but moving
beyond this will likely require new tools.
We look forward to more work bringing privacy and the
inferential goals of modern statistics together.

\subsection*{Acknowledgements}

This work was partially supported by the Office of Naval Research
Grant N00014-22-1-2669, the
Stanford DAWN Consortium and National Science Foundation Grant IIS-2006777.

\newpage
\bibliography{bib}
\bibliographystyle{abbrvnat}
\newpage
\appendix

\section{The private median algorithm}
\label{appendix:extra-algs}

We recall Asi and Duchi's private median algorithm~\cite{AsiDu20}, providing
pseudocode for completeness in Alg.~\ref{algorithm:priv-median}. The method
takes as input a vector $v_{1:n} \in \R^n$, a privacy budget $\diffp$, and a
smoothing constant $\smoothparam$ (which Asi and Duchi recommend taking to
be $\smoothparam \ll 1/n$), and upper and lower bounds $R_u$ and $R_l$ on
the range of the data, and it outputs an $\diffp$-DP estimate of the median
of $v_{1:n}$. The algorithm instantiates the inverse sensitivity
mechanism~\cite{AsiDu20}, and uses the $\smoothparam$-smoothed length
function $\len^\smoothparam$ (defined in
\Cref{line:len-def,line:smoothlen-def} in \Cref{algorithm:priv-median}) as
the score function to sample a private median estimate using the exponential
mechanism \citep{McSherryTa07}.

\begin{algorithm}
    \caption{\label{algorithm:priv-median} $\privmed$ (from \citet{AsiDu20})}
    \DontPrintSemicolon
    \SetKwInOut{Input}{Input}
    \SetKwInOut{Output}{Output}
    \Input{ data $v_{1:n}$, privacy budget $\diffp$,
      smoothing parameter $\smoothparam$, interval
      $[R_l, R_u]$}
    \Output{ $\diffp$-DP estimate of the median of $v_{1:n}$}
    Set $\what{v} \gets \textup{Med}(v_{1:n})$\;
    Define $\len(y; v_{1:n}) \defeq
    \card\{i \in [n] \mid v_i \in \openleft{y}{\what{v}}
      \cup \openright{\what{v}}{y}\}$\;\label{line:len-def}
      Define $\len^\smoothparam(y; v_{1:n}) \defeq
      \inf_{\abs{z - y} < \smoothparam}\len(z; v_{1:n})$\;\label{line:smoothlen-def}
    \For{$\ell \in 1,\dots,\lceil n/2 \rceil$}{
      Define $\mc{I}_\ell = \{y \in [R_l,R_u] : \len^\smoothparam(y; v_{1:n})
      = \ell \}$\;
    }
    Sample $\ell$
    with probability proportional to $\abs{I_\ell}\exp(-\ell\varepsilon/2)$\;
    \Return $v \sim \uniform\prn{\mc{I}_\ell}$\;
\end{algorithm}

\section{Proofs related to percentile methods}

\subsection{Proof of Proposition~\ref{proposition:privacy-threshold}}
\label{sec:proof-privacy-threshold}

We demonstrate that for any
$t \in [T] \cup \{\perp\}$,
\begin{align}
  \P(\abthreshmed(y, \threshold, \xi) = t)
  \leq e^\diffp \P(\abthreshmed(y', \threshold, \xi) =
  t). \label{eqn:claim-helper}
\end{align}
We first show the result for $t \in [T]$, treating $t = \perp$
afterwards as a special case.
We provide a particular coupling between noise values
$\xi$ and $\xi' = \pi(\xi)$, where
$\xi' \eqdiffp \xi$
and $\abthreshmed(y, \threshold, \xi) = \abthreshmed(y', \threshold, \xi')$,
which then
implies the result~\eqref{eqn:claim-helper}, as
\begin{equation*}
  \P(\abthreshmed(y, \threshold, \xi) = t)
  = \P(\abthreshmed(y', \threshold, \xi') = t)
  \le e^\diffp \P(\abthreshmed(y', \threshold, \xi) = t).
\end{equation*}
The coupling follows here:

\begin{observation}\label{obs:helper}
  Define $\eta \in \R^{T + 1}$ by
  \begin{align*}
    \eta_i :=
    \begin{cases}
      -1 & \mbox{if}~ i = 0    \\
      2 & \mbox{if}~ i = t    \\
      0 & \mbox{otherwise},
    \end{cases}
  \end{align*}
  and define $\pi(\xi) \defeq \xi + \eta$.
  Let $y(t), y'(t) \in \R^k$ satisfy
  $\dham(y(t), y'(t)) \le 1$ for each $t$.
  Then $\abthreshmed(x, \threshold, \xi) = t$ implies 
  $\abthreshmed(x', \threshold, \pi(\xi)) = t$ for all $t \in [T]$.
\end{observation}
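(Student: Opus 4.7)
The plan is to verify the two things that the statement $\abthreshmed(y,\threshold,\xi)=t$ encodes, namely that no earlier step triggers a return and that step $t$ does trigger a return, and then show that each of these is preserved when we pass from $(y,\xi)$ to $(y',\pi(\xi))$. Concretely, $\abthreshmed(y,\threshold,\xi)=t$ means
\begin{equation*}
  \ordstat(y(s);\xi_0+\xi_s) < \threshold \text{ for all } s<t,
  \quad \text{and}\quad
  \ordstat(y(t);\xi_0+\xi_t) \ge \threshold,
\end{equation*}
while the shifted noise satisfies $\pi(\xi)_0+\pi(\xi)_s = \xi_0+\xi_s-1$ for $s\ne t$ and $\pi(\xi)_0+\pi(\xi)_t = \xi_0+\xi_t+1$. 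So I need to show $\ordstat(y'(s);\xi_0+\xi_s-1)<\threshold$ for $s<t$ and $\ordstat(y'(t);\xi_0+\xi_t+1)\ge\threshold$.

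The workhorse is the following elementary fact about order statistics, which I would state as a short lemma: if $y,y'\in\R^k$ satisfy $\dham(y,y')\le 1$, then for every integer $r$,
\begin{equation*}
  y'_{(r-1)} \le y_{(r)} \le y'_{(r+1)},
\end{equation*}
with the usual $\pm\infty$ conventions for $r\notin\{1,\ldots,k\}$. This follows from the characterization $y_{(r)}=\inf\{t:\#\{i:y_i\le t\}\ge r\}$, since changing a single coordinate shifts the counting function $\#\{i:y_i\le t\}$ by at most one uniformly in $t$. Because $\ordstat(\cdot;\xi)$ uses $\lfloor\xi\rfloor$ and $\lfloor\xi\pm 1\rfloor=\lfloor\xi\rfloor\pm 1$, shifting the noise by $\pm 1$ translates cleanly into shifting the rank index by $\pm 1$.

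With the lemma in hand the two verifications are one-liners. For $s<t$, write $r=\lfloor\xi_0+\xi_s\rfloor$; then
\begin{equation*}
  \ordstat(y'(s);\xi_0+\xi_s-1) = y'(s)_{(r-1)} \le y(s)_{(r)} = \ordstat(y(s);\xi_0+\xi_s) < \threshold.
\end{equation*}
For $s=t$, with $r=\lfloor\xi_0+\xi_t\rfloor$, the other half of the lemma gives
\begin{equation*}
  \ordstat(y'(t);\xi_0+\xi_t+1) = y'(t)_{(r+1)} \ge y(t)_{(r)} = \ordstat(y(t);\xi_0+\xi_t) \ge \threshold.
\end{equation*}

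The only real subtlety, and what I expect to be the main (minor) obstacle, is the bookkeeping at the extremes, where $\xi_0+\xi_s$ falls outside $[1,k]$ and $\ordstat$ is defined as $\pm\infty$. One must check that the lemma still reads correctly with these conventions: if $\xi_0+\xi_s<1$ then both $\ordstat(y(s);\cdot)$ and $\ordstat(y'(s);\cdot-1)$ are $-\infty$, preserving the strict inequality; the case $\xi_0+\xi_s>k$ cannot arise for $s<t$ because it would force an earlier return, contradicting $\abthreshmed(y,\threshold,\xi)=t$; and for $s=t$, if $\xi_0+\xi_t>k$ then $\xi_0+\xi_t+1>k$ too, keeping the returned value $+\infty\ge\threshold$. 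After verifying these cases the observation is immediate.
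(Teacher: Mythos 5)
Your proposal is correct and follows essentially the same argument as the paper: decompose the event $\{\abthreshmed(y,\threshold,\xi)=t\}$ into "no return before $t$" and "return at $t$", observe that the coupling shifts the rank index by $-1$ for queries before $t$ and by $+1$ at query $t$, and use the fact that $\dham(y,y')\le 1$ gives $y'_{(r-1)}\le y_{(r)}\le y'_{(r+1)}$ (which the paper applies inline rather than as a separate lemma), with the same edge-case handling via the $\pm\infty$ conventions of $\ordstat$.
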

\begin{proof}
  Observe that
  $\abthreshmed(y, \threshold, \xi) = t$
  if and only if the following conditions hold:
  \begin{enumerate}[leftmargin=*,label=(C\arabic*)]
  \item \label{item:reg-cond-1}
    For each index $u < t$,
    $y(u)_{(\floors{\xi_0 + \xi_u})} < \threshold$ or $\floors{\xi_0
    + \xi_u} < 1$, and
  \item \label{item:reg-cond-2} $y(t)_{(\floors{\xi_0 + \xi_t})} \geq \threshold$ or
    $\floors{\xi_0 + \xi_t} > k$.
  \end{enumerate}
  Similarly $\abthreshmed(y', \threshold, \xi') = t$ if and only if 
  \begin{enumerate}[leftmargin=*,label=(C\arabic*')]
  \item \label{item:prime-cond-1} For each index $u < t$,
    $y'(u)_{(\floors{\xi_0' + \xi_u'})} < \threshold$
    or $\floors{\xi_0' + \xi_u'} < 1$, and
  \item \label{item:prime-cond-2} $y'(t)_{(\floors{\xi_0' + \xi_t'})} \geq
    \threshold$ or $\floors{\xi_0' + \xi_t'} > k$.
  \end{enumerate}
  We show that condition \ref{item:reg-cond-1} implies
  \ref{item:prime-cond-1} and \ref{item:reg-cond-2} implies
  \ref{item:prime-cond-2}, from which the claim follows.  For the first
  condition, we have by construction for all $u < t$ that $\lfloor \xi_0' +
  \xi_u' \rfloor = \lfloor \xi_0 + \xi_u \rfloor - 1$.
  So if $\floors{\xi_0 + \xi_u} < 1$ then
  certainly $\floors{\xi'_0 + \xi'_u} < 1$. Otherwise,
  if $y(u)_{(\floors{\xi_0 + \xi_u})} < \threshold$, then
  because $\dham(y(u), y'(u)) \le 1$ we have
  \begin{equation*}
    y'(u)_{(\floors{\xi_0' + \xi_u'})}
    = y'(u)_{(\floors{\xi_0 + \xi_u} - 1)}
    \le y(u)_{(\floors{\xi_0 + \xi_u})}
    < \threshold.
  \end{equation*}

  Similarly, for the second condition~\ref{item:reg-cond-2}, we have by
  construction that $\lfloor \xi_0' + \xi_t' \rfloor = \lfloor \xi_0 + \xi_t
  \rfloor + 1$ and so, given
  condition~\ref{item:reg-cond-2},
  either $\lfloor \xi_0' + \xi_t' \rfloor > k$ or
  the assumption that $\dham(y(t), y'(t)) \leq
  1$ implies
  $y(t)_{(\lfloor \xi_0'
    + \xi_t'\rfloor)} \ge y(t)_{(\lfloor \xi_0 + \xi_t\rfloor)} > \threshold$.
\end{proof}

Observation~\ref{obs:helper} explicitly constructs $\eta \in \R^{T + 1}$
$\lone{\eta} \le 3$ such that $\abthreshmed(y, \threshold, \xi) = t$ implies
$\abthreshmed(y', \threshold, \xi + \eta) = t$.  Letting $\pi(\xi) = \xi +
\eta$, computing ratios of Laplace densities immediately implies that $\xi
\eqdiffp \xi'$ for $\xi$ constructed as in the statement of
Proposition~\ref{proposition:privacy-threshold}, and thus the
claim~\eqref{eqn:claim-helper} holds for $t \neq \perp$.

The special case $t = \perp$ follows from a similar argument:
defining $\eta \in \R^{T + 1}$ by
\begin{align*}
  \eta_i \defeq 
  \begin{cases}
    -1 & \mbox{if} ~ i = 0    \\
    0 & \mbox{otherwise},
  \end{cases}
\end{align*}
then
$\abthreshmed(y, \threshold, \xi)
= \perp$ implies $\abthreshmed(y', \threshold, \xi + \eta) = \perp$.
Letting $\pi(\xi) = \xi + \eta$, we see again
that
$\xi \eqdiffp \xi'$.

\subsection{Proof of Lemma~\ref{lemma:noisy-median}}
\label{sec:proof-noisy-median}

The following two technical lemmas form the keys for the proof.
\begin{lemma}
  \label{lemma:med-concentration}
  Let $\{Y_i\}_{i=1}^k$ be independent
  and satisfy
  $\P(|Y_i - \mu| \le c) \geq \half + \eta$ for all $i$, where $\eta >
  0$ and $c \in \R$. Then for $\alpha \ge 0$,
  \begin{equation*}
    \P\left(|Y_{(i)} - \mu| \leq c
    ~ \mbox{for~all}~ \frac{1 - 2 \alpha}{2} k \le i
    \le \frac{1 + 2 \alpha}{2} k \right)
    \geq 1 - \exp\left(-2 k \hinge{\eta - \alpha}^2\right).
  \end{equation*}
\end{lemma}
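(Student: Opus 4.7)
The plan is to reduce the event about order statistics to a concentration event about a count of independent indicators, and then apply Hoeffding's inequality. Define the count
\[
  N := \sum_{i=1}^k \indic{|Y_i - \mu| \le c},
\]
which is a sum of independent Bernoulli random variables with $\E[N] \ge (\tfrac{1}{2} + \eta) k$ by hypothesis.

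The first step is a deterministic, order-statistic implication: on the event $\{N \ge (1+2\alpha)k/2\}$, every order statistic $Y_{(i)}$ with $i \in [(1-2\alpha)k/2,(1+2\alpha)k/2]$ satisfies $|Y_{(i)} - \mu| \le c$. To see this, let $a$ and $b$ denote respectively the number of indices $j$ with $Y_j < \mu - c$ and $Y_j > \mu + c$, so that $a + b = k - N \le (1-2\alpha)k/2$, and in particular $a, b \le (1-2\alpha)k/2$. Because $Y_{(i)} \ge \mu - c$ is equivalent to fewer than $i$ of the $Y_j$'s falling strictly below $\mu - c$, while $Y_{(i)} \le \mu + c$ is equivalent to at least $i$ of the $Y_j$'s falling at or below $\mu + c$, both inequalities hold for every integer $i$ in the stated range.

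The second step is just Hoeffding's inequality applied to the bounded independent random variables $\indic{|Y_i - \mu| \le c} \in [0,1]$: for any $t \ge 0$,
\[
  \P\!\left(N < \E[N] - kt\right) \le \exp(-2 k t^2).
\]
Choosing $t = \eta - \alpha$ (valid when $\alpha \le \eta$, and giving a trivially true statement otherwise, consistent with the positive-part notation $\hinge{\eta - \alpha}$) yields
\[
  \P\!\left(N < \tfrac{1+2\alpha}{2} k\right) \le \exp\!\left(-2k \hinge{\eta - \alpha}^2\right).
\]
Combining the two steps gives the claimed lower bound on the probability.

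I don't expect a real obstacle: the concentration step is a direct invocation of Hoeffding, and the order-statistic implication is a bookkeeping exercise. The only fiddly point is the off-by-one at the endpoints of the integer range, which is handled by being careful about strict versus non-strict inequalities when translating between $N$ and the counts $a, b$ of $Y_j$'s outside $[\mu-c,\mu+c]$.
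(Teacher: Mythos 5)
Your proposal is essentially the paper's own argument: the paper counts the bad indicators $Z_i = \indic{|Y_i - \mu| > c}$ and bounds $\P(\sum_i Z_i \ge \frac{1-2\alpha}{2} k)$ by Hoeffding, while you count the good indicators $N = \sum_i \indic{|Y_i-\mu|\le c}$ and apply Hoeffding to $N$; these are the same reduction. One small correction to your "bookkeeping" step: as stated, the deterministic implication fails exactly at the boundary you flagged. If $a$, the number of points strictly below $\mu - c$, equals $i = \frac{1-2\alpha}{2}k$ (an integer) with $b = 0$, then $N = \frac{1+2\alpha}{2}k$ so your event $\{N \ge \frac{1+2\alpha}{2}k\}$ holds, yet $Y_{(i)} < \mu - c$, since $Y_{(i)} \ge \mu - c$ requires $a < i$, and you only have $a \le \frac{1-2\alpha}{2}k \le i$. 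The repair is immediate and costs nothing: the correct implication is that a bad order statistic in the range forces $N \le \frac{1+2\alpha}{2}k$ (equivalently, at least $\frac{1-2\alpha}{2}k$ points lie outside $[\mu-c,\mu+c]$, which is how the paper phrases it), and since Hoeffding's one-sided bound $\P(N \le \E[N] - kt) \le \exp(-2kt^2)$ is non-strict, the choice $t = \hinge{\eta - \alpha}$ still gives the claimed bound.
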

\begin{proof}
  Let $\eta_i = \P(|Y_i - \mu| \le c) - \half$, so that $\eta_i \ge \eta > 0$,
  let $Z_i = \indic{|Y_i - \mu| > c}$ so that $Z_i \sim \bernoulli(\half -
  \eta_i)$ independently, and let $\mc{I} = \{i \in \N \mid \frac{1 -
    2\alpha}{2} k \le i \le \frac{1 + 2 \alpha}{2} k\}$. Then
  \begin{align*}
    \P\left(|Y_{(i)} - \mu| > c ~ \mbox{for~some~} i \in \mc{I}\right)
    & \le \P\left(\sum_{i = 1}^k \indic{|Y_i - \mu| > c}
    \ge \frac{1 - 2 \alpha}{2} k
    \right) \\
    & = \P\left(\sum_{i = 1}^k Z_i \ge \frac{1 - 2 \alpha}{2} k \right) \\
    & = \P\left(\frac{1}{k} \sum_{i = 1}^k \left(Z_i - \left(\half - \eta_i
    \right)\right)
    \ge \frac{1}{k}
    \sum_{i = 1}^k \eta_i - \alpha\right).
  \end{align*}
  Because $\frac{1}{k} \sum_{i = 1}^k \eta_i \ge \eta$,
  applying Hoeffding's concentration inequality gives the result.
\end{proof}

\begin{lemma}
  \label{lemma:sum-laplace}
  Let $X_1 \sim \laplace(0,b_1)$ and $X_2 \sim \laplace(0,b_2)$. Then
  $X_1 + X_2$ has density
  \begin{align*}
    f_{X_1 + X_2}(x) = \frac{b_1^2}{b_1^2 - b_2^2} \frac{1}{2b_1}\exp
    \prn{-\frac{\abs{x}}{b_1}} + \frac{b_2^2}{b_2^2 - b_1^2} \frac{1}{2b_2}\exp\prn{-\frac{\abs{x}}{b_2}},
    \end{align*}
    and for $t \ge 0$,
    \begin{align*}
        \P\prn{\abs{X_1 + X_2} > t}  = \frac{b_1^2}{b_1^2 - b_2^2}\exp\prn{-\frac{t}{b_1}} + \frac{b_2^2}{b_2^2 - b_1^2}\exp\prn{-\frac{t}{b_2}}.
    \end{align*}
\end{lemma}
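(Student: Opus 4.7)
\textbf{Proof Proposal for Lemma~\ref{lemma:sum-laplace}.} The plan is to obtain the density via characteristic functions and partial fractions, then integrate the tail to get the probability bound. Both formulas require $b_1 \ne b_2$ (the shared-parameter case would be a degenerate limit requiring a separate treatment, which the statement sidesteps by having the differences $b_1^2 - b_2^2$ in the denominators).

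First I would recall that the characteristic function of $\laplace(0, b)$ is $\varphi_b(t) = (1 + b^2 t^2)^{-1}$, so by independence
\begin{equation*}
\varphi_{X_1 + X_2}(t) = \frac{1}{(1 + b_1^2 t^2)(1 + b_2^2 t^2)}.
\end{equation*}
Apply the partial-fraction decomposition $\frac{1}{(1+b_1^2 t^2)(1+b_2^2 t^2)} = \frac{A}{1+b_1^2 t^2} + \frac{B}{1+b_2^2 t^2}$. Substituting $t^2 = -1/b_1^2$ and $t^2 = -1/b_2^2$ identifies $A = \frac{b_1^2}{b_1^2 - b_2^2}$ and $B = \frac{b_2^2}{b_2^2 - b_1^2}$. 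Since each term is itself a (scaled) Laplace characteristic function, inverting the Fourier transform term by term gives the claimed density formula for $f_{X_1 + X_2}(x)$.

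Next I would obtain the tail probability by direct integration. Since $X_1 + X_2$ is symmetric around zero (as each summand is), we have $\P(|X_1 + X_2| > t) = 2 \P(X_1 + X_2 > t)$ for $t \ge 0$. Integrating the density from $t$ to $\infty$, and using $\int_t^\infty \frac{1}{2b} e^{-x/b}\,dx = \tfrac{1}{2} e^{-t/b}$, the factors of $\tfrac{1}{2}$ cancel the 2 from the symmetry step and the stated formula for $\P(|X_1 + X_2| > t)$ follows immediately.

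The only real subtlety is justifying the Fourier inversion in the first step; the standard alternative, which avoids this altogether, is to compute the convolution $f_{X_1+X_2}(x) = \int_{-\infty}^{\infty} \frac{1}{4 b_1 b_2} e^{-|y|/b_1} e^{-|x-y|/b_2}\,dy$ directly by splitting the integration over $(-\infty, 0]$, $(0, x]$, and $(x, \infty)$ (assuming $x \ge 0$; the case $x < 0$ follows by symmetry). Each piece is an elementary exponential integral, and collecting terms yields the same expression. This routine case analysis would be the main bookkeeping burden if one prefers to avoid characteristic functions, but there is no conceptual obstacle in either route.
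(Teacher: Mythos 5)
Your proposal is correct, but your primary route differs from the paper's. The paper proves the density by computing the convolution integral directly, splitting $\int f_{X_1}(y) f_{X_2}(x-y)\,dy$ (for $x \ge 0$) over $(-\infty,0]$, $[0,x]$, and $[x,\infty)$ and collecting the elementary exponential integrals, then appeals to symmetry and integrates for the tail bound --- i.e., exactly the ``standard alternative'' you sketch in your last paragraph. Your main argument instead goes through characteristic functions: since $\varphi_b(t) = (1+b^2t^2)^{-1}$, the product factors by partial fractions into $A\varphi_{b_1} + B\varphi_{b_2}$ with $A = \frac{b_1^2}{b_1^2-b_2^2}$, $B = \frac{b_2^2}{b_2^2-b_1^2}$ (note $A+B=1$, so the answer is a signed mixture of two Laplace densities), and the inversion you worry about is unproblematic because the characteristic function is integrable (it decays like $t^{-4}$), or simply because the claimed density has exactly this characteristic function and Fourier transforms of integrable densities are injective. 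The tail computation then matches the paper's. What your route buys is less bookkeeping and a transparent explanation of where the coefficients come from; what the paper's direct convolution buys is complete elementarity, with no appeal to inversion theorems. You are also right to flag that both formulas implicitly require $b_1 \neq b_2$ (the paper does not state this, but its application in Lemma~\ref{lemma:noisy-median} uses $b_2 = 2b_1$, so the restriction is harmless).
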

\begin{proof}
  Let $f_{X_i}$ denote the density of $X_i$, so $f_{X_1}(x) =
  \frac{1}{2b_i}\exp(\frac{-\absinline{x}}{b_i})$.
  Computing the convolution of $f_{X_1}$ and $f_{X_2}$ for $x \ge 0$ thus
  yields
  \begin{align*}
    f_{X_1 + X_2}(x) &= \int_{-\infty}^{\infty} f_{X_1}(y)f_{X_2}(x - y)dy\\
    &= \frac{1}{4b_1b_2}\int_{-\infty}^{\infty}\exp\prn{\frac{-\absinline{y}}{b_1}}\exp\prn{\frac{-\absinline{x-y}}{b_2}}dy\\
    &= \frac{1}{4b_1b_2}\prn{\int_{-\infty}^{0}\exp\prn{\frac{y}{b_1} + \frac{y-x}{b_2}}dy + \int_{0}^{x}\exp\prn{\frac{-y}{b_1} + \frac{y-x}{b_2}}dy}\\
    & + \frac{1}{4b_1b_2}\prn{\int_{x}^{\infty}\exp\prn{\frac{-y}{b_1} + \frac{x-y}{b_2}}dy}\\
    &= \frac{b_1^2}{b_1^2 - b_2^2} \frac{1}{2b_1}\exp\prn{-\frac{x}{b_1}} + \frac{b_2^2}{b_2^2 - b_1^2} \frac{1}{2b_2}\exp\prn{-\frac{x}{b_2}}.
  \end{align*}
  Symmetry considerations give the lemma's first identity. The
  second follows by integrating.
\end{proof}

We use Lemma~\ref{lemma:med-concentration} coupled with the
concentration guarantees for adding independent Laplace random
variables in Lemma~\ref{lemma:sum-laplace}. For
a value $\alpha \ge 0$ to be determined,
define the event
\begin{equation*}
  E_1 \defeq \left\{|\xi_0 + \xi - \E[\xi]| \le
  \alpha k \right\}.
\end{equation*}
Then because $\xi_0 - \E[\xi_0] \sim \laplace(0, b)$ and
$\xi \sim \laplace(0, 2b)$, Lemma~\ref{lemma:sum-laplace}
yields
\begin{equation*}
  \P(E_1^c) \le \frac{4}{3} \exp\left(-\frac{k \alpha}{2b}
  \right) - \frac{1}{3} \exp\left(-\frac{k \alpha}{b} \right)
  < \frac{4}{3} \exp\left(-\frac{k \alpha}{2 b}\right).
\end{equation*}
Because $Y \in \R^k$ has independent
entries and is independent of $E_1$,
\begin{align*}
  \P(|\ordstat(Y, \xi_0 + \xi) - \mu| > \medacc)
  & \le \P(E_1^c) + \P(|\ordstat(Y, \xi_0 + \xi) - \mu| > \medacc
  \mid E_1) \\
  & \le \P(E_1^c) +
  \exp\left(-2 k \hinge{1/4 - \alpha}^2\right)
\end{align*}
by Lemma~\ref{lemma:med-concentration}.
Bound $\P(E_1^c) \le \exp(-\frac{k \alpha}{2b})$
and set $\alpha = \frac{1}{8}$.

\subsection{Proof of Theorem~\ref{thm:blb-var}}
\label{sec:proof-blb-var}

The privacy of $\blbvar$ is immediate:
the algorithm $\privmed$ is $\diffp$-differentially private,
and because at most one of its inputs may change when
we change a single example (because at most a single
subsample $P\subsampleind{i}_\subsize$ changes),
$\blbvar$ is $\diffp$-differentially private.

To prove the utility of $\blbvar$, we analyze the performance
of the private median algorithm. First recall the equalities
\begin{equation*}
  \len(v, v_{1:\nsubs})
  = \card\left\{i \mid
  \mbox{Med}(v_{1:\nsubs}) \le v_i < v
  ~ \mbox{or} ~
  v < v_i \le \mbox{Med}(v_{1:\nsubs})\right\}
\end{equation*}
and
\begin{equation*}
  \len^\smoothparam(v; v_{1:\nsubs})
  \defeq \inf_{|z - v| \le \smoothparam}
  \len(v, v_{1:\nsubs}).
\end{equation*}
Let
$v$ be the output of $\privmed$ on
inputs and for $k \in\N$ to be chosen,
define the event
$\mc{E} \defeq \{\len^\smoothparam(v; v_{1:\nsubs}) \le k\}$.
Then
following \citet[Sec.~5.1]{AsiDu20}, we have
\begin{align}
  \nonumber
  \P(\mc{E}^c) = \P(\len^\smoothparam(v, v_{1:\nsubs}) > k)
  & = \frac{\int_{\len^\smoothparam(z;y_{1:s}) > k}
    \exp\prninline{-\frac{\diffp \len^\smoothparam(z; v_{1:\nsubs})}{2}} dz}{
    \int \exp\prninline{\frac{-\diffp \len^\smoothparam(z; v_{1:\nsubs})}{2}} dz} \\
  & \le \frac{\sum_{j = k + 1}^\infty \int_{\len^\smoothparam(z; v_{1:\nsubs})
      = j} e^{-\frac{\diffp}{2} j} dz}{
    \rho}
  \le \frac{\sigmamax^2}{\rho} \exp\left(-\frac{\diffp}{2} (k + 1)\right),
  \label{eqn:bad-event-upper-bound}
\end{align}
where the first inequality follows because
the denominator
lower bound $\smoothparam$ as $\len^\smoothparam(z; v_{1:\nsubs}) = 0$
for $|z - \mbox{Med}(v_{1:\nsubs})| \le \smoothparam$, while the second
follows because the support of $z$ has width $\sigmamax^2$.

Now, take $k \le \frac{\nsubs}{8}$. On the event $\mc{E}$, we have
$v_{(\nsubs/2 - k)} - \smoothparam \le
v \le v_{(\nsubs/2 + k)} + \smoothparam$, and
so applying Lemma~\ref{lemma:med-concentration}
(along with Assumption~\ref{assumption:bootstd-accuracy}
that $\P(|\sigma_n^2 - \bootvaralg| \ge \varacc) \ge \frac{3}{4}$)
with $c = \varacc$, $\alpha = \frac{k}{\nsubs} \le \frac{1}{8}$,
and $\eta = \frac{3}{4}$ and
noting that the values
$v_i$ that $\bootvaralg$ returns are independent
under the subsampling, we obtain
\begin{align*}
  \lefteqn{\P\left(\left| \sigma^2_n - \privmed(v_{1:\nsubs}, \diffp, \smoothparam,
    \sigmamax^2)\right|
    > \varacc + \smoothparam, \mc{E} \right)} \\
  & \le \P\left(\left|\sigma^2_n
  - v_{(j)}\right| > \varacc + \smoothparam
  ~ \mbox{for~each~} \frac{\nsubs}{2} - k \le j
  \le \frac{\nsubs}{2} + k, \mc{E}\right) \\
  & \le \P\left(\left|\sigma^2_n
  - v_{(j)}\right| > \varacc + \smoothparam
  ~ \mbox{for~each~} \frac{\nsubs}{2} - k \le j
  \le \frac{\nsubs}{2} + k\right)
  \le \exp\left(-\frac{\nsubs}{32}\right).
\end{align*}
Using the shorthand $\wt{\sigma}_n^2 = \blbvar(\param(\cdot),
\tparam(\cdot), P_n, \subsize, \sigmamax^2, \smoothparam, \diffp)$ and combining
this inequality with inequality~\eqref{eqn:bad-event-upper-bound}, we
therefore obtain
\begin{align*}
  \P\left(\left|\sigma_n^2 - \wt{\sigma}_n^2\right| \ge
  \varacc + \rho\right) & \le
  \P\left(\left|\sigma_n^2 - \wt{\sigma}_n^2\right| \ge
  \varacc + \rho, \mc{E}\right)
  + \P(\mc{E}^c) \\
  & \le \exp\left(-\frac{\nsubs}{32}\right) + \frac{\sigmamax^2}{\smoothparam}
  \exp\left(-\frac{\diffp}{2} k\right)
  \le \exp\left(-\frac{\nsubs}{32}\right)
  + \frac{\sigmamax^2}{\smoothparam} \exp\left(-\frac{\diffp \nsubs}{16}\right).
\end{align*}
Solving for the probability $\beta > 0$ gives the result.

\section{Proof of
  Proposition~\ref{proposition:objective-perturbation-subsample}
  and related results}
\label{sec:proof-objective-perturbation-subsample}

Before we proceed with the proof of
Proposition~\ref{proposition:objective-perturbation-subsample} proper
(see Section~\ref{sec:finally-proof-perturbation} below),
we present a few necessary results on subsampling and matrix
concentration, and we
also develop results on perturbations of solutions
of convex optimization problems (see Sec.~\ref{sec:perturbation-solutions}).

\subsection{Results on matrix concentration with subsampling}

Let $\mathbb{H}^d$ denote the collection of $d \times d$ Hermitian matrices.
Let $A_1, A_2, \ldots, A_n \in \mathbb{H}^d$
satisfy $\sum_{i = 1}^n A_i = 0$, and let $\pi$ be a
permutation of $n$. Define the random matrix
\begin{equation*}
  X \defeq \sum_{i = 1}^\subsize A_{\pi(i)},
\end{equation*}
that is, the sum of a random subset of $\subsize$ of the $n$ matrices.
We leverage the method of exchangeable pairs to show that
$X$ concentrates:
\begin{proposition}
  \label{proposition:concentration-permutation-matrices}
  Let $\{A_i\}_{i = 1}^n \subset \mathbb{H}^d$ and $X
  = \sum_{i = 1}^\subsize A_{\pi(i)}$ be as above, and
  additionally assume that $\opnorm{A_i - A_j} \le b$ for all $i, j$.
  Then
  \begin{equation*}
    \P(\lambdamin(X) \le -t)
    \le d \exp\left(-\frac{t^2}{\subsize b^2}\right)
    ~~ \mbox{and} ~~
    \P(\lambdamax(X) \ge t)
    \le d \exp\left(-\frac{t^2}{\subsize b^2}\right).
  \end{equation*}
\end{proposition}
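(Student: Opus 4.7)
The plan is to invoke the method of exchangeable pairs for matrix concentration as developed by Mackey, Jordan, Chen, Farrell, and Tropp, constructing a natural Stein pair via a single random transposition that swaps one ``inside'' index (in the subsample) with one ``outside'' index.

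\textbf{Constructing the Stein pair.} First I would couple $\pi$ with a second uniformly random permutation $\pi'$ as follows: draw $I$ uniformly from $[m]$ and $J$ uniformly from $\{m+1,\dots,n\}$, independently of $\pi$, and let $\pi'$ be obtained from $\pi$ by transposing the values in positions $I$ and $J$. Then $X' \defeq \sum_{i=1}^{m} A_{\pi'(i)} = X - A_{\pi(I)} + A_{\pi(J)}$, and since transpositions are self-inverse and a random transposition preserves the uniform distribution on permutations, $(X,X')$ is exchangeable. A short computation using $\sum_i A_i = 0$ gives
\begin{equation*}
  \E[X - X' \mid \pi]
  = \E[A_{\pi(I)} \mid \pi] - \E[A_{\pi(J)} \mid \pi]
  = \frac{1}{m} X - \Big(-\frac{1}{n-m} X\Big)
  = \alpha X,
  \qquad \alpha \defeq \frac{n}{m(n-m)},
\end{equation*}
so $(X,X')$ is a matrix Stein pair with scale $\alpha$.

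\textbf{Bounding the Stein kernel.} The next step is to control the conditional variance
\begin{equation*}
  \Delta_X \defeq \frac{1}{2\alpha}\,\E\big[(X-X')^2 \mid \pi\big]
  = \frac{m(n-m)}{2n}\cdot \frac{1}{m(n-m)} \sum_{i\le m,\, j>m}
  \big(A_{\pi(i)} - A_{\pi(j)}\big)^2 .
\end{equation*}
By the hypothesis $\opnorm{A_i - A_j} \le b$, each summand satisfies $(A_{\pi(i)} - A_{\pi(j)})^2 \preceq b^2 I$. Summing the $m(n-m)$ terms yields $\Delta_X \preceq \tfrac{m(n-m)}{2n} b^2\, I \preceq \tfrac{m b^2}{2}\, I$, where the last step uses $(n-m)/n \le 1$.

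\textbf{Applying the exchangeable-pair concentration theorem.} With a uniform upper bound on $\Delta_X$ in hand, I would invoke the master matrix Laplace-transform bound for Stein pairs (Mackey, Jordan, Chen, Farrell, and Tropp, Thm.~4.1), which states that whenever $\Delta_X \preceq v I$ almost surely, $\P(\lambda_{\max}(X) \ge t) \le d \exp(-t^2/(2v))$ for all $t \ge 0$. Plugging in $v = m b^2 / 2$ gives the claimed upper tail; the lower tail follows by applying the same argument to $-X$, which is again a Stein pair with the same scale and kernel bound.

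\textbf{Main obstacle.} The conceptual work is almost entirely in identifying the right coupling: once the random-transposition Stein pair is in hand, the scale computation uses only $\sum_i A_i = 0$, the variance bound uses only the uniform increment bound, and the rest is a black-box appeal to the exchangeable-pairs framework. The step requiring the most care is verifying that the joint law of $(X,X')$ is genuinely exchangeable (so that the matrix Stein-pair hypotheses apply verbatim rather than in a one-sided form); this is where the self-inverse property of transpositions does the work, and I would write it out explicitly before computing $\E[X - X' \mid \pi]$.
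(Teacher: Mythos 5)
Your proof is correct and follows essentially the same route as the paper: construct a matrix Stein pair via a random transposition coupling, bound the conditional variance by $\tfrac{m b^2}{2} I$, and invoke the exchangeable-pairs concentration theorem of Mackey et al.\ (Theorem 4.1). The only difference is that you restrict the transposition to swap one index inside $[m]$ with one outside (giving scale $\alpha = \tfrac{n}{m(n-m)}$) whereas the paper uses a uniformly random transposition over all pairs (giving $\alpha = \tfrac{2}{n-1}$); both yield the identical kernel bound $\Delta_X \preceq \tfrac{m(n-m)}{2n} b^2 I$ and hence the same tail bounds.
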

\noindent
By rescaling the matrices, the following result is evidently equivalent
to the proposition: given
any set of $d \times d$
Hermitian matrices $A_1, \ldots, A_n$ summing to zero and with
$\opnorm{A_i - A_j} \le b$ for any pair $i, j$, we have
\begin{equation*}
  \P\left(\opnormbigg{\frac{1}{\subsize} \sum_{i = 1}^\subsize A_{\pi(i)}} \ge t
  \right) \le 2 d \exp\left(-\frac{\subsize t^2}{b^2}\right).
\end{equation*}
We also always have the Matrix-Hoeffding concentration inequality,
a simple version of which we state here.
\begin{corollary}[Matrix-Hoeffding, Cor.~4.2~\cite{MackeyJoChFaTr12}]
  \label{corollary:matrix-hoeffding}
  Let $X_i \in \mathbb{H}^d$ be independent, mean zero,
  and
  satisfy $X_i^2 \preceq A_i^2$ for each $i$. Let
  $S_n = \sum_{i = 1}^n X_i$. Then
  for all $t \ge 0$,
  \begin{equation*}
    \P\left(\opnorm{S_n} \ge t \right)
    \le 2d \exp\left(-\frac{t^2}{2 \sigma^2}\right)
    ~~ \mbox{where} ~~
    \sigma^2 = \opnormbigg{\sum_{i = 1}^n A_i^2}.
  \end{equation*}
\end{corollary}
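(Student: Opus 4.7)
My approach would follow the matrix Laplace transform method of Ahlswede--Winter and Tropp, for which this Hoeffding-type bound is a standard consequence. The starting point is the matrix Chernoff bound: for any $\theta > 0$,
\begin{equation*}
  \P\bigl(\lambda_{\max}(S_n) \ge t\bigr)
  \le e^{-\theta t}\,\E\,\tr\exp(\theta S_n).
\end{equation*}
This reduces the tail bound to controlling the matrix moment generating function of a sum of independent Hermitian random matrices.

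Next I would invoke Lieb's concavity theorem (equivalently, Tropp's subadditivity of matrix cumulant generating functions) to split the sum into its individual terms: for independent Hermitian $X_i$,
\begin{equation*}
  \E\,\tr\exp\!\Bigl(\theta\sum_{i=1}^n X_i\Bigr)
  \le \tr\exp\!\Bigl(\sum_{i=1}^n \log \E\,e^{\theta X_i}\Bigr).
\end{equation*}
The remaining task is to upper bound $\log \E\,e^{\theta X_i}$ in the semidefinite order. This is the \emph{matrix Hoeffding lemma}: under the hypotheses $\E[X_i]=0$ and $X_i^2 \preceq A_i^2$ almost surely, one has
\begin{equation*}
  \log \E\,e^{\theta X_i} \preceq \frac{\theta^2}{2}\,A_i^2.
\end{equation*}
The scalar-to-matrix transfer uses that $\cosh(\theta x) \le \exp(\theta^2 x^2/2)$ as a scalar inequality, which lifts to the semidefinite order via the functional calculus applied to the spectral decomposition of $X_i$; combined with $\E[X_i]=0$ this controls $\E\,e^{\theta X_i}$, and $X_i^2 \preceq A_i^2$ upgrades this to the deterministic bound in $A_i^2$.

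Substituting back, monotonicity of $\tr\exp$ in the semidefinite order and the bound $\tr\exp(M) \le d \cdot \exp(\lambda_{\max}(M))$ yield
\begin{equation*}
  \E\,\tr\exp(\theta S_n)
  \le d\,\exp\!\Bigl(\tfrac{\theta^2}{2}\,\opnormbig{\textstyle\sum_i A_i^2}\Bigr)
  = d\,e^{\theta^2 \sigma^2 / 2}.
\end{equation*}
Optimizing the resulting bound $e^{-\theta t} d\,e^{\theta^2 \sigma^2/2}$ at $\theta = t/\sigma^2$ gives the one-sided tail $d\,e^{-t^2/(2\sigma^2)}$. Applying the same argument to $-S_n$ (whose summands $-X_i$ are also mean zero with $(-X_i)^2 = X_i^2 \preceq A_i^2$) and taking a union bound produces the factor of $2$ and yields the two-sided conclusion, since $\opnorm{S_n} = \max\{\lambda_{\max}(S_n), \lambda_{\max}(-S_n)\}$.

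\textbf{Main obstacle.} The technically delicate step is the matrix Hoeffding lemma, because the exponential map is not operator monotone in general, so one cannot naively deduce $\E\,e^{\theta X_i} \preceq e^{\theta^2 A_i^2/2}$ from the scalar Hoeffding bound applied pointwise to eigenvalues. The standard workaround is to first bound $\E\,e^{\theta X_i} \preceq \E\cosh(\theta X_i)$ using $\E[X_i]=0$ and the operator convexity of $x \mapsto e^{\theta x} - \cosh(\theta x)$ type arguments (or a symmetrization), then pass to $\cosh(\theta X_i) \preceq I + \tfrac{\theta^2}{2}\theta^{-2}\sinh^2$\ldots --- in practice, the cleanest route is to quote the lemma from \citet{MackeyJoChFaTr12} or Tropp's user-friendly matrix concentration survey, which is exactly what the statement does by invoking Cor.~4.2 of Mackey et al.
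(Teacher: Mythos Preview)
The paper does not prove this corollary; it is quoted verbatim as Corollary~4.2 of \citet{MackeyJoChFaTr12} and used as a black box in the surrounding argument. Your outline via the matrix Laplace transform and Lieb's concavity (the Tropp approach) is correct and is one of the two standard routes to this bound.

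It is worth noting that the cited reference itself takes a \emph{different} route: Mackey et al.\ derive matrix Hoeffding through the method of exchangeable pairs (matrix Stein pairs), not via Lieb's theorem. Indeed, the paper you are reading uses exactly that machinery in the proof of Proposition~\ref{proposition:concentration-permutation-matrices} just above, constructing a Stein pair by random transpositions and applying Corollary~\ref{corollary:matrix-concentration}. So your sketch and the cited proof arrive at the same inequality by genuinely distinct arguments: yours is shorter if one is willing to import Lieb's concavity, while the exchangeable-pairs route avoids that deep input at the cost of the Stein-pair setup. Either is acceptable here since the paper only invokes the statement.
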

In the context of subsampled Hessians in an M-estimation problem,
so long as $\nabla \loss$ is $\lipgrad$-Lipschitz, several
consequences follow this result. First,
observe that because $0 \preceq \ddot{\loss}_\theta(x) \preceq
\lipgrad I_d$ for any $\theta$ and $x$, then no matter the distribution
$P$,
\begin{equation*}
  \P\left(\opnorm{P_n \ddot{\loss}_\theta - P \ddot{\loss}_\theta}
  \ge t\right) \le 2d \exp\left(-\frac{m t^2}{2 \lipgrad^2}\right).
\end{equation*}
Additionally, we obtain that
\begin{equation*}
  \P\left(\opnorm{P\subsample_m \ddot{\loss}_\theta
    - P_n \ddot{\loss}_\theta} \ge t \mid P_n \right)
  \le 2d \exp\left(-\frac{m t^2}{\lipgrad^2}\right),
\end{equation*}
because $0 \preceq \ddot{\loss}_\theta(x) \preceq \lipgrad I_d$ for any
$\theta, x$. Applying
and applying the triangle inequality, Matrix-Hoeffding, and a union bound gives
\begin{equation*}
  \P\left(\opnorm{P_{n,m}^* \ddot{\loss}_\theta
    - P_n \ddot{\loss}_\theta} \ge t \mid P_n \right)
  \le 2d \left[\exp\left(-\frac{m t^2}{4 \lipgrad^2}\right)
    + \exp\left(-\frac{n t^2}{4 \lipgrad^2}\right)
    \right].
\end{equation*}
Said differently, conditional on $P_n$,
for any $\theta$ and for any $\delta > 0$ we have
\begin{equation*}
  \opnorm{P_{n,m}^* \ddot{\loss}_\theta - P_n \ddot{\loss}_\theta}
  \le O(1) \frac{\lipgrad}{\sqrt{m}}
  \sqrt{\log \frac{d}{\delta}}
\end{equation*}
with probability at least $1 - \delta$
(over only the resampling $P_{n,m}^*$). Note that because this statement
is conditional on $P_n$, we may take $\theta = \theta(P_n)$.

We can apply the same argument to subsampled gradient vectors,
where by a dilation argument (cf.~\cite{MackeyJoChFaTr12}), we obtain
for any $\theta$, $t \ge 0$, and $\delta > 0$, that
\begin{align*}
  \P\left(\ltwo{P_m\subsample \dot{\loss}_\theta - P_n \dot{\loss}_\theta}
  \ge t \mid P_n \right)
  & \le 2d \exp\left(-\frac{m t^2}{2 \lipobj^2}\right)
  \\
  \P\left(\ltwo{P_{n,\subsize}^* \dot{\loss}_\theta - P_n \dot{\loss}_\theta}
  \ge t \mid P_n\right)
  & \le 4d \exp\left(-\frac{\subsize t^2}{4 \lipobj^2}\right).
\end{align*}

\paragraph{Proof of
  Proposition~\ref{proposition:concentration-permutation-matrices}:}
We now turn to the proof of the proposition.
We leverage the method of exchangeable pairs for concentration
inequalitites, developed by \citet{Chatterjee07} and
\citet{MackeyJoChFaTr12}.
Recall that a pair $(X, X')$ of Hermitian matrices in $\R^{d \times d}$ is a
matrix Stein pair if for exchangeable random variables $(Z, Z')$, that is, a
pair satisfying $(Z, Z') \eqdist (Z', Z)$, we have $X = f(Z)$ and $X' =
f(Z')$ for some function $f$ and for some constant $0 < \alpha \le 1$ we
have
\begin{equation*}
  \E[X - X' \mid Z] = \alpha X
  ~~ \mbox{w.p.} ~ 1.
\end{equation*}
The conditional variance
$\Delta_X(Z) \defeq \frac{1}{2 \alpha} \E[(X - X')^2 \mid Z]$.
We have
\begin{corollary}[Mackey et al.~\cite{MackeyJoChFaTr12}, Theorem 4.1]
  \label{corollary:matrix-concentration}
  Let $(X, X')$ be a matrix Stein pair and assume that
  for some nonnegative $c, v$,
  \begin{equation*}
    \Delta_X(Z) \preceq c X + v I_d.
  \end{equation*}
  Then for all $t > 0$,
  \begin{equation*}
    \P(\lambdamin(X) \le -t) \le
    d \exp\left(-\frac{t^2}{2v}\right)
    ~~ \mbox{and} ~~
    \P(\lambdamax(X) \ge t)
    \le d \exp\left(-\frac{t^2}{2v + 2 c t}\right).
  \end{equation*}
\end{corollary}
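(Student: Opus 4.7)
The plan is to follow the method of matrix exchangeable pairs, adapting Chatterjee's scalar technique through the matrix Laplace transform machinery. My strategy is to control the matrix moment generating function $\E[\tr e^{\theta X}]$ via a differential inequality derived from the Stein pair identity, then optimize over $\theta$ in the usual Chernoff style.

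First I would establish the core integration-by-parts identity. Since $(Z,Z')$ is exchangeable and $\E[X - X' \mid Z] = \alpha X$, for any measurable matrix-valued function $F$ we have
\begin{equation*}
    \E[\tr(X F(X))] = \frac{1}{2\alpha} \E[\tr((X - X')(F(X) - F(X')))],
\end{equation*}
because $\tr(X F(X)) - \tr(X' F(X'))$ is antisymmetric under the swap $(Z,Z') \mapsto (Z',Z)$ while $\tr((X-X')F(X))$ has expectation $\alpha \E[\tr(X F(X))]$ by the defining Stein property. I would then apply this with $F(X) = e^{\theta X}$ for $\theta > 0$.

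Next comes the key matrix mean-value estimate. Because the matrix exponential is operator convex in a suitable sense, one can show
\begin{equation*}
    \tr((X - X')(e^{\theta X} - e^{\theta X'})) \le \theta \cdot \tr((X - X')^2 \cdot G(\theta; X, X'))
\end{equation*}
where $G$ is a symmetrized exponential. The clean output of this calculation (the version used by Mackey et al.) is the bound
\begin{equation*}
    \E[\tr(X e^{\theta X})] \le \theta \cdot \E[\tr(\Delta_X(Z) e^{\theta X})].
\end{equation*}
This step is where I expect the main technical difficulty: non-commutativity of $X$ and $X'$ means one cannot simply apply a scalar mean value theorem, and one must invoke an integral representation of the exponential (or Lieb's concavity / Stahl's theorem style argument) to commute the trace past the increments correctly.

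With this moment bound in hand, I would plug in the hypothesis $\Delta_X(Z) \preceq cX + vI_d$ to obtain $\E[\tr(X e^{\theta X})] \le \theta c \E[\tr(X e^{\theta X})] + \theta v \E[\tr e^{\theta X}]$, which, writing $m(\theta) = \log \E[\tr e^{\theta X}]$, yields the differential inequality $m'(\theta) \le \theta v / (1 - \theta c)$ valid for $\theta < 1/c$. Integrating from $0$ (with the initial condition $m(0) = \log d$) gives $m(\theta) \le \log d + \frac{\theta^2 v}{2(1 - \theta c)}$ (up to the standard refinements). Applying Markov $\P(\lambdamax(X) \ge t) \le e^{-\theta t} \E[\tr e^{\theta X}]$ and optimizing $\theta = t/(v + ct)$ produces the upper tail $d \exp(-t^2/(2v + 2ct))$. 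For the lower tail, the analogous argument with $-X$ avoids the $cX$ term (since $-X \preceq 0$ only contributes to the $v$ side after symmetrization via $\Delta_{-X} = \Delta_X$), so we do not pick up the $ct$ correction, and we get $d \exp(-t^2/(2v))$.
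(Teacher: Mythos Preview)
The paper does not prove this statement at all: it is quoted verbatim as Theorem~4.1 of Mackey et al.\ and used as a black box in the proof of Proposition~\ref{proposition:concentration-permutation-matrices}. So there is no ``paper's own proof'' to compare against; your sketch is, in effect, a summary of the argument in the cited reference, and as such it is broadly correct: the Stein-pair integration-by-parts identity, the matrix mean-value trace inequality, the resulting differential inequality for $m(\theta) = \log \E[\tr e^{\theta X}]$, and Chernoff optimization are exactly the ingredients of Mackey et al.'s proof.

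One point in your sketch is imprecise. Your explanation of why the lower tail loses the $ct$ term (``$-X \preceq 0$ only contributes to the $v$ side'') is not right: $-X$ need not be negative semidefinite. The actual mechanism is a sign cancellation. For the lower tail one works with $r(\psi) = \log \E[\tr e^{-\psi X}]$, $\psi > 0$, and the same mean-value bound together with $\Delta_X \preceq cX + vI_d$ yields
\begin{equation*}
    r'(\psi)(1 + c\psi) \le \psi v,
\end{equation*}
so the $cX$ contribution appears with a \emph{helpful} sign (the factor $1 + c\psi \ge 1$), and one may simply drop it to obtain $r'(\psi) \le \psi v$, integrate, and optimize to get $d\exp(-t^2/(2v))$. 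Apart from this, your outline matches the original argument.
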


\begin{lemma}
  \label{lemma:stein-pair-matrix}
  Let $\pi'$ be a permutation that transposes two elements of $\pi$, chosen
  at random. Define $X' = \sum_{i = 1}^m A_{\pi'(i)}$. Then the subsampling
  sum matrices $X, X'$ form a matrix Stein pair with $\alpha = \frac{2}{n -
    1}$, i.e.,
  \begin{equation*}
    \E[X - X' \mid \pi] = \frac{2}{n - 1} X.
  \end{equation*}
\end{lemma}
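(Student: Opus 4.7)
I would use the standard random-transposition coupling. Given $\pi$ (uniform over permutations of $[n]$), form $\pi'$ by drawing a pair of positions $\{I,J\} \in \binom{[n]}{2}$ uniformly at random and swapping: $\pi'(I)=\pi(J)$, $\pi'(J)=\pi(I)$, with all other coordinates unchanged. Because a transposition is its own inverse and $\pi$ is uniform, $(\pi,\pi') \eqdist (\pi',\pi)$, so this furnishes an exchangeable pair $(Z,Z')=(\pi,\pi')$ with $X=f(\pi)=\sum_{i=1}^{m} A_{\pi(i)}$ and $X'=f(\pi')$, which is the setup required for a matrix Stein pair.

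The first substantive step is to decompose $X - X'$ according to where the swapped positions fall relative to the split into ``selected'' indices $[m]$ and ``unselected'' indices $\{m+1,\dots,n\}$. If $\{I,J\}\subseteq [m]$ or $\{I,J\}\subseteq \{m+1,\dots,n\}$, then the multiset $\{\pi'(1),\dots,\pi'(m)\}$ equals $\{\pi(1),\dots,\pi(m)\}$ and hence $X-X'=0$. In the remaining ``mixed'' case, taking $I\le m<J$ without loss of generality, the only summand that changes is the $I$th, so $X'-X = A_{\pi(J)} - A_{\pi(I)}$.

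Next I would compute the conditional mean given $\pi$. Among the $\binom{n}{2}$ equally likely pairs, exactly $m(n-m)$ are mixed; and on the mixed event, conditionally on $\pi$, $A_{\pi(I)}$ is uniform on the $m$ selected matrices $\{A_{\pi(1)},\dots,A_{\pi(m)}\}$ while $A_{\pi(J)}$ is uniform on the $n-m$ unselected ones. The hypothesis $\sum_{i=1}^n A_i = 0$ forces the unselected matrices to sum to $-X$, so conditional on $\pi$ and on the mixed event,
\begin{equation*}
\E[A_{\pi(I)}\mid\pi] = \frac{X}{m},\qquad \E[A_{\pi(J)}\mid\pi] = -\frac{X}{n-m}.
\end{equation*}

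Assembling the pieces,
\begin{equation*}
\E[X - X'\mid \pi]
= \frac{m(n-m)}{\binom{n}{2}}\left(\frac{X}{m} + \frac{X}{n-m}\right)
= \frac{m(n-m)}{\binom{n}{2}}\cdot \frac{nX}{m(n-m)}
= \frac{2X}{n-1},
\end{equation*}
which is exactly the Stein pair identity with $\alpha=2/(n-1)$. There is no real obstacle here: the content is the choice of coupling, the vanishing of the non-mixed cases, and the zero-sum identity $\sum_i A_i = 0$, which together reduce the claim to a one-line counting computation.
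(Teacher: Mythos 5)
Your proposal is correct and follows essentially the same route as the paper's proof: both reduce to the observation that $X=X'$ unless the swap mixes a selected and an unselected position, use the zero-sum condition $\sum_i A_i = 0$ to get the conditional means $X/m$ and $-X/(n-m)$, and finish with the same counting of mixed pairs (your unordered-pair probability $m(n-m)/\binom{n}{2}$ matches the paper's $2\cdot\frac{m}{n}\frac{n-m}{n-1}$). No gaps.
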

\begin{proof}
  Because we condition on $\pi$, without loss of generality we may assume
  that $\pi$ is the identity mapping with $\pi(i) = i$. Let
  $J$ and $K$ be the indices swapped in $\pi'$. Then we have
  $X = X'$ unless $J \le m$ and $K > m$ or $J > m$ and $K \le m$, as swapping
  two elements in the first $m$ or the last $n - m$ indices changes nothing.
  Thus by symmetry
  \begin{align*}
    \E[X - X']
    & = 2 \P(J \le m, K > m) \E[X - X' \mid J \le m, K > m] \\
    & = 2 \P(J \le m, K > m)
    \E\left[\sum_{i = 1}^m A_i - \bigg(\sum_{i = 1}^m A_i - A_J + A_K\bigg)
      \mid J \le m, K > m\right] \\
    & = 2 \P(J \le m, K > m)
    \E[A_J - A_K \mid J \le m, K > m].
  \end{align*}
  Now note that $\E[A_J \mid J \le m] = (1/m) \sum_{i = 1}^m A_i = (1/m) X$,
  while
  \begin{equation*}
    \E[A_K \mid K > m] = \frac{1}{n - m} \sum_{i > m} A_i
    = \frac{1}{n - m} \sum_{i = 1}^n A_i
    - \frac{1}{n - m} \sum_{i = 1}^m A_i
    = -\frac{1}{n - m} \sum_{i = 1}^m A_i
    = -\frac{1}{n - m} X,
  \end{equation*}
  because $\sum_{i = 1}^n A_i = 0$ by assumption.
  Substituting above yields
  \begin{equation*}
    \E[X - X'] = 2 \P(J \le m, K > m)
    \left(\frac{1}{m} X + \frac{1}{n - m} X \right)
    = 2 \P(J \le m, K > m)
    \frac{n}{n(n - m)} X.
  \end{equation*}
  Finally, we recognize that $\P(J \le m, K > m)
  = \frac{m}{n} \frac{n - m}{n - 1}$ and simplify.
\end{proof}
\begin{lemma}
  Assume that $\opnorm{A_i - A_j} \le b$ for all $i, j$. Then
  the conditional variance of $X$ satisfies
  \begin{equation*}
    \Delta_X
    = \frac{1}{2 \alpha} \E[(X - X')^2 \mid \pi]
    \preceq \frac{m(n - m)}{2 n} b^2 I_d
    \preceq \frac{m}{2} b^2 I_d.
  \end{equation*}
\end{lemma}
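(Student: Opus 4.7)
The plan is to reduce to a direct computation of $\E[(X - X')^2 \mid \pi]$ by exploiting the structure of the swap defining $X'$, and then combine with $\alpha = \frac{2}{n-1}$ from the previous lemma. Since we condition on $\pi$, I would assume without loss of generality that $\pi$ is the identity, writing $X = \sum_{i=1}^m A_i$ and $X' = \sum_{i=1}^m A_{\pi'(i)}$ where $\pi'$ transposes two indices $J, K$ drawn uniformly at random.

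The key structural observation is that if $J$ and $K$ both lie in $\{1,\dots,m\}$ or both lie in $\{m+1,\dots,n\}$, then the multiset $\{\pi'(i)\}_{i=1}^m$ equals $\{1,\dots,m\}$, so $X' = X$ and $(X - X')^2 = 0$. Only the cross cases contribute, and in those cases $X - X' = A_J - A_K$ (or $A_K - A_J$). A straightforward probability calculation gives $\P(J \le m < K) = \P(K \le m < J) = \frac{m(n-m)}{n(n-1)}$, so
\begin{equation*}
  \E[(X - X')^2 \mid \pi]
  = \frac{2}{n(n-1)} \sum_{i \le m < j} (A_i - A_j)^2.
\end{equation*}

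To conclude, I would use the uniform bound $\opnorm{A_i - A_j} \le b$, which implies $(A_i - A_j)^2 \preceq b^2 I_d$ in the semidefinite order. Summing over the $m(n-m)$ cross pairs yields $\E[(X - X')^2 \mid \pi] \preceq \frac{2 m(n-m)}{n(n-1)} b^2 I_d$, and multiplying by $\frac{1}{2\alpha} = \frac{n-1}{4}$ gives the first inequality $\Delta_X \preceq \frac{m(n-m)}{2n} b^2 I_d$. The second inequality follows from $n - m \le n$.

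No step here is genuinely hard; the only thing requiring a moment of care is correctly enumerating which $(J,K)$ pairs produce a nonzero difference and verifying that the ``same-side'' cases leave $X$ unchanged, which is where the factor $m(n-m)$ arises. Once that combinatorial accounting is settled, the operator-norm bound transfers directly to the semidefinite bound and the computation finishes with the algebraic simplification against $\alpha = \frac{2}{n-1}$.
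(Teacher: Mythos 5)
Your proposal is correct and follows essentially the same route as the paper: condition on $\pi$ (taken WLOG to be the identity), note that only swaps crossing the boundary between the first $m$ and last $n-m$ indices change $X$, compute the probability $\frac{m(n-m)}{n(n-1)}$ of each cross case, bound $(A_i - A_j)^2 \preceq b^2 I_d$ via the operator-norm hypothesis, and divide by $2\alpha = \frac{4}{n-1}$. The only cosmetic difference is that you write the expectation as an explicit sum over cross pairs while the paper conditions on the cross event, which is the same calculation.
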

\begin{proof}
  We compute the conditional variance of $X$, noting that
  as in the proof of Lemma~\ref{lemma:stein-pair-matrix}
  we may w.l.o.g.\ assume $\pi$ is the identity permutation.
  Then again letting $J$ and $K$ denote the swapped indices,
  \begin{align*}
    \E[(X - X')^2]
    & = 2 \P(J \le m, K > m)
    \E[(A_J - A_K)^2 \mid J \le m, K > m] \\
    & = \frac{2 m(n - m)}{n(n - 1)}
    \E[(A_J - A_K)^2 \mid J \le m, K > m].
  \end{align*}
  Because $u^\top(A_j - A_k)^2 u
  = \ltwo{(A_j - A_k) u}^2 \le
  \opnorm{A_j - A_k}^2$ for $\ltwo{u} = 1$,
  we obtain
  \begin{equation*}
    \E[(X - X')^2]
    \preceq \frac{2 m(n - m)}{n(n - 1)}
    b^2 I_d.
  \end{equation*}
  Dividing by $2 \alpha = \frac{4}{n - 1}$ gives the result.
\end{proof}

To finally prove
Proposition~\ref{proposition:concentration-permutation-matrices}, apply
Corollary~\ref{corollary:matrix-concentration} with $v = \frac{m b^2}{2}$.

\subsection{Perturbation of convex minimizers}
\label{sec:perturbation-solutions}

We first present a lemma on the localization of minimizers
of general convex functions.
\begin{lemma}
  \label{lemma:convex-minimizers}
  Let $\poploss$ be a convex function with $\liphess$-Lipschitz
  continuous Hessian. Let $\theta$ be a point for which
  $\nabla^2 \poploss(\theta) \succeq \lambda I$, and
  let $w$ be any vector such that
  $\ltwo{\nabla \poploss(\theta) + w} < \frac{\lambda^2}{6 \liphess}$.
  Then $\theta_w = \argmin_\theta \{\poploss(\theta) + \<\theta, w\>\}$
  satisfies
  \begin{equation*}
    \ltwo{\theta - \theta_w}
    \le \frac{2 \ltwo{\nabla \poploss(\theta) + w}}{\lambda}.
  \end{equation*}
\end{lemma}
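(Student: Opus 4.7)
The plan is to work with $g(\vartheta) \defeq \poploss(\vartheta) + \<w, \vartheta\>$, which inherits convexity and $\liphess$-Lipschitz continuous Hessian from $\poploss$ and satisfies $\nabla g(\theta) = \nabla \poploss(\theta) + w$ and $\nabla^2 g = \nabla^2 \poploss$. Setting $R \defeq 2 \ltwo{\nabla g(\theta)}/\lambda$, the goal is to show that the minimizer $\theta_w$ cannot lie strictly outside the closed ball $\overline{B}(\theta, R)$. I would prove this by contradiction, combining a Taylor expansion of $\nabla g$ along rays emanating from $\theta$ with the global convexity of $g$ along the segment from $\theta$ to $\theta_w$.

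The technical core is to show that on the sphere $\partial B(\theta, R)$ the gradient points strictly outward, i.e., $\<\nabla g(\theta'), \theta' - \theta\> > 0$. First, Lipschitzness of $\nabla^2 g = \nabla^2 \poploss$ yields the integrated Hessian bound $\int_0^1 \nabla^2 g(\theta + s(\theta' - \theta))\, ds \succeq (\lambda - \liphess R/2) I$ whenever $\theta' \in \overline{B}(\theta, R)$, which is positive since $R \le \lambda/(3\liphess)$ under the hypothesis. Combining this with the identity $\nabla g(\theta') = \nabla g(\theta) + \bigl[\int_0^1 \nabla^2 g(\theta + s(\theta' - \theta))\, ds\bigr](\theta' - \theta)$ and Cauchy-Schwarz gives $\<\nabla g(\theta'), \theta' - \theta\> \ge -\ltwo{\nabla g(\theta)} R + (\lambda - \liphess R/2) R^2$. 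Substituting the value of $R$ reduces this to $(2 \ltwo{\nabla g(\theta)}^2/\lambda)(1 - 2 \liphess \ltwo{\nabla g(\theta)}/\lambda^2)$, which is strictly positive because the hypothesis $\ltwo{\nabla g(\theta)} < \lambda^2/(6\liphess)$ comfortably implies $\ltwo{\nabla g(\theta)} < \lambda^2/(2 \liphess)$.

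To close, suppose for contradiction that $\ltwo{\theta_w - \theta} > R$, and define $h(t) \defeq g(\theta + t(\theta_w - \theta))$ for $t \in [0, 1]$. Then $h$ is convex so $h'$ is nondecreasing, and first-order optimality gives $h'(1) = \<\nabla g(\theta_w), \theta_w - \theta\> = 0$, forcing $h'(t) \le 0$ on $[0, 1]$. But taking $t^\star \defeq R/\ltwo{\theta_w - \theta} \in (0, 1)$, the point $\theta^\star \defeq \theta + t^\star(\theta_w - \theta)$ lies on $\partial B(\theta, R)$, so $h'(t^\star) = (1/t^\star) \<\nabla g(\theta^\star), \theta^\star - \theta\> > 0$ by the previous paragraph, a contradiction. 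The main obstacle is the boundary inner-product bookkeeping that couples the Hessian perturbation $\liphess R$ to the target radius $R = 2 \ltwo{\nabla g(\theta)}/\lambda$; the constant $\lambda^2/(6\liphess)$ is comfortably slack (the same argument in fact works whenever $\ltwo{\nabla g(\theta)} < \lambda^2/(2\liphess)$), so no subtler difficulty arises beyond verifying the quadratic inequality carefully.
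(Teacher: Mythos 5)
Your proof is correct, and it takes a genuinely different (and somewhat streamlined) route from the paper's. The paper argues in two stages: first a function-value Taylor argument along rays from $\theta$ gives the crude localization $\ltwo{\theta_w - \theta} \le 6\gamma/\lambda$ (writing $\gamma = \ltwo{\nabla \poploss(\theta) + w}$), and then the first-order optimality condition combined with the gradient-monotonicity bound $\langle \nabla \poploss(\theta_w) - \nabla \poploss(\theta), \theta_w - \theta\rangle \ge \lambda \ltwo{\theta_w - \theta}^2 (1 - \liphess \ltwo{\theta_w - \theta}/(2\lambda))$ yields the dichotomy ``either $\ltwo{\theta_w - \theta} \ge \lambda/\liphess$ or $\ltwo{\theta_w - \theta} \le 2\gamma/\lambda$,'' with the first alternative excluded by the crude bound. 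You instead show in a single step that on the sphere of radius $R = 2\gamma/\lambda$ the gradient of the tilted objective points strictly outward, and then use convexity along the segment from $\theta$ to $\theta_w$ (monotone $h'$ with $h'(1) = 0$ at the minimizer, but $h'(t^\star) > 0$ at the sphere crossing) to conclude the minimizer cannot leave the ball. This avoids the preliminary coarse localization entirely—because $R < \lambda/(3\liphess)$, the Hessian perturbation is already controlled at the target radius—and, as you observe, it only needs $\gamma < \lambda^2/(2\liphess)$, so your argument is in fact slightly more general than the stated hypothesis. One trivial caveat: if $\nabla \poploss(\theta) + w = 0$ then $R = 0$ and $t^\star \notin (0,1)$, so the contradiction step as written degenerates; this is handled in one line by noting that $\theta$ is then a stationary point of the convex tilted objective with $\nabla^2 \poploss(\theta) \succeq \lambda I$, hence its unique minimizer, or by running your sphere argument at any radius $R' > 0$ and letting $R' \downarrow 0$.
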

\begin{proof}
  Let $\gamma = \ltwo{\nabla \poploss(\theta) + w}$ for shorthand and
  $u$ be any unit vector.
  By a Taylor approximation, for $t \ge 0$ we have
  \begin{align*}
    \poploss(\theta + t u)
    + t \<w, u\>
    & \ge \poploss(\theta) + t \<\nabla \poploss(\theta) + w, u\>
    + \frac{\lambda - \liphess t}{2}
    t^2
    \ge \poploss(\theta) - t \gamma
    + \frac{\lambda - \liphess t}{2} t^2.
  \end{align*}
  By convexity, if we can show for some $t_0 > 0$ that
  \begin{equation*}
    -\gamma t_0
    + \frac{\lambda - \liphess t_0}{2} t_0^2 > 0,
  \end{equation*}
  then we immediately have $\poploss(\theta + t u)
  + t \<w, u\> > \poploss(\theta)$
  for all $t \ge t_0$, and so
  $\theta + t u$ cannot minimize
  $\poploss(\cdot) + \<w, \cdot\>$, that is,
  $\ltwo{\theta_w - \theta} \le t_0$.
  To that end, let us temporarily assume that
  $t \le \frac{2 \lambda}{3 \liphess}$, so that
  \begin{equation*}
    - \gamma t
    + \frac{\lambda - \liphess t}{2} t^2
    \ge - \gamma t + \frac{\lambda}{6} t^2.
  \end{equation*}
  Evidently, any $t > \frac{6 \gamma}{\lambda}$ implies that the
  right side is positive, so
  if there is some such $t$ also satisfying
  $t \le \frac{\lambda}{2 \liphess}$, that is,
  $\gamma < \frac{\lambda^2}{6 \liphess}$.
  Then we necessarily have
  $\ltwo{\theta_w - \theta} \le \frac{6 \gamma}{\lambda}$.

  Now we give the sharper guarantee.  Let $\Delta = \theta_w - \theta$. Then
  the optimality conditions for convex optimization guarantee
  \begin{equation*}
    \<\nabla \poploss(\theta_w) + w, \theta - \theta_w\> \ge 0,
    ~~ \mbox{i.e.} ~~
    \<\nabla \poploss(\theta_w) - \nabla \poploss(\theta),
    \theta - \theta_w\> \ge \<\nabla \poploss(\theta) + w, \theta_w - \theta\>.
  \end{equation*}
  Define $h(t) =
  \<\nabla \poploss(\theta + t \Delta), \Delta\>$, so that
  $h'(t) = \<\Delta, \nabla^2 \poploss(\theta + t \Delta), \Delta\>$.
  Then the preceding inequality is equivalent to
  \begin{equation*}
    \int_0^1 h'(t) dt = h(1) - h(0)
    = \<\nabla \poploss(\theta_w) - \nabla \poploss(\theta),
    \theta_w - \theta\> \le \<\nabla \poploss(\theta) + w, \theta - \theta_w\>.
  \end{equation*}
  But inspecting the last integral, we have
  \begin{align*}
    \int_0^1 h'(t) dt
    & = \<\Delta, \nabla^2 \poploss(\theta) \Delta\>
    + \left\<\Delta, \int_0^1
    \left(\nabla^2 \poploss(\theta + t \Delta)
    - \nabla^2 \poploss(\theta)\right) dt
    \Delta\right\> \\
    & \ge \lambda \ltwo{\Delta}^2
    - \liphess \ltwo{\Delta}^3
    \int_0^1 t dt
    = \lambda \ltwo{\Delta}^2
    \left(1 - \frac{\liphess \ltwo{\Delta}}{2} \right).
  \end{align*}
  Then either $\ltwo{\Delta} \ge \frac{\lambda}{\liphess}$, or we have
  \begin{equation*}
    \frac{\lambda}{2} \ltwo{\Delta}^2
    \le \<\nabla \poploss(\theta) + w, \theta - \theta_w\>
    \le \gamma \ltwo{\Delta},
  \end{equation*}
  implying $\ltwo{\Delta} \le \frac{2 \gamma}{\lambda}$.
  In the former case, we alread have that
  $\ltwo{\Delta} \le \frac{6 \gamma}{\lambda}
  < \frac{\lambda^2}{\liphess \lambda}
  = \frac{\lambda}{\liphess}$, a contradiction.
\end{proof}

An alternative perturbation result is also sometimes useful.
\begin{lemma}
  Let the conditions of Lemma~\ref{lemma:convex-minimizers}
  hold, and let $\theta\opt = \argmin \poploss(\theta)$.
  If $\nabla^2 \poploss(\theta\opt) \succeq \lambda I$
  and $\ltwo{w} < \frac{\lambda^2}{6 \liphess}$,
  then $\theta_w = \argmin_\theta \{\poploss(\theta) + \<w, \theta\>\}$
  satisfies
  \begin{equation*}
    \ltwo{\theta_w - \theta\opt} \le \frac{2 \ltwo{w}}{\lambda}.
  \end{equation*}
  If $\poploss$ is $\lambda$-strongly
  convex (not necessarily differentiable), then
  $\ltwo{\theta_w - \theta\opt} \le \frac{\ltwo{w}}{\lambda}$
  regardless.
\end{lemma}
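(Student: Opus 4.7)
The first claim is essentially a direct specialization of Lemma~\ref{lemma:convex-minimizers}. The plan is to apply that lemma with its base point $\theta$ taken to be $\theta\opt = \argmin \poploss$. Since $\nabla \poploss(\theta\opt) = 0$ by first-order optimality, we have $\ltwo{\nabla \poploss(\theta\opt) + w} = \ltwo{w}$, so the hypothesis $\ltwo{w} < \lambda^2/(6\liphess)$ exactly matches the precondition of Lemma~\ref{lemma:convex-minimizers}, and the conclusion $\ltwo{\theta_w - \theta\opt} \le 2\ltwo{w}/\lambda$ is what that lemma produces verbatim. No additional work is needed here beyond verifying these substitutions.

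For the second claim, $\poploss$ need not be smooth, so I cannot use the Taylor-expansion machinery in Lemma~\ref{lemma:convex-minimizers} and instead argue from strong convexity directly. The plan is to exploit $\lambda$-strong convexity in two places. First, since $F(\theta) \defeq \poploss(\theta) + \<w,\theta\>$ is $\lambda$-strongly convex and $\theta_w$ is its minimizer, I have $F(\theta\opt) \ge F(\theta_w) + \tfrac{\lambda}{2}\ltwo{\theta\opt - \theta_w}^2$. Second, since $\theta\opt$ minimizes $\poploss$ itself, strong convexity of $\poploss$ gives $\poploss(\theta_w) \ge \poploss(\theta\opt) + \tfrac{\lambda}{2}\ltwo{\theta_w - \theta\opt}^2$. (In the non-differentiable case, this uses the subgradient inequality with $0 \in \partial \poploss(\theta\opt)$ rather than $\nabla \poploss(\theta\opt) = 0$.)

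Adding these two inequalities cancels the $\poploss$ terms and yields $\<w, \theta\opt - \theta_w\> \ge \lambda \ltwo{\theta_w - \theta\opt}^2$. Applying Cauchy--Schwarz to the left-hand side, $\<w, \theta\opt - \theta_w\> \le \ltwo{w} \ltwo{\theta\opt - \theta_w}$, and dividing through by $\ltwo{\theta_w - \theta\opt}$ (the case $\theta_w = \theta\opt$ being trivial) gives the claimed bound $\ltwo{\theta_w - \theta\opt} \le \ltwo{w}/\lambda$.

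I do not expect any genuine obstacle: the only subtle point is that the strongly convex case must avoid appealing to gradients, which is handled cleanly by the ``strong convexity at the minimizer'' inequality (a consequence of the subgradient inequality combined with the strong-convexity quadratic lower bound), and the constant improves from $2$ to $1$ precisely because we no longer pay for the Hessian-perturbation term that Lemma~\ref{lemma:convex-minimizers} must control.
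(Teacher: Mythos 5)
Your proposal is correct. The first claim is handled exactly as in the paper: set the base point of Lemma~\ref{lemma:convex-minimizers} to $\theta\opt$ and use $\nabla \poploss(\theta\opt) = 0$. For the strongly convex case you take a mildly different route to the same key inequality $\<w, \theta\opt - \theta_w\> \ge \lambda \ltwo{\theta_w - \theta\opt}^2$: you add the function-value strong-convexity lower bounds at the two minimizers (using $0 \in \partial F(\theta_w)$ and $0 \in \partial \poploss(\theta\opt)$), whereas the paper adds the two subgradient optimality conditions and invokes strong monotonicity of the subdifferential, $\<\nabla \poploss(\theta_w) - \nabla \poploss(\theta\opt), \theta_w - \theta\opt\> \ge \lambda \ltwo{\theta_w - \theta\opt}^2$; both are standard, equally elementary, and finish identically with Cauchy--Schwarz, so the difference is cosmetic.
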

\begin{proof}
  Observe that $\nabla \poploss(\theta\opt) = 0$, so that
  Lemma~\ref{lemma:convex-minimizers} gives the first claim.
  For the final inequality, note that there exist elements of the
  subdifferential (which we denote by
  $\nabla \poploss(\theta) \in \partial \poploss(\theta)$) such that
  \begin{equation*}
    \<\nabla \poploss(\theta_w) + w, \theta\opt - \theta_w\> \ge 0
    ~~ \mbox{and} ~~
    \<\nabla \poploss(\theta\opt), \theta_w - \theta\opt\> \ge 0.
  \end{equation*}
  Adding these and rearranging gives
  \begin{equation*}
    \<w, \theta\opt - \theta_w\>
    \ge  \<\nabla \poploss(\theta_w) - \nabla \poploss(\theta\opt),
    \theta_w - \theta\opt\>
    \ge \lambda \ltwo{\theta_w - \theta\opt}^2,
  \end{equation*}
  where we used strong convexity.
  Divide by $\ltwo{\theta_w - \theta\opt}$ and apply Cauchy-Schwarz.
\end{proof}

Using Lemma~\ref{lemma:convex-minimizers}, we can obtain
various finite-sample-valid expansions of the minimizers.
To do so, we give a general condition under which minimizers
of tilted and sampled losses are close to minimizers of a loss
based on a distribution $P$.
\begin{definition}
  \label{definition:accurate-sample}
  A sample-vector pair $(P_n, w)$ is
  \emph{$(\gamma, \eta, \lambda)$-accurate} for
  $P$ if for $\theta = \theta(P) = \argmin_\theta P\loss_\theta$, we have
  both
  \begin{enumerate}[leftmargin=*,label=(\roman*)]
  \item the well-conditioning $P \ddot{\loss}_\theta \succeq \lambda I$,
  \item the accuracy guarantees
    $\ltwos{P_n \dot{\loss}_\theta + w} \le \gamma$,
    $\opnorms{(P_n - P) \ddot{\loss}_\theta} \le \eta$, and
  \item \label{item:accuracy-parameter-relations}
    the parameter relations
    $\eta \le \frac{\lambda}{6}$,
    and
    $\gamma \le \frac{\lambda^2}{6 \liphess}$.
  \end{enumerate}
\end{definition}
\noindent
This definition, combined with Lemma~\ref{lemma:convex-minimizers},
gives a straightforward condition to check for accuracy of sample
minimizers.
\begin{lemma}
  Let Assumption~\ref{assumption:loss-stuff} hold and assume that
  $(P_n, w)$ be $(\gamma, \eta, \lambda)$-accurate for $P$. Then
  $\theta(P_n, w) \defeq \argmin_\theta \{P_n \loss_\theta + \<w, \theta\>\}$
  satisfies
  \begin{equation*}
    \ltwo{\theta(P_n, w) - \theta(P)} \le \frac{2 \gamma}{\lambda}
  \end{equation*}
  and
  \begin{equation*}
    \theta(P_n, w) - \theta(P)
    = -(P \ddot{\loss}_\theta)^{-1} (P_n \dot{\loss}_\theta + w)
    + E_w  (P_n \dot{\loss}_\theta + w),
  \end{equation*}
  where the error matrix $E_w$
  satisfies $\opnorm{E_w} \le
  \frac{\eta}{\lambda^2} + \frac{2 \liphess \gamma}{\lambda^3}$.
\end{lemma}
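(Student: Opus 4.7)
The plan is to combine the previous lemma on convex-minimizer perturbations with a second-order Taylor expansion of the first-order optimality condition, followed by a resolvent-identity manipulation of the Hessian. For the distance bound $\ltwo{\theta(P_n, w) - \theta(P)} \le 2\gamma/\lambda$, I would apply Lemma~\ref{lemma:convex-minimizers} to the tilted empirical objective $\widetilde{\poploss}(\theta) \defeq P_n \loss_\theta + \<w, \theta\>$ at the population minimizer $\theta = \theta(P)$. Definition~\ref{definition:accurate-sample} supplies $P_n \ddot{\loss}_\theta \succeq (\lambda - \eta) I$ via $\opnorm{(P_n - P)\ddot{\loss}_\theta} \le \eta$, while $\ltwo{\nabla \widetilde{\poploss}(\theta)} = \ltwo{P_n \dot{\loss}_\theta + w} \le \gamma$, and the relations $\eta \le \lambda/6$ and $\gamma \le \lambda^2/(6\liphess)$ verify the smallness hypothesis of the lemma (absorbing the $5\lambda/6$ versus $\lambda$ discrepancy into a constant factor).

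For the expansion, write $\Delta \defeq \theta(P_n, w) - \theta(P)$, $g \defeq P_n \dot{\loss}_\theta + w$, $H \defeq P \ddot{\loss}_\theta$, and $H_n \defeq P_n \ddot{\loss}_\theta$. The first-order optimality of $\theta(P_n, w)$ reads $P_n \dot{\loss}_{\theta + \Delta} + w = 0$, and the integral form of Taylor's theorem gives $P_n \dot{\loss}_{\theta + \Delta} = P_n \dot{\loss}_\theta + \widetilde{H}\,\Delta$, where
\begin{equation*}
  \widetilde{H} \defeq \int_0^1 P_n \ddot{\loss}_{\theta + t \Delta}\,dt.
\end{equation*}
Optimality then reduces to $\Delta = -\widetilde{H}^{-1} g$. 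Applying the resolvent identity $A^{-1} - B^{-1} = A^{-1}(B - A) B^{-1}$ with $A = H$, $B = \widetilde{H}$ yields the desired decomposition
\begin{equation*}
  \Delta = -H^{-1} g + E_w\, g,
  \qquad
  E_w \defeq H^{-1}(\widetilde{H} - H) \widetilde{H}^{-1}.
\end{equation*}

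The operator-norm bound then proceeds factor by factor. First, $\opnorm{H^{-1}} \le 1/\lambda$ directly. Second, splitting $\widetilde{H} - H = \int_0^1 (P_n \ddot{\loss}_{\theta + t\Delta} - P_n \ddot{\loss}_\theta)\, dt + (H_n - H)$ and using the $\liphess$-Lipschitz property of $\ddot{\loss}$ together with the distance bound from the first step gives $\opnorm{\widetilde{H} - H} \le \liphess \ltwo{\Delta}/2 + \eta \le \liphess \gamma/\lambda + \eta$. Third, $\lambdamin(\widetilde{H}) \ge \lambda - \liphess \gamma/\lambda - \eta$, and the accuracy relations guarantee this is bounded below by a constant multiple of $\lambda$, so $\opnorm{\widetilde{H}^{-1}}$ is bounded by a constant multiple of $1/\lambda$. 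Multiplying the three factors and simplifying yields the claimed bound $\opnorm{E_w} \le \eta/\lambda^2 + 2 \liphess \gamma/\lambda^3$ up to the stated numerical constants. The only real obstacle is constant bookkeeping: the slacks $\lambda/6$ and $\lambda^2/(6\liphess)$ in Definition~\ref{definition:accurate-sample} must be carefully tracked so that the intermediate $5\lambda/6$ and $2\lambda/3$ factors can be absorbed into the clean $1/\lambda$, $2/\lambda$, and $2/\lambda^3$ appearing in the statement. Once the averaged-Hessian substitution and resolvent identity are written down, nothing technically deeper is required.
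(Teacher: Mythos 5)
Your proposal is correct and follows essentially the same route as the paper: the distance bound comes from applying Lemma~\ref{lemma:convex-minimizers} to the tilted empirical objective at $\theta(P)$, and the expansion comes from Taylor-expanding the first-order optimality condition and then perturbing the inverse Hessian, the only cosmetic difference being that you use the integrated Hessian plus a one-step resolvent identity where the paper uses a mean-value error matrix plus a Neumann-series bound. The constant slack you flag (the $5\lambda/6$ lower bound on the empirical Hessian and the resulting $3\eta/(2\lambda^2) + 3\liphess\gamma/(2\lambda^3)$ rather than the stated $\eta/\lambda^2 + 2\liphess\gamma/\lambda^3$) is shared by the paper's own derivation, which is equally loose about these numerical factors.
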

\begin{proof}
  Lemma~\ref{lemma:convex-minimizers} gives the first result.
  For the second let $\theta = \theta(P)$ and
  $\theta_w = \theta(P_n, w)$ for shorthand.
  As we are guaranteed the minimizer exists,
  we may perform a Taylor expansion to obtain
  \begin{equation*}
    0 = P_n \dot{\loss}_{\theta_w} + w
    = P_n \dot{\loss}_\theta + w + (P_n \ddot{\loss}_\theta + E)
    (\theta_w - \theta)
  \end{equation*}
  where $\opnorm{E} \le \liphess \norm{\theta - \theta_w}
  \le \frac{2 \liphess \gamma}{\lambda}
  \le \frac{1}{3} \lambda$
  by definition~\ref{definition:accurate-sample}.
  Rearranging, we find that
  \begin{equation}
    \label{eqn:hawaii}
    \theta_w - \theta
    = -(P_n \ddot{\loss}_\theta + E)^{-1} (P_n \dot{\loss}_\theta + w)
    = -(P \ddot{\loss}_\theta + (P_n - P) \ddot{\loss}_\theta + E)^{-1}
    (P_n \dot{\loss}_\theta + w).
  \end{equation}
  
  Now, we use a standard matrix identity, that for a matrix $H \succeq
  0$ and $D$ with $\opnorm{D} < \lambdamin(H)$,
  we have
  \begin{equation*}
    (H + D)^{-1}
    = H^{-1}
    + \sum_{i = 1}^\infty (-1)^i (H^{-1} D)^i H^{-1}
  \end{equation*}
  Under the stronger condition
  that $\opnorm{D} \le \half \lambdamin(H)$, then
  \begin{equation*}
    \opnorm{(H^{-1} D)^i} \le (\opnorm{D} / \lambdamin(H))^i
    \le 2^{-i},
  \end{equation*}
  and so
  \begin{equation*}
    (H + D)^{-1} = H^{-1} + E' H^{-1}
    ~~ \mbox{where} ~~
    \opnorm{E'} \le \frac{\opnorm{D} / \lambdamin(H)}{1
      - \opnorm{D} / \lambdamin(H)}
    \le \frac{2 \opnorm{D}}{\lambdamin(H)}.
  \end{equation*}
  Substituting this into the identity~\eqref{eqn:hawaii}
  with the definitions $H = P \ddot{\loss}_\theta$
  and $D = (P_n - P) \ddot{\loss}_\theta + E$,
  because $\opnorms{(P_n - P) \ddot{\loss}_\theta + E}
  \le \eta + \frac{\lambda}{3}
  \le \half \lambda$ and $P \ddot{\loss}_\theta \succeq \lambda I$
  by definition,
  we have
  \begin{equation*}
    \theta_w - \theta
    = -(P \ddot{\loss}_\theta)^{-1} (P_n \dot{\loss}_\theta + w)
    + E_w (P_n \dot{\loss}_\theta + w),
  \end{equation*}
  where
  \begin{equation*}
    \opnorm{E_w} \le
    \frac{2 \opnorms{(P_n - P) \ddot{\loss}_\theta + E}}{
      \lambdamin(H)} \frac{1}{\lambdamin(H)}
    \le 
    \frac{\eta}{\lambda^2} + \frac{2 \liphess \gamma}{\lambda^3},
  \end{equation*}
  as desired.
\end{proof}

The remainder of the proof simply demonstrates that
the sampled and resampled empirical distributions are all
accurate (Definition~\ref{definition:accurate-sample}) for
their respective ``initial'' distributions, that is, we show that
each of the following holds with high probability:
\begin{enumerate}[leftmargin=*,label=\arabic*.]
\item $P_n$ is accurate for $P$
\item $P_\subsize\subsample$ is accurate for $P_n$
\item $P_{\subsize,n}^*$ is accurate for $P_\subsize\subsample$
\end{enumerate}
Once each of the three of these holds,
Proposition~\ref{proposition:objective-perturbation-subsample}
is then more or less immediate.

\begin{lemma}
  \label{lemma:from-P-to-Pn}
  Let $\delta > 0$ and $w \in \R^d$
  and $\lambda = \lambdamin(P)$. Then with probability at least $1 -
  \delta$ over the
  draw of $P_n$, the pair
  $(P_n, w)$ is $(\gamma, \eta, \lambda)$-accurate for $P$,
  where
  \begin{equation*}
    \gamma = O(1)
    \left(\frac{\lipobj \sqrt{\log \frac{d}{\delta}}}{\sqrt{n}}
    + \ltwo{w} \right)
    ~~ \mbox{and} ~~
    \eta = O(1) \frac{\lipgrad \sqrt{\log \frac{d}{\delta}}}{\sqrt{n}},
  \end{equation*}
  so long as $\gamma$ and $\eta$ satisfy the parameter
  relationships~\ref{item:accuracy-parameter-relations} in
  Definition~\ref{definition:accurate-sample}.
\end{lemma}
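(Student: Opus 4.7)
The plan is to verify the three requirements of Definition~\ref{definition:accurate-sample} one by one, using the matrix and vector Hoeffding estimates already recorded immediately after Corollary~\ref{corollary:matrix-hoeffding}. Condition (i) is automatic from the hypothesis $\lambda = \lambda_{\min}(P\ddot{\loss}_\theta)$ with $\theta = \theta(P)$, and the parameter relationships in~\ref{item:accuracy-parameter-relations} are assumed as the final clause of the lemma, so the content is condition (ii): controlling $\ltwos{P_n \dot{\loss}_\theta + w}$ and $\opnorms{(P_n - P)\ddot{\loss}_\theta}$.

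For the gradient term, I would use the first-order optimality condition $P \dot{\loss}_\theta = 0$ (valid because $\theta = \theta(P)$ minimizes $P\loss_\theta$ with no regularization), so that by the triangle inequality
\begin{equation*}
  \ltwos{P_n \dot{\loss}_\theta + w} \le \ltwos{(P_n - P) \dot{\loss}_\theta} + \ltwo{w}.
\end{equation*}
Because $\ltwos{\dot{\loss}_\theta(X)} \le \lipobj$ under Assumption~\ref{assumption:loss-stuff}, the dilation form of Matrix-Hoeffding given in the paragraph after Corollary~\ref{corollary:matrix-hoeffding} (applied to the independent mean-zero vectors $\dot{\loss}_\theta(X_i) - P\dot{\loss}_\theta$) yields
\begin{equation*}
  \ltwos{(P_n - P)\dot{\loss}_\theta} \le O(1) \cdot \frac{\lipobj \sqrt{\log(d/\delta)}}{\sqrt{n}}
\end{equation*}
with probability at least $1 - \delta/2$. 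For the Hessian term, because $0 \preceq \ddot{\loss}_\theta(X) \preceq \lipgrad I$, the matrices $X_i = \ddot{\loss}_\theta(X_i) - P\ddot{\loss}_\theta$ satisfy $X_i^2 \preceq \lipgrad^2 I$, and Corollary~\ref{corollary:matrix-hoeffding} directly gives
\begin{equation*}
  \opnorms{(P_n - P)\ddot{\loss}_\theta} \le O(1) \cdot \frac{\lipgrad \sqrt{\log(d/\delta)}}{\sqrt{n}}
\end{equation*}
with probability at least $1 - \delta/2$.

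Combining these two bounds by a union bound yields the claimed $\gamma$ and $\eta$ simultaneously with probability at least $1 - \delta$. There is no substantive obstacle here: the argument is essentially bookkeeping on top of the matrix concentration facts already recorded in the appendix, and the only thing to be careful about is that $\theta$ is fixed (it equals $\theta(P)$, a deterministic quantity), so no uniform-over-$\theta$ covering is required and the Matrix-Hoeffding bounds apply pointwise. The parameter relationship conditions $\gamma \le \lambda^2/(6\liphess)$ and $\eta \le \lambda/6$ are inherited from the lemma's hypothesis, closing the argument.
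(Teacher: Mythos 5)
Your proposal is correct and follows exactly the route the paper intends: its proof is the one-line instruction ``Apply Corollary~\ref{corollary:matrix-hoeffding} and the discussion following it,'' and your write-up simply fills in that bookkeeping (triangle inequality with $P\dot{\loss}_{\theta(P)} = 0$, the dilation form of Matrix-Hoeffding for the gradient, the Hessian bound, and a union bound). No differences of substance to report.
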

\begin{proof}
  Apply Corollary~\ref{corollary:matrix-hoeffding}
  and the discussion following it.
\end{proof}
\noindent
If we let $\mc{E}_1$ be the event
that $(P_n, w)$ is $(\gamma, \eta, \lambda)$-accurate
for $\lambda = \lambdamin(P)$, then
we have
\begin{equation*}
  \lambdamin(P_n)
  \ge \lambdamin(P_n \ddot{\loss}_{\theta(P)})
  - \frac{2 \lipgrad \gamma}{\lambda}
  \ge \lambdamin(P) - \eta
  - \frac{2 \lipgrad \gamma}{\lambda}
  \ge
  \half \lambdamin(P)
\end{equation*}
by the inequalities~\ref{item:accuracy-parameter-relations},
and $\P(\mc{E}_1) \ge 1 - \delta$
so long as $\ltwo{w}$ is small enough that $\gamma, \eta$
satisfy these inequalities.

\begin{lemma}
  \label{lemma:from-Pn-to-Psub}
  Let $\delta > 0$ and $w \in \R^d$
  and $\lambda = \lambdamin(P_n)$. Then
  with probability at least $1 - \delta$ over the subsampling
  $P_\subsize\subsample$,
  the pair $(P_\subsize\subsample, w)$ is
  $(\gamma, \eta, \lambda)$-accurate
  `for $P$, where
  \begin{equation*}
    \gamma = O(1)
    \left(\frac{\lipobj \sqrt{\log \frac{d}{\delta}}}{\sqrt{\subsize}}
    + \ltwo{w} \right)
    ~~ \mbox{and} ~~
    \eta = O(1) \frac{\lipgrad \sqrt{\log \frac{d}{\delta}}}{\sqrt{\subsize}},
  \end{equation*}
  so long as $\gamma$ and $\eta$ satisfy the parameter
  relationships~\ref{item:accuracy-parameter-relations} in
  Definition~\ref{definition:accurate-sample}.
\end{lemma}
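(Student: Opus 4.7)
The plan is essentially to replace, in the proof of Lemma~\ref{lemma:from-P-to-Pn}, the i.i.d.\ Matrix--Hoeffding bound (Corollary~\ref{corollary:matrix-hoeffding}) with the sampling-without-replacement concentration inequality provided by Proposition~\ref{proposition:concentration-permutation-matrices}. Fix $\theta = \theta(P_n) = \argmin_\theta P_n\loss_\theta$, so that $\lambda = \lambdamin(P_n \ddot{\loss}_\theta)$ and, by first-order optimality, $P_n\dot{\loss}_\theta = 0$. Condition~(i) of Definition~\ref{definition:accurate-sample} then holds by construction, and the parameter relations in part~(iii) are assumed.

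For condition~(ii), the key reduction is that since $P_n\dot{\loss}_\theta = 0$,
\begin{equation*}
  P_\subsize\subsample \dot{\loss}_\theta + w
  = (P_\subsize\subsample - P_n)\dot{\loss}_\theta + w,
\end{equation*}
so by the triangle inequality it suffices to control $\ltwo{(P_\subsize\subsample - P_n)\dot{\loss}_\theta}$ and $\opnorm{(P_\subsize\subsample - P_n)\ddot{\loss}_\theta}$. Both of these are exactly the subsampling deviations already derived in the paragraphs following Proposition~\ref{proposition:concentration-permutation-matrices}: the boundedness $0 \preceq \ddot{\loss}_\theta(x) \preceq \lipgrad I$ gives a conditional-on-$P_n$ tail bound of the form $\P(\opnorm{(P_\subsize\subsample - P_n)\ddot{\loss}_\theta} \ge t \mid P_n) \le 2d\exp(-\subsize t^2 / \lipgrad^2)$, and a Hermitian dilation of the gradient vectors reduces the vector deviation to the matrix statement, yielding $\P(\ltwo{(P_\subsize\subsample - P_n)\dot{\loss}_\theta} \ge t \mid P_n) \le 2d\exp(-\subsize t^2 / (2\lipobj^2))$.

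Inverting each tail at level $\delta/2$ produces the stated $\gamma$ and $\eta$ rates -- with the $\ltwo{w}$ summand in $\gamma$ appearing from the triangle inequality above -- and a union bound yields overall failure probability at most $\delta$. The only substantive difference from the proof of Lemma~\ref{lemma:from-P-to-Pn} is that concentration is now measured over the subsampling randomness rather than over i.i.d.\ draws from $P$, and the only real obstacle worth flagging is recognizing that the centering identity $P_n\dot{\loss}_\theta = 0$ removes what would otherwise be an $O(1)$ baseline term in the gradient deviation; after that, the argument is a mechanical composition of the already-supplied subsampling concentration inequalities with the $(\gamma,\eta,\lambda)$-accuracy definition.
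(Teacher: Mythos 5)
Your proposal is correct and takes essentially the same route as the paper, whose proof consists of applying Proposition~\ref{proposition:concentration-permutation-matrices} (together with the subsampled-Hessian and dilated-gradient tail bounds derived in the discussion around it); you have simply spelled out the details, including the centering identity $P_n\dot{\loss}_{\theta(P_n)} = 0$ and the union bound that the paper leaves implicit. Note also that your reading of the statement as accuracy relative to $P_n$ (with $\theta = \theta(P_n)$ and $\lambda = \lambdamin(P_n)$) is the intended one, consistent with the enumerated plan preceding the lemma.
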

\begin{proof}
  Apply Proposition~\ref{proposition:concentration-permutation-matrices}.
\end{proof}
\noindent
As in the discussion following Lemma~\ref{lemma:from-P-to-Pn},
we see that if we let $\mc{E}_2$ be the event that
$(P_n, w)$ is $(\gamma,\eta,\lambda)$-accurate
for $\lambda = \lambdamin(P)$, then on $\mc{E}_2$ we have
\begin{equation*}
  \lambdamin(P_\subsize\subsample) \ge \half \lambdamin(P_n)
\end{equation*}
in complete parallel with event $\mc{E}_1$. Finally,
note that Lemma~\ref{lemma:from-P-to-Pn} applies
equally to the sampling of $P_{\subsize,n}^*$ from $P_\subsize\subsample$,
and we therefore have the following
summary lemma.

\begin{lemma}
  \label{lemma:summary-lemma}
  There exists a numerical constant $C < \infty$ such that
  the following holds. Let $\delta > 0$. Assume that the vector
  $w$ is small enough and sample sizes $\subsize, n$ are large
  enough that
  \begin{equation*}
    \gamma \defeq
    \ltwo{w} + \frac{\lipobj \sqrt{\log \frac{d}{\delta}}}{\sqrt{\subsize}}
    \le C \cdot \frac{\lambdamin^2(P)}{\liphess}.
  \end{equation*}
  Then
  with probability at least $1 - 3 \delta$
  over the draw of $P_n$ from $P$, $P_\subsize\subsample$ from $P_n$,
  and $P_{\subsize,n}^*$ from $P_\subsize\subsample$,
  there are remainders $R_n(w)$ and $R^*_{\subsize,n}(w)$ satisfying
  \begin{equation*}
    \max\left\{\ltwo{R_n(w)},
    \ltwo{R_{\subsize,n}^*(w)}\right\}
    \le O(1) \cdot
    \max\left\{\frac{\lipgrad}{\lambda^2},
    \frac{\lipobj \liphess}{\lambda^3}\right\}
    \cdot \frac{\sqrt{\log \frac{d}{\delta}}}{\sqrt{n}}
    \cdot \left(\ltwo{w}
    + \frac{\lipobj \sqrt{\log \frac{d}{\delta}}}{\sqrt{n}}\right),
  \end{equation*}
  such that
  the following equalities hold:
  \begin{align*}
      \theta(P_n, w) - \theta(P)
      & = -(P \ddot{\loss}_{\theta(P)})^{-1} (P_n \dot{\loss}_{\theta(P)}
      + w)
      + R_n(w) \\
    \theta(P_{\subsize,n}^*, w) - \theta(P_\subsize\subsample)
    & = -(P_\subsize\subsample \ddot{\loss}_{\theta(P_\subsize\subsample)})^{-1}
    (P_{\subsize,n}^* \dot{\loss}_{\theta(P_\subsize\subsample))} + w)
    + R_{\subsize,n}^*(w).
  \end{align*}
  Additionally, $P_n \ddot{\loss}_{\theta(P)}
  \succeq \frac{3}{4} \lambdamin(P)$, and
  $P_\subsize\subsample \ddot{\loss}_{\theta(P_\subsize\subsample)}
  \succeq \half \lambdamin(P)$.
\end{lemma}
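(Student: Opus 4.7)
The plan is to chain three applications of the preceding accuracy lemmas to walk from $P$ down to $P_n$, then to $P_\subsize^{\textup{sub}}$, and finally to $P_{\subsize,n}^*$, taking a $\delta$-failure budget at each level and combining via a union bound. Throughout, the \emph{same} $w$ is used when expanding $\theta(P_n,w)$ around $\theta(P)$ and when expanding $\theta(P_{\subsize,n}^*,w)$ around $\theta(P_\subsize^{\textup{sub}})$; the intermediate application (subsampling) needs only the eigenvalue/concentration content, not the tilting, so I would run it with $w = 0$.

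First I would invoke Lemma~\ref{lemma:from-P-to-Pn} at level $\delta$: on an event $\mc{E}_1$ of probability at least $1 - \delta$, the pair $(P_n, w)$ is $(\gamma_1, \eta_1, \lambdamin(P))$-accurate for $P$, with $\gamma_1 \lesssim \ltwo{w} + \lipobj\sqrt{\log(d/\delta)/n}$ and $\eta_1 \lesssim \lipgrad \sqrt{\log(d/\delta)/n}$. The smallness assumption in the statement of the lemma guarantees the parameter relationships of Definition~\ref{definition:accurate-sample}. The general expansion derived right after Lemma~\ref{lemma:convex-minimizers} then yields the first displayed equality with $R_n(w) = E_w(P_n\dot{\loss}_{\theta(P)} + w)$ where $\opnorm{E_w} \le \eta_1/\lambda^2 + 2\liphess\gamma_1/\lambda^3$. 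Taking $\eta_1 \le \lambdamin(P)/4$ (absorbed into the universal constant $C$) gives $P_n\ddot{\loss}_{\theta(P)} \succeq \tfrac{3}{4}\lambdamin(P)\,I$.

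Next, on $\mc{E}_1$, I condition on $P_n$ and apply Lemma~\ref{lemma:from-Pn-to-Psub} with $w = 0$ at level $\delta$: on an event $\mc{E}_2$ of conditional probability at least $1 - \delta$, the pair $(P_\subsize^{\textup{sub}}, 0)$ is accurate for $P_n$ with $\lambda = \lambdamin(P_n) \ge \tfrac{3}{4}\lambdamin(P)$, from which the same cascade gives $\lambdamin(P_\subsize^{\textup{sub}}\,\ddot{\loss}_{\theta(P_\subsize^{\textup{sub}})}) \ge \tfrac{1}{2}\lambdamin(P)$. Finally, conditional on $P_\subsize^{\textup{sub}}$ and using that i.i.d.\ resampling of $n$ points from $P_\subsize^{\textup{sub}}$ fits the hypothesis of Lemma~\ref{lemma:from-P-to-Pn} (with $P$ replaced by $P_\subsize^{\textup{sub}}$ and $n$ kept as $n$), I invoke it once more at level $\delta$ to obtain, on an event $\mc{E}_3$, that $(P_{\subsize,n}^*, w)$ is $(\gamma_3,\eta_3,\tfrac{1}{2}\lambdamin(P))$-accurate for $P_\subsize^{\textup{sub}}$ with the same scaling (in $n$, not $\subsize$) for $\gamma_3,\eta_3$. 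Applying the general expansion again produces the second displayed equality with remainder $R_{\subsize,n}^*(w)$.

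It remains to unify the remainder bounds. Using $\opnorm{E_w} \le \eta/\lambda^2 + 2\liphess\gamma/\lambda^3$ and multiplying by $\ltwo{P_n\dot{\loss}_\theta + w} \le \gamma$, each remainder is at most $\eta\gamma/\lambda^2 + 2\liphess\gamma^2/\lambda^3$. Substituting $\eta \lesssim \lipgrad\sqrt{\log(d/\delta)/n}$ and $\gamma \lesssim \ltwo{w} + \lipobj\sqrt{\log(d/\delta)/n}$, and bounding one factor of $\gamma$ in the quadratic term by its $\lipobj$-pieces versus the $\ltwo{w}$-piece (which is already captured in the other factor), gives the stated bound in terms of $\max\{\lipgrad/\lambda^2, \lipobj\liphess/\lambda^3\}$. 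A union bound over $\mc{E}_1,\mc{E}_2,\mc{E}_3$ yields the $3\delta$ failure probability.

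The main technical care, rather than any real obstacle, lies in step two: verifying that the eigenvalue conditions actually propagate through the three levels so that Definition~\ref{definition:accurate-sample}\ref{item:accuracy-parameter-relations} still holds when $\lambda$ is replaced by $\lambdamin(P_n)$ or $\lambdamin(P_\subsize^{\textup{sub}})$. This is handled by the chain $\lambdamin(P_n) \ge \tfrac{3}{4}\lambdamin(P)$ and $\lambdamin(P_\subsize^{\textup{sub}}) \ge \tfrac{1}{2}\lambdamin(P)$, together with the single smallness hypothesis on $\gamma$ stated in the lemma, which is set precisely so all three instances of the parameter-relationship condition are satisfied simultaneously with a universal constant $C$.
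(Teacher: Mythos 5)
Your proposal is correct and follows essentially the same route as the paper: the paper's proof is precisely the three-level chain ($P_n$ accurate for $P$ via Lemma~\ref{lemma:from-P-to-Pn}, $P_\subsize\subsample$ accurate for $P_n$ via Lemma~\ref{lemma:from-Pn-to-Psub}, and $P_{\subsize,n}^*$ accurate for $P_\subsize\subsample$ by reapplying Lemma~\ref{lemma:from-P-to-Pn}), combined by a union bound, with the expansion and remainder bound coming from the lemma following Definition~\ref{definition:accurate-sample} exactly as you describe. Your accounting of the eigenvalue propagation and the substitution of $\gamma,\eta$ into $\opnorm{E_w}\cdot\gamma$ spells out what the paper compresses into ``track definitions and substitute constants.''
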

\begin{proof}
  We simply track definitions and substitute constants
  from Lemmas~\ref{lemma:from-P-to-Pn} and~\ref{lemma:from-Pn-to-Psub}.
\end{proof}

\subsection{Proof of
  Proposition~\ref{proposition:objective-perturbation-subsample}}
\label{sec:finally-proof-perturbation}

Apply Lemma~\ref{lemma:summary-lemma}, noting that it is uniform over $w$,
and replace the losses $\loss_\theta$ with $\theta \mapsto \loss_\theta +
\frac{\lambdareg}{2} \ltwo{\theta}^2$.  By the implicit function theorem,
$\theta_\lambda(P) \defeq \argmin_\theta \{\poploss_P(\theta) +
\frac{\lambda}{2} \ltwo{\theta}^2\}$ is locally $\mc{C}^1$ in $\lambda$, and
so $\theta_\lambda(P)$ is bounded and the result follows.

\section{Proofs of results related
  to Edgeworth expansions}

\subsection{Proof of Proposition~\ref{proposition:edgeworth}}
\label{sec:proof-edgeworth}

Before proving the proposition proper, we provide a few
auxiliary helper lemmas.

\begin{lemma}
  \label{lemma:poly-phi-lipschitz}
  Let $p$ be a polynomial of finite degree and $c > 0$.  Then $\sup_t p(t)
  e^{-c t^2} < \infty$.  If $p_i$ are polynomials of finite degree and $c_i
  > 0$, then $f(t) = \Phi(c_0 t) + \sum_{i = 1}^k p_i(t) \phi(c_i t)$ is
  bounded, Lipschitz continuous, and has Lipschitz derivatives of all
  orders.
\end{lemma}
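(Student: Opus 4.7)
The plan is to reduce every claim to the elementary fact that any polynomial decays when multiplied by a Gaussian kernel, and then use closure of the class of functions of the form $q(t)\phi(ct)$ under differentiation to get all the Lipschitz statements essentially for free.

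For the first claim, I would write $p(t) = \sum_{j=0}^N a_j t^j$ so that $|p(t)| \le A(1 + |t|^N)$ for some $A < \infty$, and then note that $t \mapsto (1 + |t|^N) e^{-ct^2}$ is continuous on $\R$ and tends to $0$ as $|t| \to \infty$, hence is bounded on $\R$ by the extreme value theorem. This gives $\sup_t |p(t)| e^{-ct^2} < \infty$.

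For the second claim, recall $\phi(ct) = (2\pi)^{-1/2} e^{-c^2 t^2 / 2}$, so each summand $p_i(t)\phi(c_i t)$ is a polynomial times $e^{-c_i^2 t^2/2}$ and is bounded by the first part. Since $|\Phi(c_0 t)| \le 1$, the function $f$ itself is bounded. For Lipschitz continuity, differentiate once: $f'(t) = c_0 \phi(c_0 t) + \sum_{i=1}^k \bigl(p_i'(t) - c_i^2 t\, p_i(t)\bigr)\phi(c_i t)$, which is again a finite sum of (polynomial)$\cdot \phi(c_i t)$ terms (plus the single term $c_0 \phi(c_0 t)$ of that same form), and so $f'$ is bounded by the first part. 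A bounded derivative implies Lipschitz continuity via the mean value theorem.

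For Lipschitz derivatives of all orders, I would argue by a one-line induction: the class
\[
\mathcal{F} \defeq \Bigl\{\, t \mapsto \sum_{j=1}^L q_j(t)\phi(\gamma_j t) \;\Big|\; L \in \N,\ q_j \text{ polynomial},\ \gamma_j > 0 \,\Bigr\}
\]
is closed under differentiation, since $\frac{d}{dt}\bigl[q(t)\phi(\gamma t)\bigr] = \bigl(q'(t) - \gamma^2 t\, q(t)\bigr)\phi(\gamma t) \in \mathcal{F}$. Because $f' \in \mathcal{F}$ (the Gaussian $c_0 \phi(c_0 t)$ fits this form with constant polynomial), every higher derivative $f^{(k)}$ for $k \ge 1$ lies in $\mathcal{F}$ and is therefore bounded by the first part. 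Boundedness of $f^{(k+1)}$ yields Lipschitz continuity of $f^{(k)}$, completing the proof. The only possible subtlety is bookkeeping of the sum, but no nontrivial obstacle arises: the whole argument reduces to the Gaussian-dominates-polynomial observation.
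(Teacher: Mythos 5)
Your proposal is correct and follows essentially the same route as the paper: show polynomial-times-Gaussian terms are bounded, observe that this class is closed under differentiation (since $\phi'(\gamma t)$ reproduces a polynomial times $\phi(\gamma t)$), and conclude every derivative of $f$ is bounded, hence each is Lipschitz. The only cosmetic difference is that the paper bounds $p(t)e^{-ct^2}$ via an explicit Taylor-series inequality for $e^{ct^2}$ while you use continuity plus decay at infinity, which is an equally valid elementary verification of the same fact.
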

\begin{proof}
  Using the Taylor series for $e^{c t^2} = 1 + \sum_{i = 1}^\infty \frac{c^i
    x^{2i}}{i!}  \ge 1 + \frac{c^d x^{2d}}{d!}$, valid for any degree $d \in
  \N$, we have $p(t) e^{-t^2} \le \frac{c^{-d} d! p(t)}{c^{-d} d! +
    t^{2d}}$.  Let $d$ be any value greater than half the degree of $p$.
  For the second claim, note that $\Phi'(c t) = c \phi(c t)$, and $\phi'(c
  t) = -c t \phi(ct)$ so that $\phi^{(k)}(ct) = \mbox{poly}_k(t) \phi(c t)$,
  where $\mbox{poly}_k$ denotes a polynomial of degree $k$.  Then $\sup_t
  |f^{(k)}(t)| < \infty$ for all $k \in \N$ by the first claim of the lemma,
  giving the result.
\end{proof}

\begin{lemma}
  \label{lemma:intervals-to-intervals}
  Let $X, Y$ be random variables and $a \in \R$. Then
  $\P(|X - a| \le |Y|, |Y| \le \eta)
  \le \P(X \in [a - \eta, a + \eta])$.
\end{lemma}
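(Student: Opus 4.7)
The plan is to reduce the left-hand event to a subset of the right-hand event and then invoke monotonicity of probability. Concretely, on the joint event
$\{|X - a| \le |Y|\} \cap \{|Y| \le \eta\}$, we can chain the two inequalities: $|X - a| \le |Y| \le \eta$. This is just transitivity of $\le$, and it says exactly that $X \in [a - \eta, a + \eta]$.

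The key step, then, is the inclusion
\begin{equation*}
  \{|X - a| \le |Y|\} \cap \{|Y| \le \eta\}
  \subseteq \{X \in [a - \eta, a + \eta]\},
\end{equation*}
after which monotonicity of $\P$ gives the stated inequality immediately. No independence, absolute continuity, or regularity hypothesis on the joint distribution of $(X, Y)$ is needed, since the argument is purely pointwise on the underlying probability space.

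There is no meaningful obstacle here; the lemma is a one-line set-inclusion fact. Its role in the larger development (combined with Lemma~\ref{lemma:poly-phi-lipschitz} and the Edgeworth expansion~\eqref{eqn:edgeworth}) will be to convert a random bound of the form $|\unstudstatpriv - \unstudstatest| \le |Y|$ together with a concentration inequality on $Y$ into a uniform Kolmogorov-type bound for $\P(\unstudstatpriv \in I)$ versus $\P(\unstudstatest \in I)$ across intervals $I$, which is what Proposition~\ref{proposition:edgeworth} ultimately needs.
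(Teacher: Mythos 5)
Your argument is correct and matches the paper's own proof: both reduce the claim to the set inclusion $\{|X-a| \le |Y|\} \cap \{|Y| \le \eta\} \subseteq \{|X - a| \le \eta\}$ via transitivity, then apply monotonicity of probability. Nothing further is needed.
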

\begin{proof}
  The event that $|X - a| \le \eta$ contains
  the event $|X - a| \le |Y|$ and $|Y| \le \eta$.
\end{proof}

\begin{lemma}
  \label{lemma:from-non-private-edgeworth-to-private}
  Let $\unstudstatest
  = \sqrt{n}(\param(P) - \param(P_n))$
  be asymptotically normal, with $\unstudstatest \cd \normal(0, \sigma^2)$.
  Assume $\unstudstatest$ and $\blbstatistic$ admit
  Edgeworth expansions~\eqref{eqn:edgeworth} of order $k$,
  and assume that $\tparam(P_n)$ is $(\priverr, \privprob)$-accurate
  for $\param(P_n)$. Let $\unstudstatpriv = \sqrt{n}(\param(P)
  - \tparam(P_n))$ and
  $\tblbstatistic = \sqrt{n}(\param(P_\subsize) - \tparam(P_{\subsize,n}^*))$.
  Then there exists a constant $C$ depending only
  on the polynomials $p_i$ defining the Edgeworth
  expansion~\eqref{eqn:edgeworth-pop} such that for all $t \in \R$,
  \begin{equation*}
    \abs{\P\bigg(\frac{\unstudstatpriv}{\sigma} \le t\bigg)
      - \Phi(t) - \bigg(\sum_{i = 1}^k
      n^{-i/2} p_i(t) \bigg) \phi(t)}
    \le \privprob +
    C \left(\sqrt{n} \cdot \priverr + n^{-\frac{k+1}{2}}\right).
  \end{equation*}
  Additionally, there exists a constant $\what{C}$
  depending only on the polynomials
  $\what{p}_i$ defining the Edgeworth
  expansion~\eqref{eqn:edgeworth-blb} such that
  for a remainder $\remainder_{n,k}(t) =
  O_P(n^{-\frac{k + 1}{2}})$ uniformly in $t$,
  \begin{equation*}
    \abs{\P\bigg(\frac{\tblbstatistic}{\sigma(P_\subsize)} \le t
      \mid P_\subsize\bigg)
      - \Phi(t) - \bigg(\sum_{i = 1}^k
      n^{-i/2} \what{p}_i(t) \bigg) \phi(t)}
    \le \remainder_{n,k}(t) + \privprob + C \sqrt{n} \cdot \priverr.
  \end{equation*}
\end{lemma}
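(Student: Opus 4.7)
The plan is to reduce both inequalities to the given Edgeworth expansions by treating the private estimator as a small perturbation of the non-private one and then exploiting Lipschitz smoothness of the Edgeworth polynomial expansion. The key decomposition is
\begin{equation*}
  \unstudstatpriv = \unstudstatest + \sqrt{n}\,\Delta_n,
  \qquad \Delta_n \defeq \param(P_n) - \tparam(P_n),
\end{equation*}
and by $(\priverr,\privprob)$-accuracy we have $|\Delta_n| \le \priverr$ with probability at least $1-\privprob$. Splitting on this event yields the sandwich
\begin{equation*}
  \P\!\left(\tfrac{\unstudstatest}{\sigma} \le t - \tfrac{\sqrt{n}\priverr}{\sigma}\right) - \privprob
  \le \P\!\left(\tfrac{\unstudstatpriv}{\sigma} \le t\right)
  \le \P\!\left(\tfrac{\unstudstatest}{\sigma} \le t + \tfrac{\sqrt{n}\priverr}{\sigma}\right) + \privprob.
\end{equation*}

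Next I would apply the Edgeworth expansion~\eqref{eqn:edgeworth-pop} at the shifted arguments $t \pm \sqrt{n}\priverr/\sigma$ and compare to the expansion at $t$. Writing $f_n(t) \defeq \Phi(t) + \sum_{i=1}^k n^{-i/2} p_i(t)\phi(t)$, Lemma~\ref{lemma:poly-phi-lipschitz} produces a Lipschitz constant $L$ for $f_n$ that depends only on the polynomials $p_i$ (the coefficients $n^{-i/2} \le 1$ cannot enlarge it), so that $|f_n(t \pm \sqrt{n}\priverr/\sigma) - f_n(t)| \le L\sqrt{n}\priverr/\sigma$. Combining the sandwich, the shifted Edgeworth bound, and the uniform $O(n^{-(k+1)/2})$ remainder yields the first inequality with $C$ proportional to $L/\sigma$.

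For the bootstrap statement I would run the same argument conditionally on $P_\subsize$: write $\tblbstatistic = \blbstatistic + \sqrt{n}(\param(P_{\subsize,n}^*) - \tparam(P_{\subsize,n}^*))$ and invoke the conditional $(\priverr,\privprob)$-accuracy on the resample (valid eventually almost surely). The conditional Edgeworth expansion~\eqref{eqn:edgeworth-blb} then plays the role of~\eqref{eqn:edgeworth-pop}, with Lemma~\ref{lemma:poly-phi-lipschitz} again giving Lipschitz control on the shifted empirical expansion $t \mapsto \Phi(t) + \sum_{i=1}^k n^{-i/2}\what{p}_i(t)\phi(t)$.

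The main subtlety will be that the empirical polynomials $\what{p}_i$ and the variance $\sigma(P_\subsize)$ are themselves random, so the Lipschitz constant $\what{C}$ is a conditional random quantity rather than a deterministic one. I expect this to be handled by the moment hypotheses underlying~\eqref{eqn:edgeworth-blb}: both $\sigma(P_\subsize) \to \sigma > 0$ and the coefficients of $\what{p}_i$ converge conditionally almost surely, so $\what{C}$ is finite almost surely, and the remainder $\remainder_{n,k}(t)$ absorbs any leftover error uniformly in $t$.
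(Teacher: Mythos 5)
Your proposal is correct and follows essentially the same route as the paper: the same decomposition $\unstudstatpriv = \unstudstatest + \sqrt{n}(\param(P_n)-\tparam(P_n))$, splitting on the event $\{|\param(P_n)-\tparam(P_n)|\le\priverr\}$, and then controlling the resulting shift of at most $\sqrt{n}\priverr$ via Lipschitz continuity of the Edgeworth expansion (Lemma~\ref{lemma:poly-phi-lipschitz}), with the identical argument run conditionally on $P_\subsize$ for the bootstrap statement. Your sandwich inequality is just a repackaging of the paper's interval-containment step (Lemma~\ref{lemma:intervals-to-intervals}), and your remark on the randomness of $\what{p}_i$ and $\sigma(P_\subsize)$ matches the paper's restriction to the eventual event $\sigma^2(P_\subsize)>0$.
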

\begin{proof}
  We prove the first expansion first. Without loss of generality,
  we assume $\sigma = 1$ to simplify notation.
  Adding and subtracting, we observe that
  \begin{equation*}
    \P(\unstudstatpriv \le t)
    = \P(\unstudstatest \le t) +
    \left(\P(\unstudstatpriv \le t) - \P(\unstudstatest \le t)\right),
  \end{equation*}
  so that it suffices to bound the latter probability.
  For this, observe that for $W = \tparam(P_n) - \param(P_n)$,
  we have
  \begin{align*}
    \lefteqn{\P(\unstudstatpriv \le t) - \P(\unstudstatest \le t)
      = \P(\unstudstatest \le t + W \sqrt{n})
      - \P(\unstudstatest \le t)} \\
    & = \P(\unstudstatest \le t + W \sqrt{n},
    |W|  \le \priverr)
    - \P(\unstudstatest \le t, |W| \le \priverr)
    + \P(\unstudstatest \le t + W \sqrt{n},
    |W| > \priverr)
    - \P(\unstudstatest \le t, |W| > \priverr) \\
    & = \P(\unstudstatest \in [t, t + W \sqrt{n}],
    |W| \le \priverr)
    + \P(\unstudstatest \in [t, t + W \sqrt{n}],
    |W| > \priverr).
  \end{align*}
  Now we use Lemma~\ref{lemma:intervals-to-intervals} to
  observe that
  \begin{equation*}
    \P(\unstudstatest \in [t, t + W \sqrt{n}], |W| \le \priverr)
    \le \P(\unstudstatest \in [t - \priverr \sqrt{n},
      t + \priverr \sqrt{n}]),
  \end{equation*}
  and by assumption $\P(|W| \ge \priverr) \le \privprob$. So
  \begin{equation*}
    \left|\P(\unstudstatpriv \le t) - \P(\unstudstatest \le t)
    \right|
    \le \P(\unstudstatest \in [t - \priverr \sqrt{n}, t + \priverr \sqrt{n}])
    + \privprob.
  \end{equation*}
  Lastly, observe that
  by Lemma~\ref{lemma:poly-phi-lipschitz}
  there is some constant $C$, depending on the polynomials
  defining the Edgeworth expansion~\eqref{eqn:edgeworth-pop},
  for which
  $\P(\unstudstatest \in [t - \delta, t + \delta])
  \le C (\delta + n^{-\frac{k + 1}{2}})$ simultaneously
  for all $t \in \R$ and $\delta \ge 0$.

  For the second equality, the proof is \emph{mutatis mutandis} completely
  analogous, except that we rely instead on the
  expansion~\eqref{eqn:edgeworth-blb}, and give the argument only on the
  (eventual) event that $\sigma^2(P_\subsize) > 0$.
\end{proof}

We can now give the proof of Proposition~\ref{proposition:edgeworth}
proper. By Lemma~\ref{lemma:from-non-private-edgeworth-to-private},
we have
\begin{equation*}
  \left|\P\left(\unstudstatpriv \le t \right)
  - \P\left(\tblbstatistic \le t \mid P_\subsize\right) \right|
  \le \left|\Phi\left(\frac{t}{\sigma}\right)
  - \Phi\left(\frac{t}{\sigma(P_\subsize)}\right)\right|
  + O_P\left(n^{-1/2} + \sqrt{n} \cdot \priverr\right) + \privprob.
\end{equation*}
Because $\Phi$ has Lipschitz derivatives of all orders, that
$\sigma^2(P_\subsize) = \sigma^2 + O_P(\subsize^{-1/2})$ then yields
the proposition.

\subsection{Proof of Proposition~\ref{proposition:edgeworth-moment}}
\label{sec:proof-edgeworth-moment}

Throughout this argument, we understand all steps to hold
asymptotically conditionally (on the subsample $P_\subsize$) almost surely,
so we leave it tacit.
Let $\tblbstatistic = \sqrt{n}(\tparam(P_{\subsize,n}^*) -
\param(P_\subsize))$ and $\blbstatistic = \sqrt{n}(\param(P_{\subsize,n}^*)
- \param(P_\subsize))$ as usual.  Then
(ignoring Monte Carlo error), we have
\begin{equation*}
  \bootvaralg(\param(\cdot), \tparam(\cdot), P_\subsize, n)
  = \E[(\tblbstatistic)^2 \mid P_\subsize],
\end{equation*}
so it is sufficient to control
$\sigma^2(P_\subsize) - \E[(\tblbstatistic)^2 \mid P_\subsize]$.
We first observe that
$\tblbstatistic = \blbstatistic + \sqrt{n} \privtononpriv$, so
that
\begin{align*}
  \E[(\tblbstatistic)^2 - (\blbstatistic)^2 \mid P_\subsize]
  & = n\E[\privtononpriv^2 \mid P_\subsize]
  + 2 \sqrt{n} \E[\blbstatistic \privtononpriv \mid P_\subsize] \\
  & \le n\E[\privtononpriv^2 \mid P_\subsize]
  + 2 \sqrt{n} \E[(\blbstatistic)^2 \mid P_\subsize]^{1/2}
  \E[\privtononpriv^2 \mid P_\subsize]^{1/2}
\end{align*}
by Cauchy-Schwarz. By the assumed Edgeworth
expansions~\eqref{eqn:edgeworth-variance}, we have $\E[(\blbstatistic)^2
  \mid P_\subsize] = O_P(1)$, and as we assume
$n \E[\privtononpriv^2 \mid P_\subsize] \cp 0$, we obtain
\begin{equation*}
  \left|\E[(\tblbstatistic)^2 - (\blbstatistic)^2 \mid P_\subsize]\right|
  \le O_P(1)
  \sqrt{n \E[\privtononpriv^2 \mid P_\subsize]}.
\end{equation*}
Now we use the resampling Edgeworth
expansions~\eqref{eqn:edgeworth-variance-blb}, which give that
\begin{equation*}
  \E[(\blbstatistic)^2 \mid P_\subsize]
  = \sigma^2(P_\subsize) + O_P(1/n),
\end{equation*}
and using the standing Edgeworth assumption that
$\sigma^2(P_\subsize) = \sigma^2 + O_P(1/\sqrt{\subsize})$,
we obtain the result.

\section{Experimental Details}\label{appen:expts}

\newcommand{\paramdiffp}{\diffp_\theta}
\newcommand{\sigmadiffp}{\diffp_\sigma}
\newcommand{\quantdiffp}{\diffp_\textup{q}}

We detail our
experimental methodology. We
describe algorithms and hyperparameters in \Cref{appen:alg-expts},
and we describe the synthetic and real datasets in
\Cref{appen:datasets}, providing additional ablation plots
in Section~\ref{sec:ablations}.
We choose $\alpha = .05$ to target 95\% confidence intervals for each.

\subsection{Algorithms}\label{appen:alg-expts}

As we mention in the experiments section, each trial uses a total
privacy budget $\totdiffp = 2 \diffp$, where we $\tparam(P_n)$ is
$\diffp$-differentially private, and the interval $I_{\what{t}}$ is
$\diffp$-differentially private.

\subsubsection{\textsc{NonprivateBaseline}}

We bootstrap $\wt{U}_n^* \defeq \sqrt{n}(\param(P_n) - \tparam(P_n^*))$, the
standard bootstrap resample of $P_n$, reporting the actual percentiles of
$\wt{U}_n^*$ conditional on $P_n$ as the confidence set.  We perform
$\nmontecarlo= \min\{10000,\max\{100,\frac{n^{1.5}}{\log(n)}\}\}$ Monte
Carlo iterations; increasing the number iterations beyond this value changes
coverage little.

\subsubsection{$\blbvar$}

We use $\blbvar$ to estimate the second moment of the
statistic $\unstudstatpriv = \sqrt{n}(\param - \tparam(P_n))$ and use normal
approximation (Corollary~\ref{corollary:coverage-from-variance}) to
construct the confidence interval.  We use $\nmontecarlo =
\min\{10000,\max\{100,\frac{n^{1.5}}{\nsubs \log(n)}\}\}$ Monte Carlo
iterations for each little bootstrap.

To instantiate the algorithm, we need to set two other hyperparameters: the
number of partitions of the dataset $\nsubs$ and the upper bound on the
variance $\sigmamax^2$; we let $R^2 = \sigmamax^2 / \sigma^2$, where
$\sigma^2$ is the true variance so that $R^2$ is the ratio of the upper
bound used in the algorithm and the true variance. We choose $\nsubs \asymp
\frac{1}{\diffp} \log n$ as Theorems~\ref{thm:blb-abthreshmed}
and~\ref{thm:blb-var} suggest, so that we vary the particular multiplier $K$
in $\nsubs = \lfloor \frac{K \log n}{\diffp}\rfloor$.  We investigate the
effect of varying $K$ in \Cref{fig:blbvar-R-10,fig:blbvar-R-300}. For a
ratio $R = \sigmamax / \sigma$ (10 in \Cref{fig:blbvar-R-10}, indicating a
$100\times$ overestimate of the variance), all values of $K$ have similar
performance, with performance dropping slightly as we increase the value of
$K$. For a larger $R$ (300 in \Cref{fig:blbvar-R-300}), we see that smaller
values of $K$ overestimate the variance and output very wide intervals with
higher than required coverage. We also vary the upper bound $\sigmamax^2 =
R^2\sigma^2$ in \Cref{fig:blbvar-K-6,fig:blbvar-K-10}. While a smaller value
of $R$ is obviously better for the performance of the algorithm at all
values of $K$, the performance of the algorithm is stable for higher values
of $K$ (i.e., more subsamples $\nsubs$).

Because the optimal value of multiplier $K$ varies, we default to $K=10$ and
$R \approx 50$ for illustration of results. The probability of catastrophic
failure (outputting very wide confidence sets) of the private median
subroutine has bound $\failprob \le n R^2 \sigma^2 e^{-K \log(n)/(2\diffp)}$
when we set $\nsubs = \frac{K \log n}{\diffp}$ and use smoothing parameter
$\smoothparam = n^{-1}$ (again, see Theorem~\ref{thm:blb-var}).  Choosing
higher $K$ reduces the probability of catastrophic failure but
simultaneously decreases the subsample size $\subsize = \frac{n}{\nsubs}$ in
each little bootstrap, which may reduce accuracy. Setting
$K$ slightly higher than 10 (say
12 or 14) allows commpensation for poor bounds on the true variance.

\begin{figure}
  \begin{subfigure}{0.5\textwidth}
    \includegraphics[width=\linewidth]{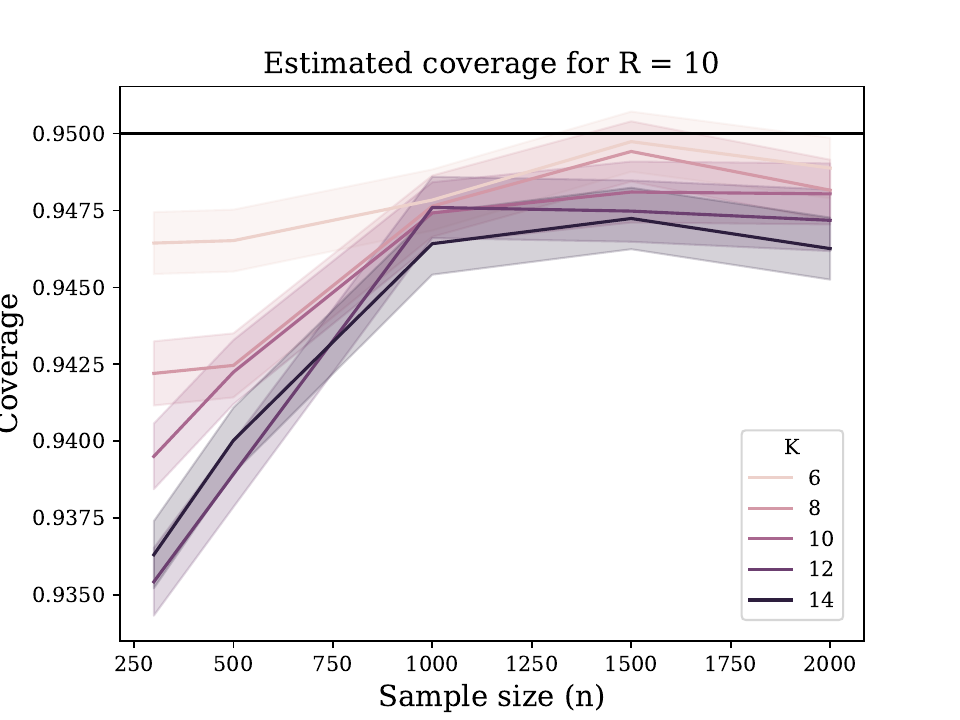}
    \caption{Coverage varying $K$, fixing $R = 10$} \label{fig:blbvar-R-10}
  \end{subfigure}\hspace*{\fill}
  \begin{subfigure}{0.5\textwidth}
    \includegraphics[width=\linewidth]{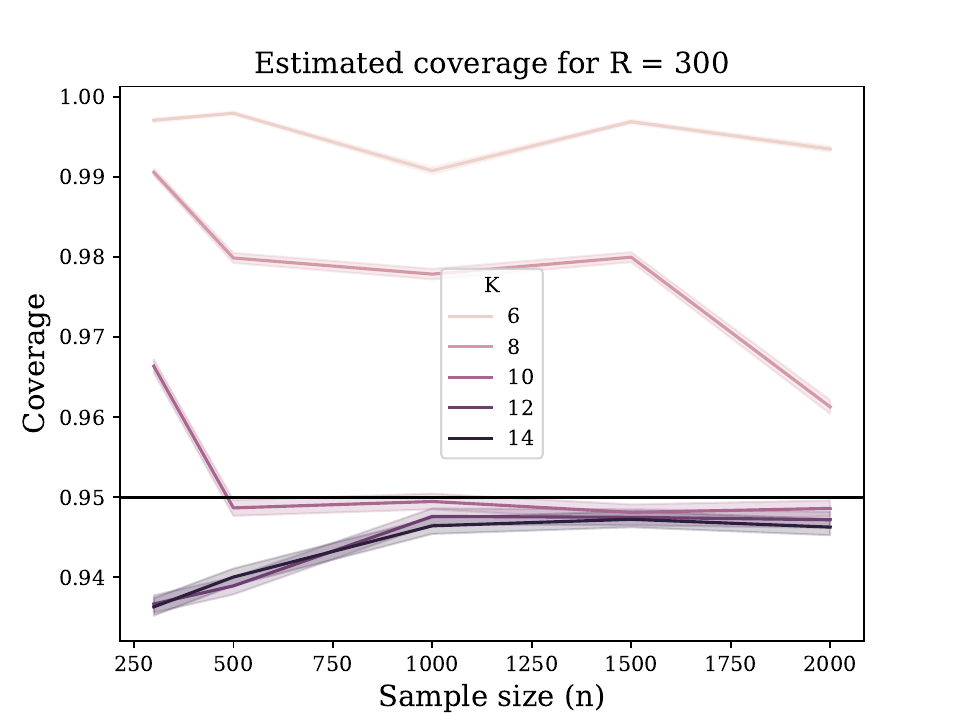}
    \caption{Coverage varying $K$, fixing $R = 300$} \label{fig:blbvar-R-300}
	\end{subfigure}
    \caption{Hyperparameter sensitivity of $\blbvar$ on mean estimation with $\totdiffp = 8$}
\end{figure}

\begin{figure}
  \begin{subfigure}{0.5\textwidth}
    \includegraphics[width=\linewidth]{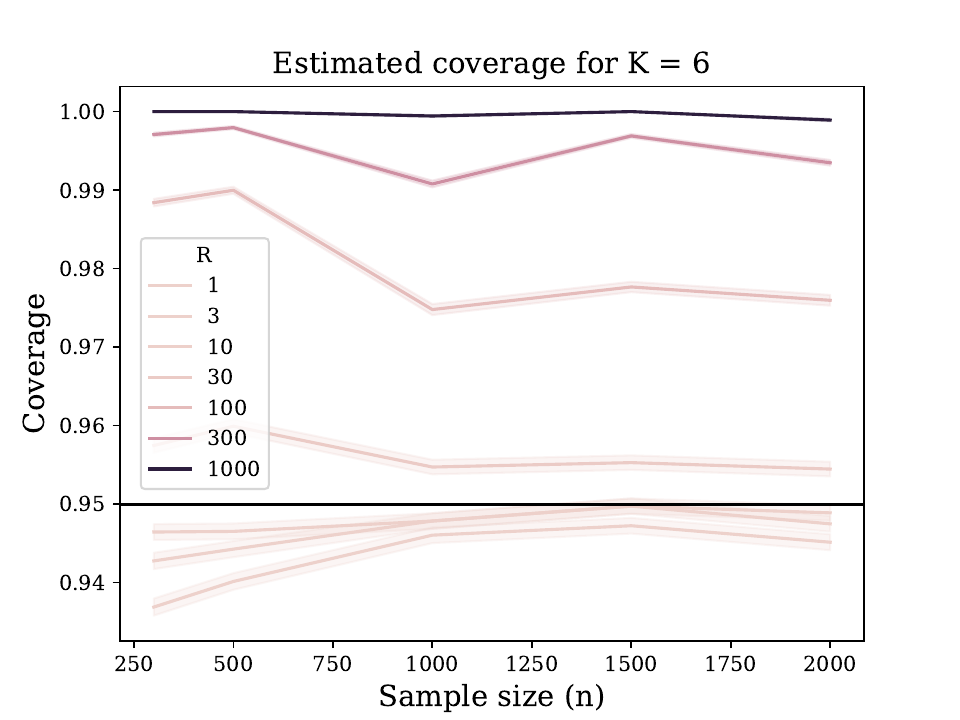}
    \caption{Coverage for varying $R$, fixing $K = 6$} \label{fig:blbvar-K-6}
  \end{subfigure}\hspace*{\fill}
  \begin{subfigure}{0.5\textwidth}
    \includegraphics[width=\linewidth]{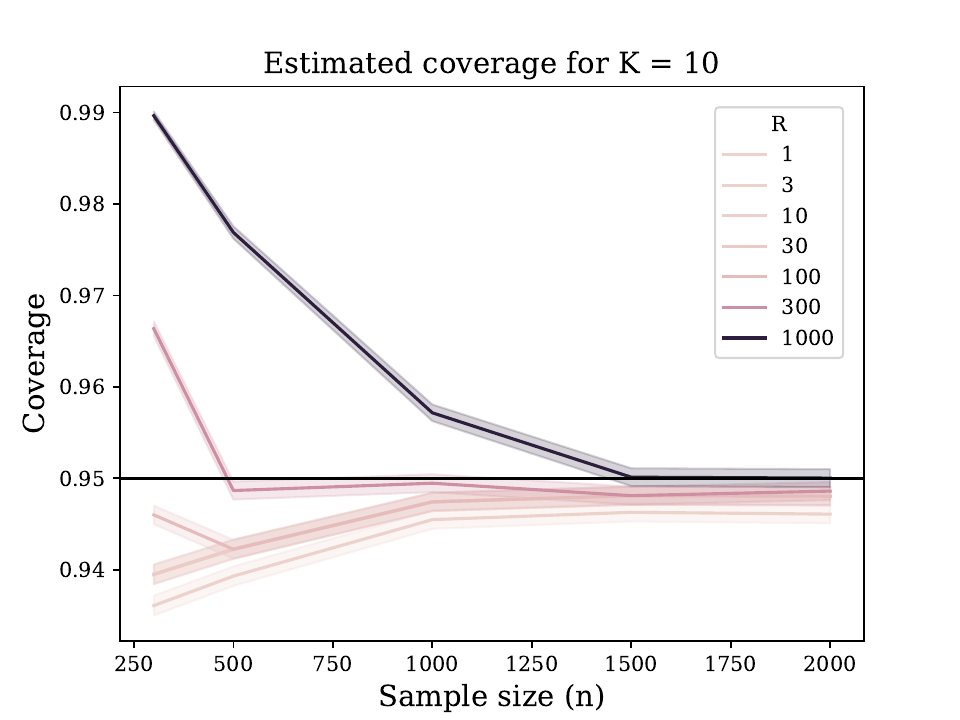}
    \caption{Coverage varying $R$, fixing $K = 10$} \label{fig:blbvar-K-10}
  \end{subfigure}
  \caption{Hyperparameter sensitivity of $\blbvar$ on mean estimation with $\totdiffp = 8$}
\end{figure}

\subsubsection{$\blbquant$}

We use $\blbquant$ to estimate the quantiles of the distribution of the
centered statistic $\unstudstatpriv = \sqrt{n}(\param - \tparam(P_n))$ and
use the private percentile method to construct the confidence interval. We
use $\nmontecarlo = \min\{10000,\max\{100,\frac{n^{1.5}}{\nsubs
  \log n}\}\}$ Monte Carlo iterations for each little bootstrap.

To instantiate the algorithm, we need to set the number of partitions
$\nsubs$ of the dataset and the sequence of sets $I_t$ amongst which we find
the first set with coverage more than $1 - \alpha$. We choose $I_t =
[-th,th]$ for $t = 1,2,\dots$ and $h = c/\sqrt{n}$ as we discuss after
Theorem~\ref{thm:blb-abthreshmed} and
Corollary~\ref{corollary:coverage-from-percentile}, studying the effect of
varying $c$ in \cref{fig:blbquant-K-6,fig:blbquant-K-10,fig:blbquant-K-14}.
Increasing $c$ means that fewer sets are required to achieve coverage
$\P(\unstudstatpriv \in I_t) \ge 1 - \alpha$, though too large of a $c$
increases the increment between $\P(\unstudstatpriv \in I_t)$ and
$\P(\unstudstatpriv \in I_{t + 1})$, yielding overcoverage. As in
the experiments with $\blbvar$, larger values of the muliplier $K$ in the
choice $\nsubs = \floor{\frac{K \log n }{\diffp}}$ yield more accurate
coverage (see the vertical axes in Figure~\ref{fig:hyperparameter-blb-quant}).
We conduct all mean and median estimation experiments with $c=1$ to
remain agnostic as the plots suggest it is a reasonable default.

As we do for $\blbvar$, we choose the number of partitions $\nsubs = \lfloor
\frac{K \log n}{\diffp}\rfloor$, varying $K$ in
\Cref{fig:blbquant-c-0.3,fig:blbquant-c-1}. As in the case of $\blbvar$,
beyond a threshold value for $K$, we find no significant improvement in the
coverage error and below that value the sensitivity to the granularity $c$
of the intervals $I_t$ is much higher. We choose $K = 10$ for all problem
settings.

\begin{figure}
  \begin{subfigure}{0.33\textwidth}
    \includegraphics[width=\linewidth]{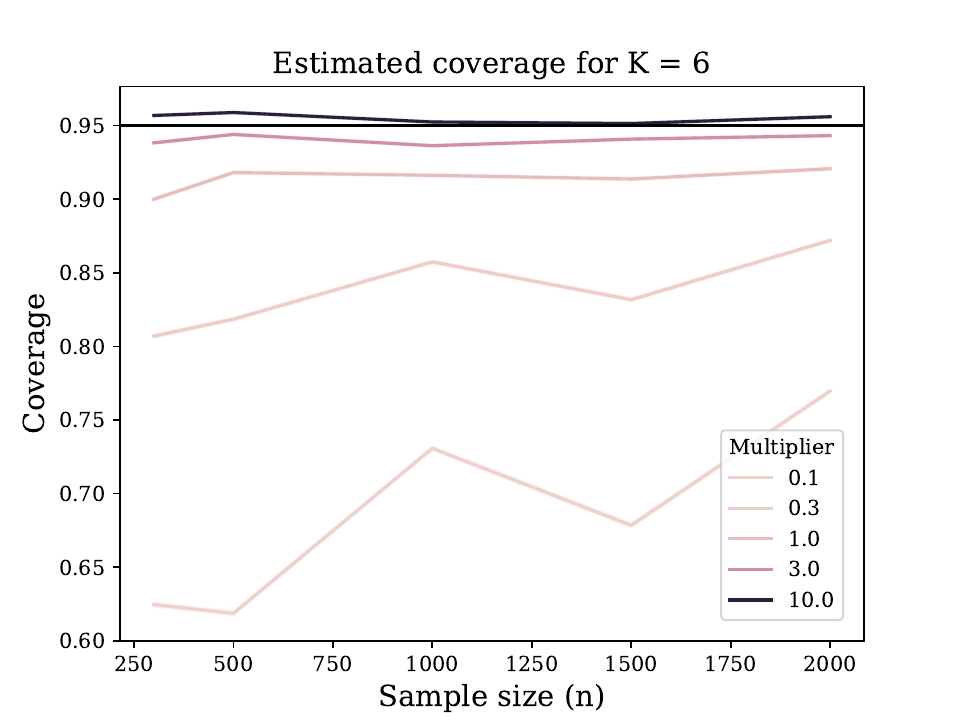}
    \caption{\small Varying $c$, fixing $K = 6$} \label{fig:blbquant-K-6}
  \end{subfigure}\hspace*{\fill}
  \begin{subfigure}{0.33\textwidth}
    \includegraphics[width=\linewidth]{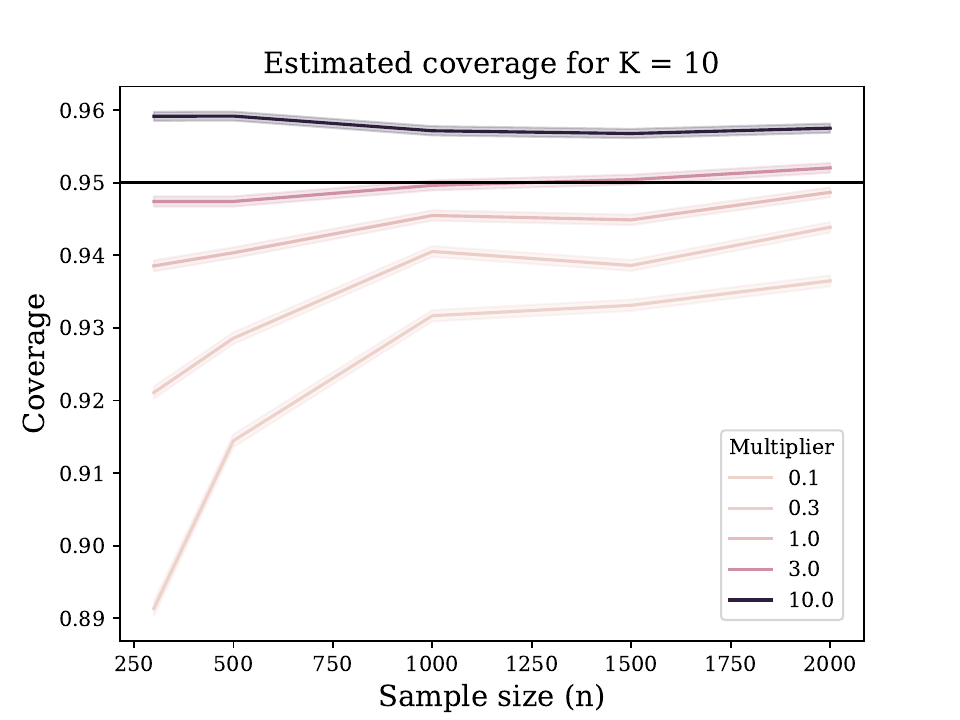}
    \caption{\small Varying $c$, fixing $K = 10$} \label{fig:blbquant-K-10}
  \end{subfigure}\hspace*{\fill}
  \begin{subfigure}{0.33\textwidth}
    \includegraphics[width=\linewidth]{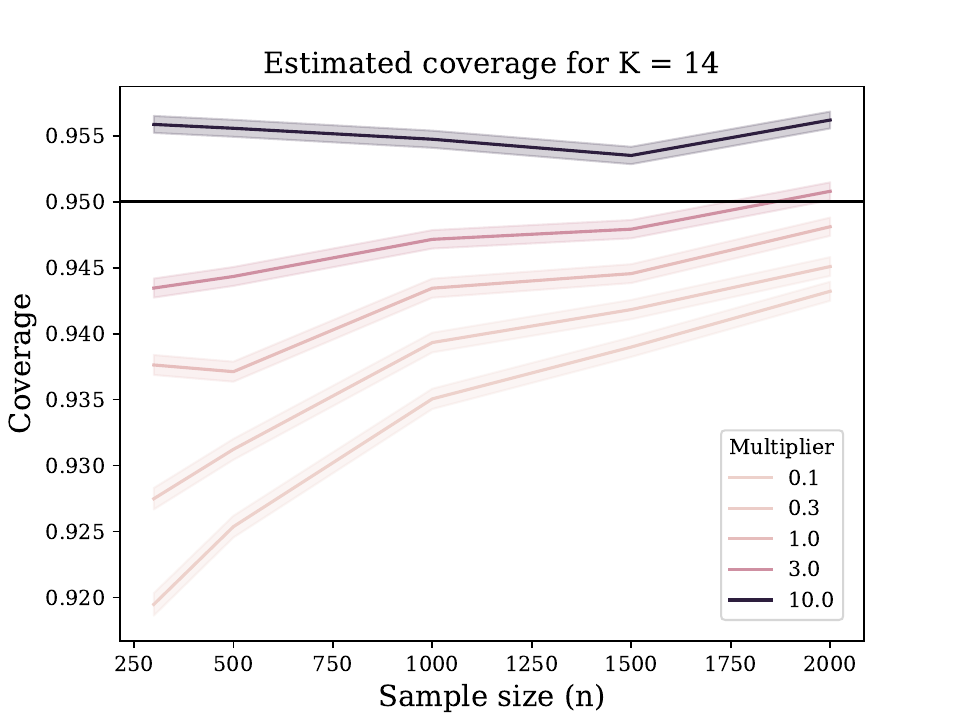}
    \caption{\small Varying $c$, fixing $K = 14$} \label{fig:blbquant-K-14}
  \end{subfigure}
  \caption{\label{fig:hyperparameter-blb-quant} Coverage rates
    for $\blbquant$ on mean estimation with $\totdiffp = 8$.
    Each plot varies $c$ in the interval sets $I_t = [-c t /\sqrt{n},
    ct / \sqrt{n}]$, fixing multiplier $K$.}
\end{figure}

\begin{figure}
	\begin{subfigure}{0.45\textwidth}
		\includegraphics[width=\linewidth]{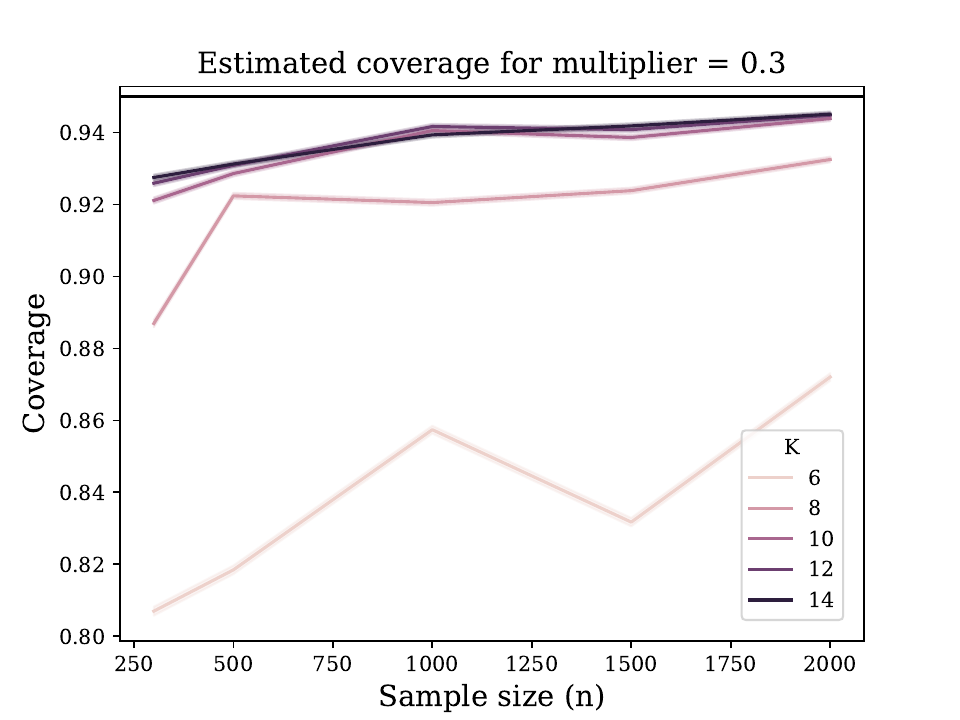}
		\caption{\small Coverage for different K and R = 10} \label{fig:blbquant-c-0.3}
	\end{subfigure}\hspace*{\fill}
	\begin{subfigure}{0.45\textwidth}
		\includegraphics[width=\linewidth]{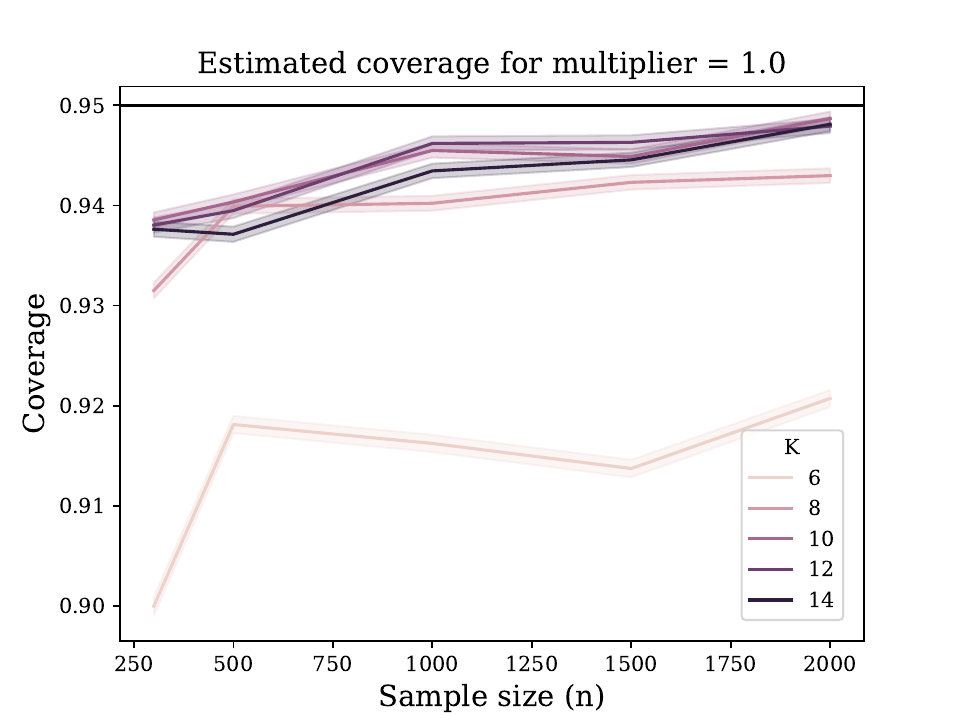}
		\caption{\small Coverage for different K and R = 300} \label{fig:blbquant-c-1}
	\end{subfigure}
    \caption{Hyperparameter sensitivity of $\blbquant$ on mean estimation with $\totdiffp = 8$}
\end{figure}

\subsubsection{\gvdp}

We also compare against the ``Generally Valid Differential Privacy'' (\gvdp)
algorithm \citet{CovingtonHeHoKa21} propose. We use a slight modification of
the authors' implementation. We use the same
hyperparameters as suggested in that code and run the mean estimation
algorithm CoinPress for $t = 5$ iterations. We tune the number of subsets
$\nsubs$ and find that although the algorithm is sensitive to the number of
subsets at small $n$ values (in the range we consider), using $s = \sqrt{n}$
works well for the mean estimation setting. Since the algorithm is defined
for $\rho$-concentrated DP we find a $\rho_{\rm tot}$ such that an algorithm
satisfying $(\totdiffp,\delta)$ also satisfies $\rho_{\rm tot}$-concentrated
DP for $\delta = 1/n^{1.1}$. Lastly, since \gvdp also uses an
upper bound on the variance of the statistic, we set it to be 10 times the
true variance.

\subsection{Experimental Settings}\label{appen:datasets}

\paragraph{Mean Estimation:}
For mean estimation, we do experiments on a synthetic dataset for which we
sample from a truncated Gaussian distribution. We truncate a gaussian
distribution with mean $0$ and variance $4$ at $-6$ and $4$, which induces
some skewness in the distribution, and the resulting distribution has mean
$\approx-0.1$ and variance $3.49$. We repeated these experiments for other
truncated Gaussian distributions with more skewness, for symmetric truncated
gaussian distributions with no skewness, and for truncated distributions
with a triangular probability density function (see
\texttt{scipy.stats.triang}) and obtained similar results to those for the
truncated Gaussian distribution. We used an independent Laplace noise
mechanism as a private estimator (Example~\ref{example:resampling-mean}).
We use $\Ntrial = 160$ and $\Nresamp = 625$.

\begin{figure}[t!] %
	\begin{subfigure}{0.45\textwidth}
		\includegraphics[width=\linewidth]{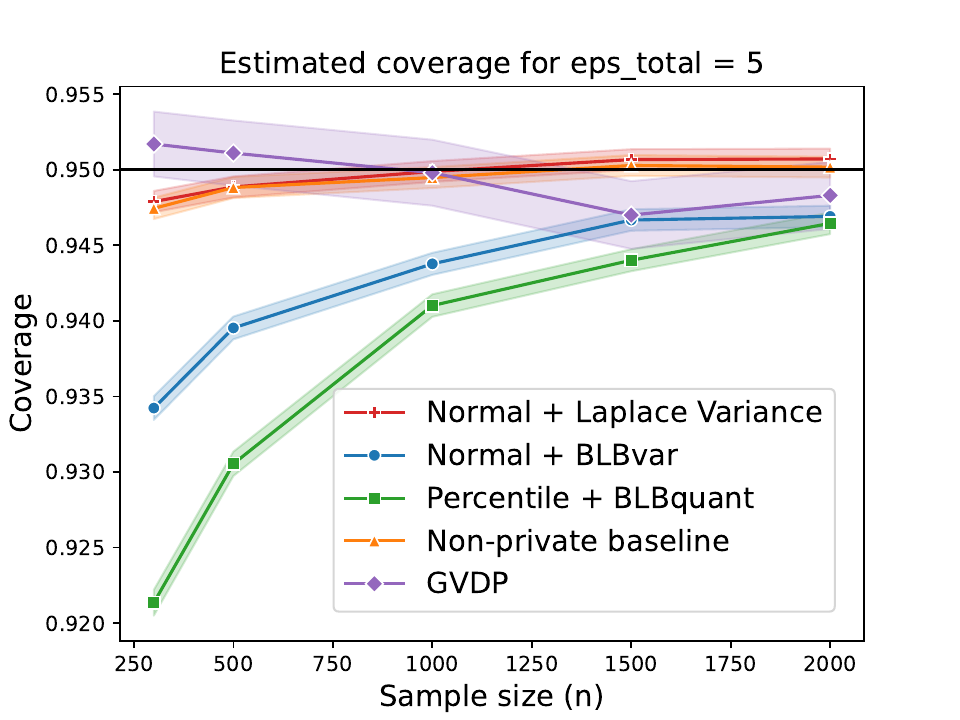}
		\caption{\small Mean estimation coverage} \label{fig:mean-coverage5}
	\end{subfigure}\hspace*{\fill}
	\begin{subfigure}{0.45\textwidth}
		\includegraphics[width=\linewidth]{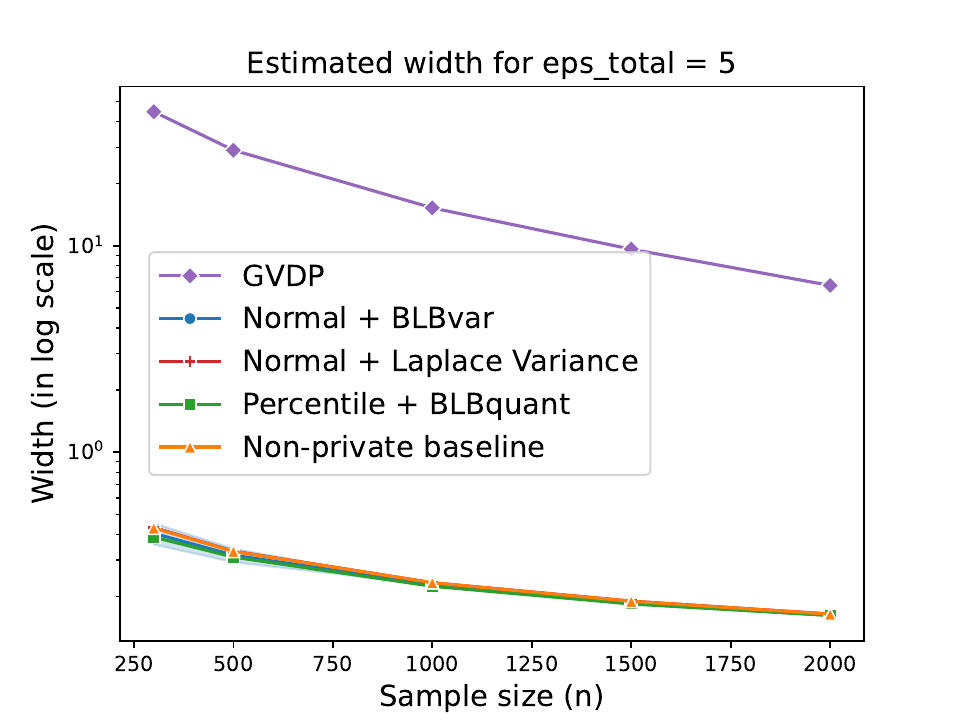}
		\caption{\small Mean estimation width} \label{fig:mean-width5}
	\end{subfigure}
    \caption{Mean estimation with $\totdiffp = 5$}
\end{figure}

\begin{figure}[t!] %
  \begin{subfigure}{0.45\textwidth}
    \includegraphics[width=\linewidth]{figures/mean_est_covg_eps_8.pdf}
    \caption{\small Mean estimation coverage} \label{fig:mean-coverage8}
  \end{subfigure}\hspace*{\fill}
  \begin{subfigure}{0.45\textwidth}
    \includegraphics[width=\linewidth]{figures/mean_est_len_eps_8.pdf}
    \caption{\small Mean estimation width} \label{fig:mean-width8}
  \end{subfigure}
  \caption{Mean estimation with $\totdiffp = 8$}
\end{figure}

\begin{figure}[t!] %
  \begin{subfigure}{0.45\textwidth}
    \includegraphics[width=\linewidth]{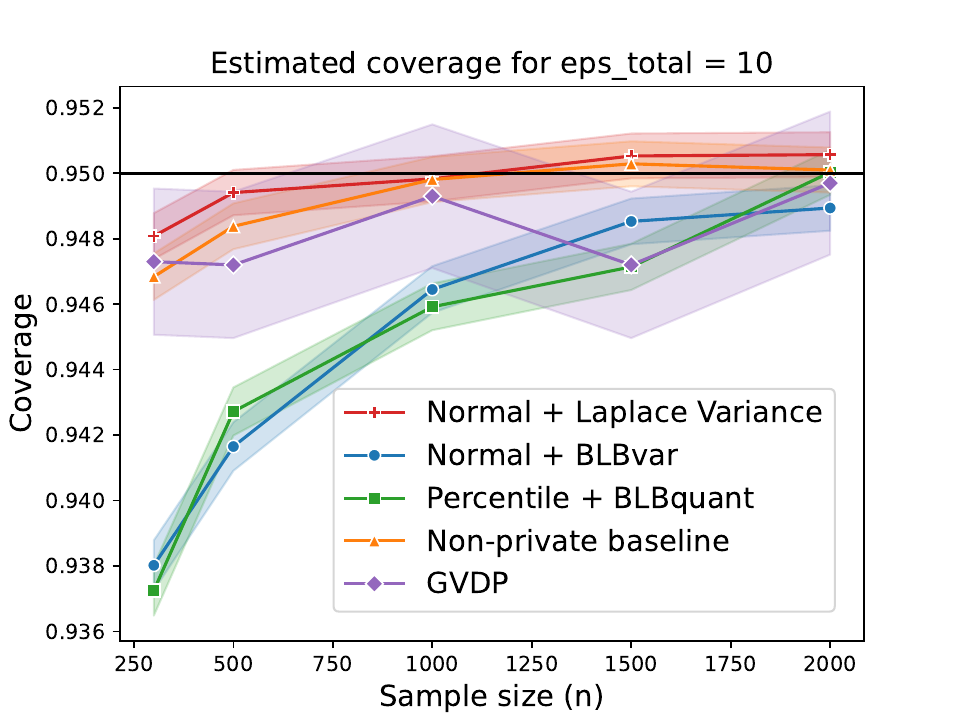}
    \caption{\small Mean estimation coverage} \label{fig:mean-coverage10}
  \end{subfigure}\hspace*{\fill}
  \begin{subfigure}{0.45\textwidth}
    \includegraphics[width=\linewidth]{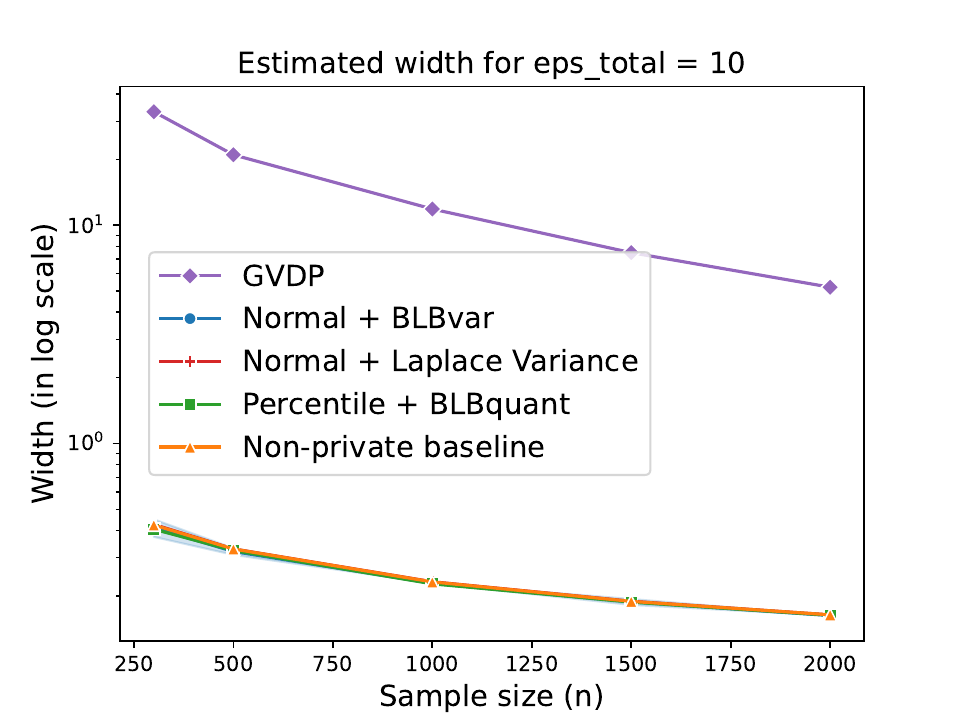}
    \caption{\small Mean estimation width} \label{fig:mean-width10}
  \end{subfigure}
  \caption{Mean estimation with $\totdiffp = 10$}
\end{figure}

\paragraph{Median Estimation:}
For median estimation, we do experiments on a synthetic dataset for which we
again sample from the same Truncated Gaussian distribution as in mean
estimation. We truncate a gaussian distribution with mean $0$ and variance
$4$ at $-6$ and $4$, which induces some skewness in the distribution, and
the resulting distribution has median $\approx -0.054$, and the
sample median has asymptotic variance $\approx 5.99$. We use
$\privmed$, Alg.~\ref{algorithm:priv-median}~\cite{AsiDu20}.
We use $\Ntrial = 80$
and $\Nresamp = 625$.

\begin{figure}[t!] %
  \begin{subfigure}{0.45\textwidth}
    \includegraphics[width=\linewidth]{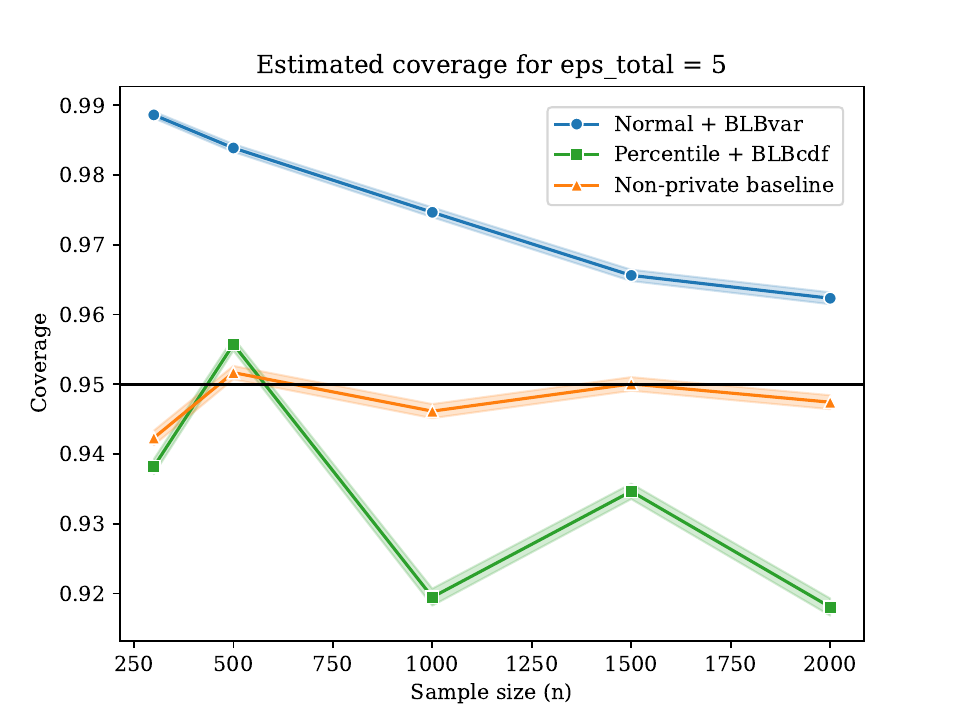}
    \caption{\small Median estimation coverage} \label{fig:median-coverage5}
  \end{subfigure}\hspace*{\fill}
  \begin{subfigure}{0.45\textwidth}
    \includegraphics[width=\linewidth]{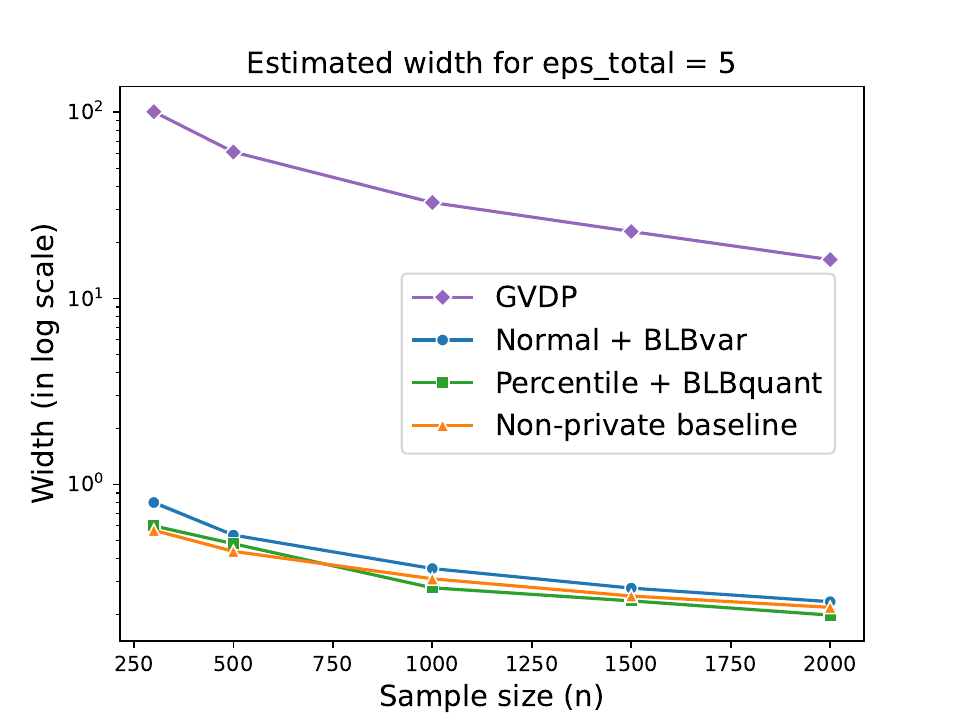}
    \caption{\small Median estimation width} \label{fig:median-width5}
  \end{subfigure}
  \caption{Median estimation with $\totdiffp = 5$}
\end{figure}

\begin{figure}[t!] %
  \begin{subfigure}{0.45\textwidth}
    \includegraphics[width=\linewidth]{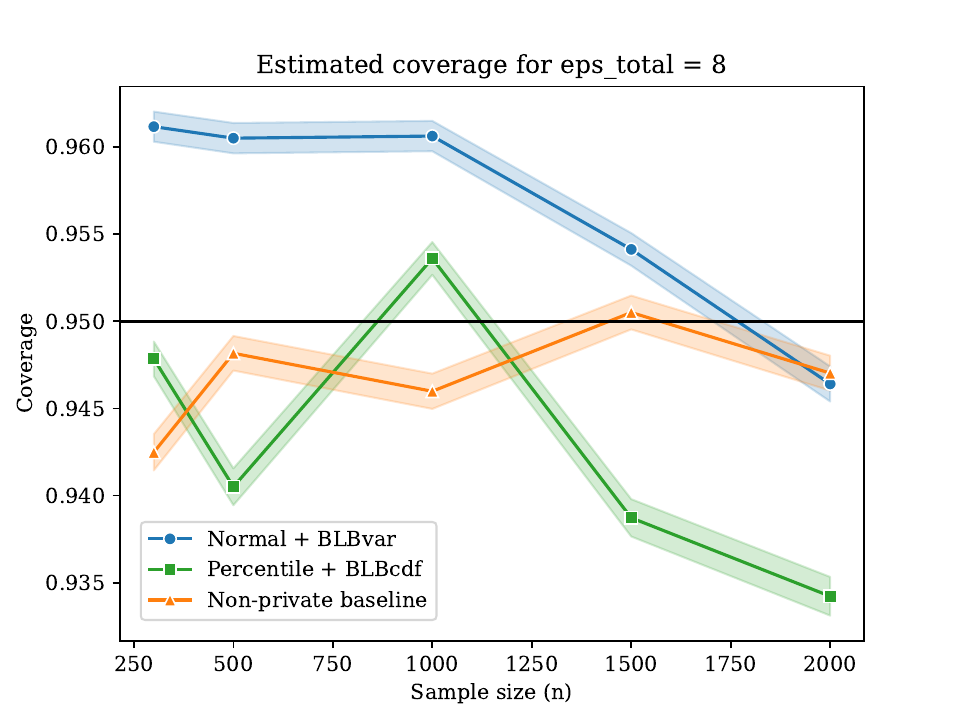}
    \caption{\small Median estimation coverage} \label{fig:median-coverage8}
  \end{subfigure}\hspace*{\fill}
  \begin{subfigure}{0.45\textwidth}
    \includegraphics[width=\linewidth]{figures/median_est_len_eps_8.pdf}
    \caption{\small Median estimation width} \label{fig:median-width8}
  \end{subfigure}
  \caption{Median estimation with $\totdiffp = 8$}
\end{figure}

\begin{figure}[t!] %
  \begin{subfigure}{0.45\textwidth}
    \includegraphics[width=\linewidth]{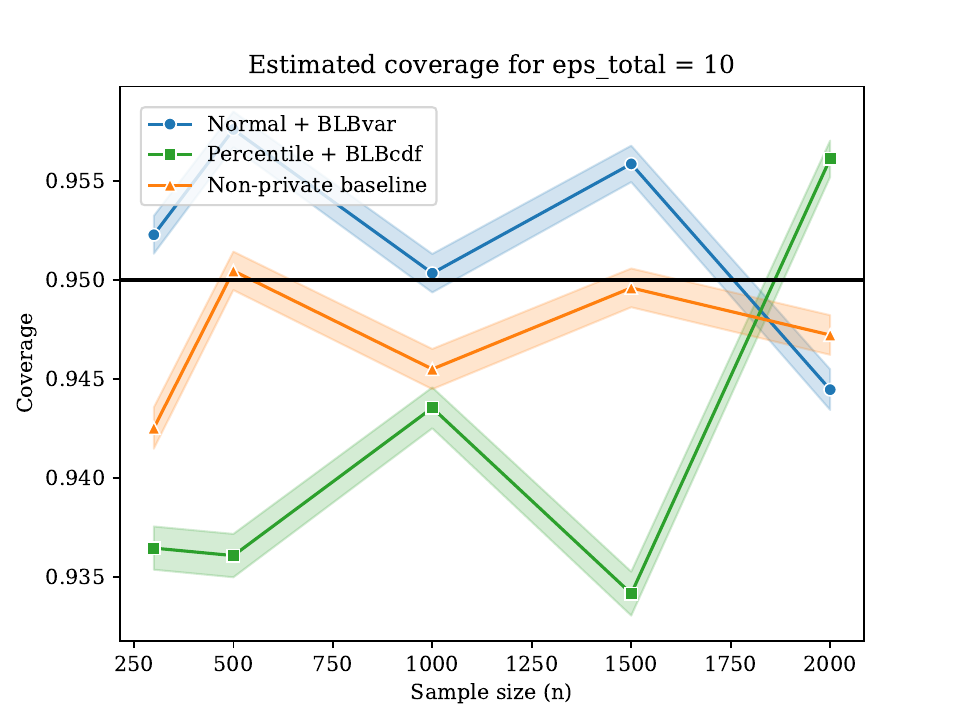}
    \caption{\small Median estimation coverage} \label{fig:median-coverage10}
  \end{subfigure}\hspace*{\fill}
  \begin{subfigure}{0.45\textwidth}
    \includegraphics[width=\linewidth]{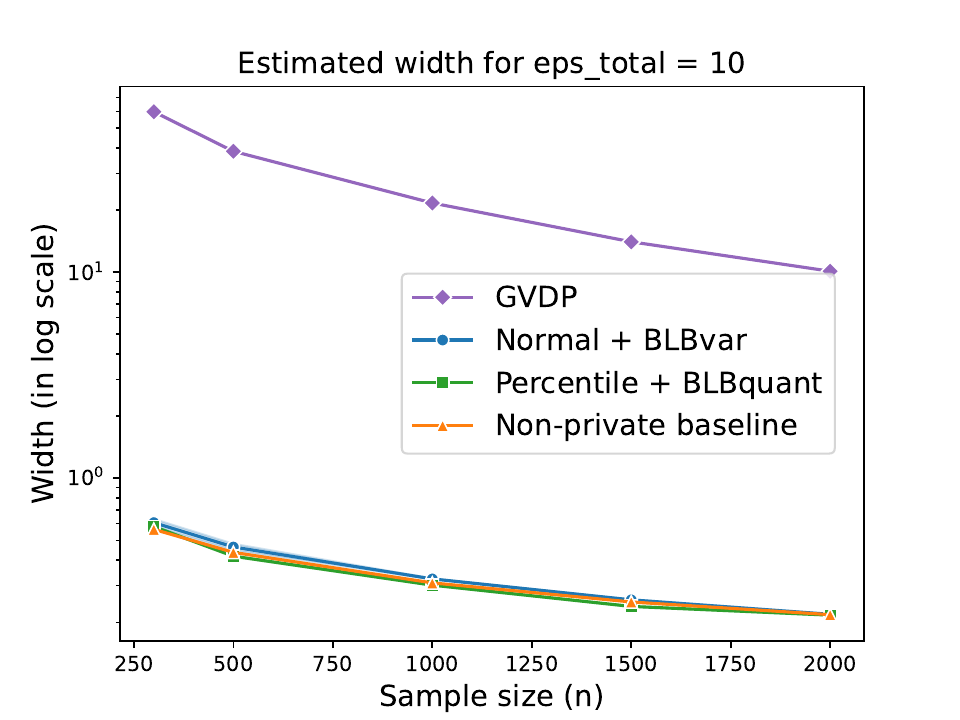}
    \caption{\small Median estimation width} \label{fig:median-width10}
  \end{subfigure}
  \caption{Median estimation with $\totdiffp = 10$}
\end{figure}

\paragraph{Logistic Regression:}
For logistic regression, we \citeauthor{DingHaMiSc21}'s
updated \textit{Adult
  Income} dataset~\cite{DingHaMiSc21} and their
\texttt{folktables} package.
We construct a dataset
of $n_{\rm tot} = 1587856$ datapoints with 4 features corresponding to age,
school, working hours per week, sex of the individual, and an additional
intercept (bias) term, normalizing all features to lie in $[0,1]$.
The task is to predict whether
an individual earns more than \$30K per year, and to provide
a confidence interval for the parameter on the Sex variable. In our
dataset, we have 53.8\% positive labels. The
non-private ERM solution using all the data
has accuracy of 77.8\%. We use this solution as
the ``true solution'' to calculate coverage.

We perform experiments on smaller sample sizes ($n = 2000$ to $8000$) by
sampling $n$ points from the whole dataset of $n_{\rm tot}$ points. We use
an approximation to the inverse sensitivity mechanism~\cite[Section
  5.2]{AsiDu20} to privately estimate the logistic regression
problem. \citet{AsiDu20} give a first-order accurate approximation to the
private estimator, which we use as the full sampling distribution requires a
sophisticated Metropolis Hastings scheme, and our main focus is on the
accuracy of intervals from the bootstrap rather than the initial estimator
itself.  We use $\Ntrial = 80$ and $\Nresamp = 625$, studying coverage for
the single parameter corresponding to the sex variable.

\begin{figure}[t!] %
  \begin{subfigure}{0.45\textwidth}
    \includegraphics[width=\linewidth]{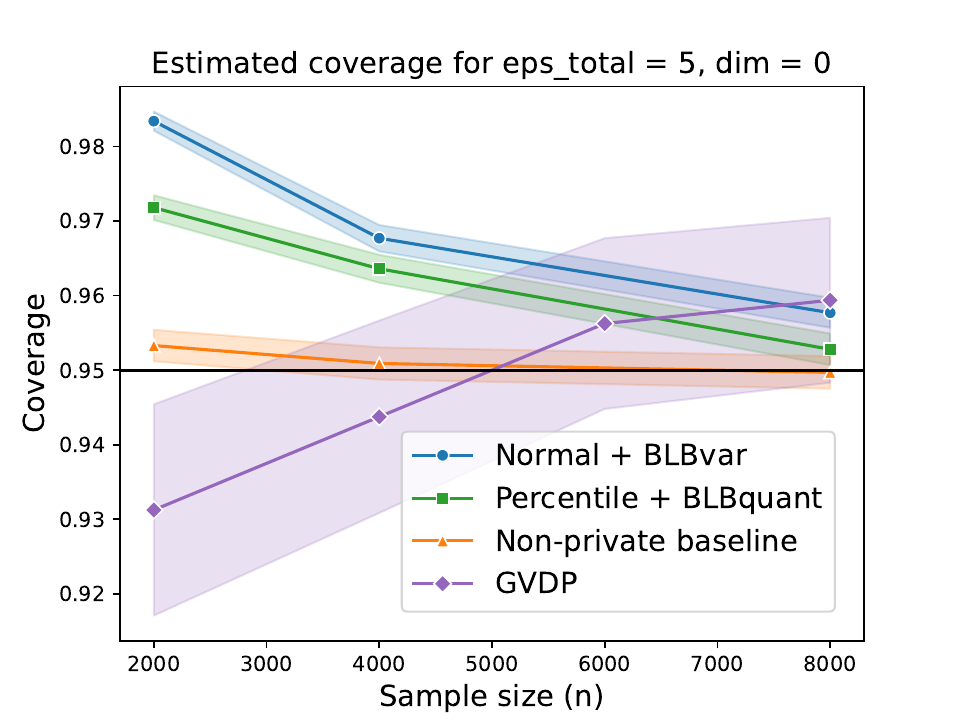}
    \caption{\small Logistic Regression coverage} \label{fig:logistic-coverage5}
  \end{subfigure}\hspace*{\fill}
  \begin{subfigure}{0.45\textwidth}
    \includegraphics[width=\linewidth]{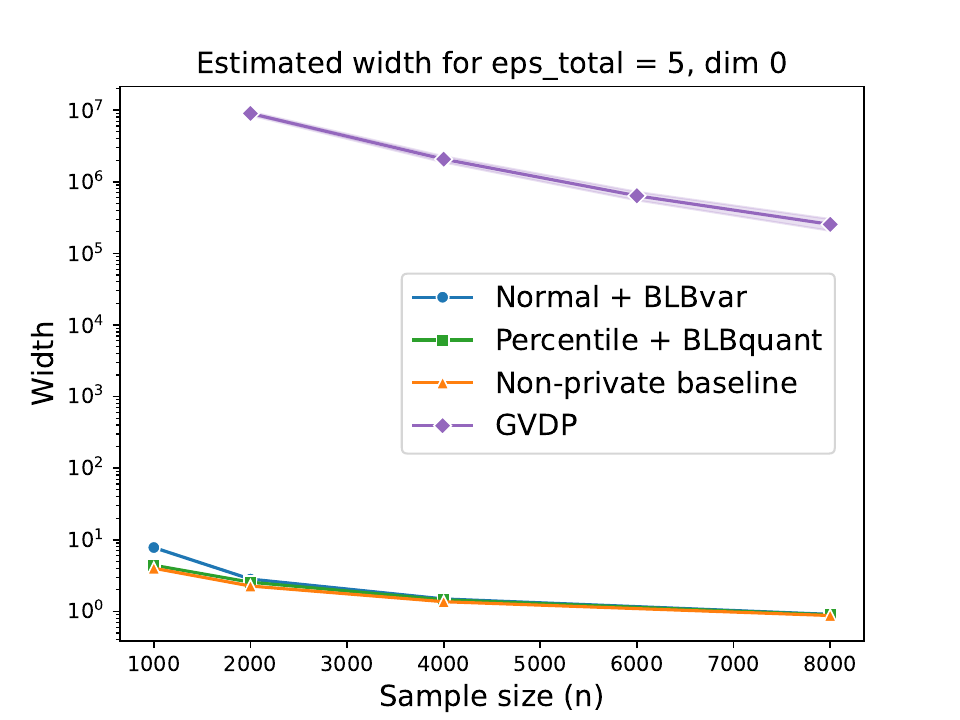}
    \caption{\small Logistic Regression width} \label{fig:logistic-width5}
  \end{subfigure}
  \caption{Logistic Regression with $\totdiffp = 5$}
\end{figure}

\begin{figure}[t!] %
  \begin{subfigure}{0.45\textwidth}
    \includegraphics[width=\linewidth]{figures/adult_logreg_eps_8.pdf}
    \caption{\small Logistic Regression coverage} \label{fig:logistic-coverage8}
  \end{subfigure}\hspace*{\fill}
  \begin{subfigure}{0.45\textwidth}
    \includegraphics[width=\linewidth]{figures/adult_logreg_len_eps_8.pdf}
    \caption{\small Logistic Regression estimation width} \label{fig:logistic-width8}
  \end{subfigure}
  \caption{Logistic Regression with $\totdiffp = 8$}
\end{figure}

\subsection{Ablations}
\label{sec:ablations}

We study the source of the loss in performance relative to non-private
bootstrap estimation for both $\blbvar$
(\Cref{fig:abl-blbvar-2,fig:abl-blbvar-5,fig:abl-blbvar-8}) and $\blbquant$
(\Cref{fig:abl-blbquant-2,fig:abl-blbquant-5,fig:abl-blbquant-8}).  At a
high level, the results here suggest that subsampling---by reducing the
sample size---induces the main accuracy degradation, though the median
above-threshold algorithm (Alg.~\ref{algorithm:above-threshold-median})
introduces some error, and finding better ways to release a threshold for
the indexed confidence sets $I_t$ may yield improvements.

For $\blbvar$, across different total differential privacy values
$\totdiffp$, we plot the coverage for the following three algorithms in
Figures~\ref{fig:abl-blbvar-2}--\ref{fig:abl-blbvar-8}:
\begin{enumerate}[leftmargin=*,label=\arabic*]
\item \label{item:normal-thing}
  $\algname{Normal + BLBvar}$: The z-score-based confidence
  interval we propose in Corollary~\ref{corollary:coverage-from-variance}.
\item \label{item:nonprivate-median-agg-var}
  $\algname{Nonprivate BLBvar}$: To measure the error of taking a private
  median (line~\ref{line:return-private-median} of $\blbvar$),
  we aggregate using the true median instead of the private
  median in $\blbvar$. All other algorithmic details remain identical.
\item \label{item:cereal-fucker}
  $\algname{Upsampled BLBvar}$: We perform $\nsubs$
  bootstrap resamples of the entire dataset instead of partitioning the
  data, that is, we run $\bootvaralg$ on $P_n$ instead of
  $P\subsample_\subsize$.  Because the full sample $P_n$ is re-used for
  each, this fails to satisfy differential privacy, and while nominally (if
  Monte Carlo sampling introduced no error) this should be equivalent to a
  single full bootstrap resample, because $\nmontecarlo < \infty$ there is
  some variability between the $\nsubs$ resamples. This allows us to
  delineate whether the private median
  (Line~\ref{line:return-private-median}) introduces substantial error.
\end{enumerate}

$\algname{Nonprivate BLBvar}$ (item~\ref{item:normal-thing}) and
$\algname{Normal + BLBvar}$ (item~\ref{item:nonprivate-median-agg-var}) have
similar performance for all values of $\totdiffp$. The full resampling
procedure $\algname{Upsampled BLBvar}$ (item~\ref{item:cereal-fucker}) has
more accurate coverage, as we expect from the non-private baseline
experiments it essentially mimics (recall Fig.~\ref{fig:1} in the
experiments, Secton~\ref{sec:expts}). The effective sample size reduction
that subsampling introduces thus appears to be the main source of accuracy
degradation.

\begin{figure}
  \begin{subfigure}{0.33\textwidth}
    \includegraphics[width=\linewidth]{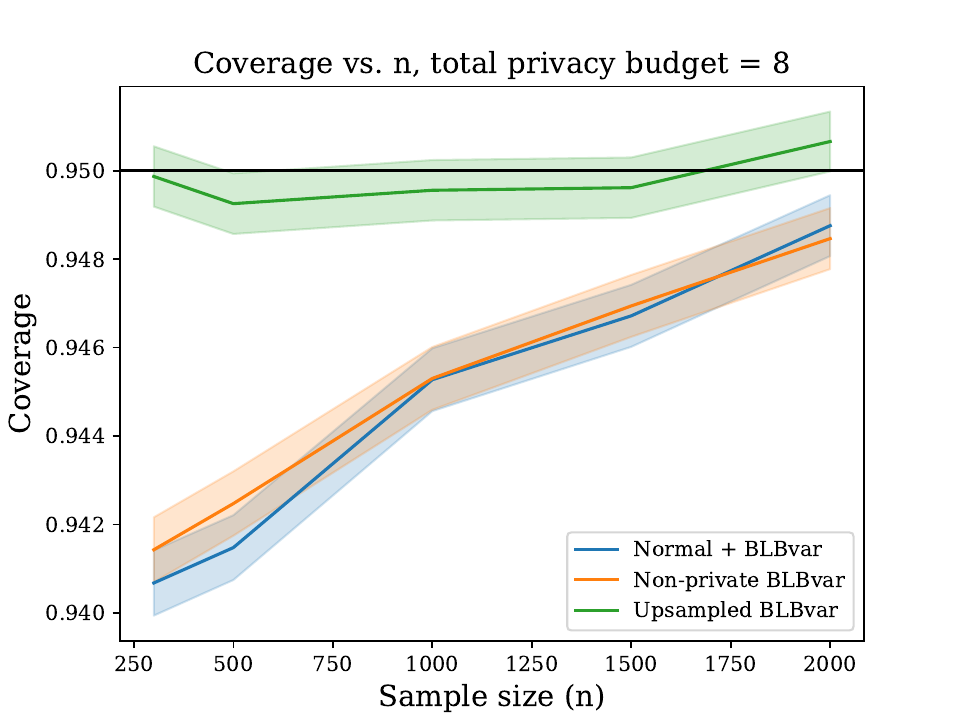}
    \caption{\small  $\totdiffp = 8$} \label{fig:abl-blbvar-8}
  \end{subfigure}\hspace*{\fill}
  \begin{subfigure}{0.33\textwidth}
    \includegraphics[width=\linewidth]{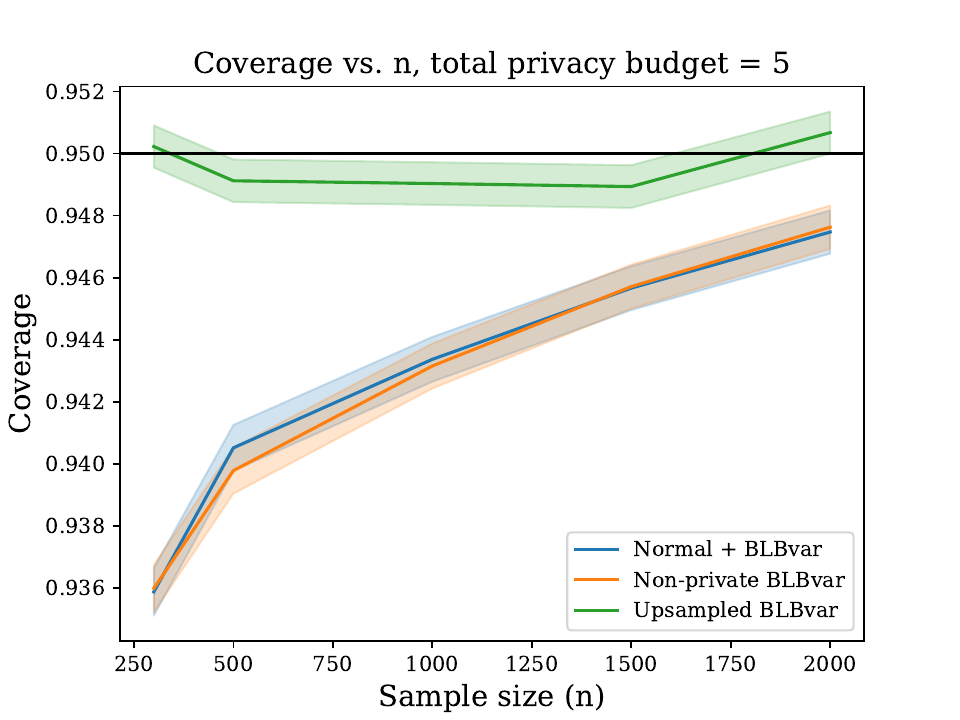}
    \caption{\small $\totdiffp = 5$} \label{fig:abl-blbvar-5}
  \end{subfigure}\hspace*{\fill}
  \begin{subfigure}{0.33\textwidth}
    \includegraphics[width=\linewidth]{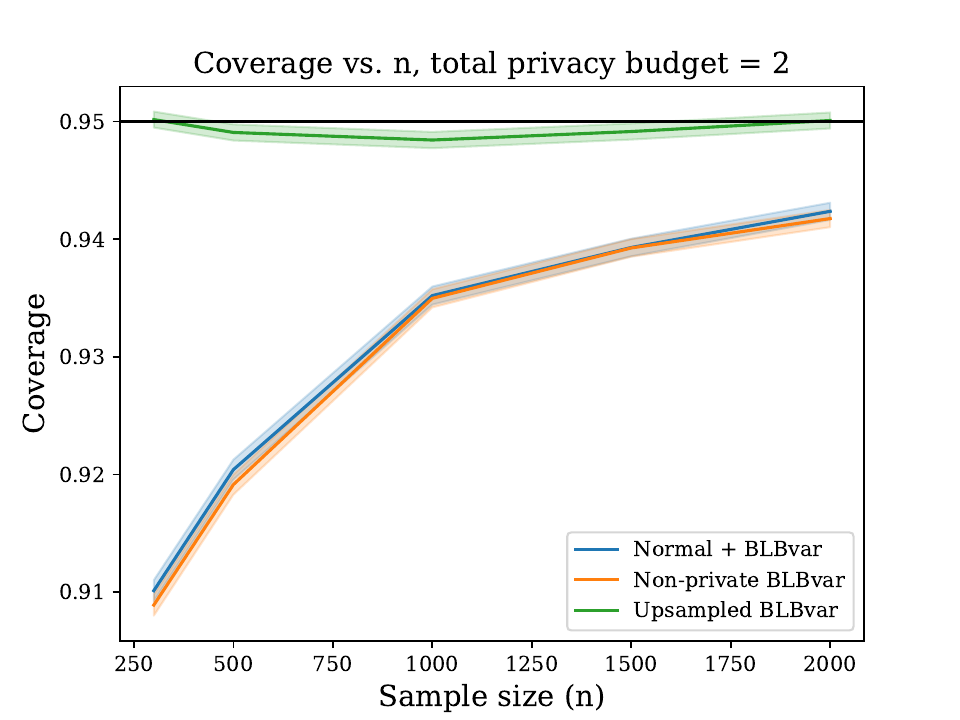}
    \caption{\small $\totdiffp = 2$} \label{fig:abl-blbvar-2}
  \end{subfigure}
  \caption{Ablations for $\blbvar$ on mean estimation.}
\end{figure}

\begin{figure}
  \begin{subfigure}{0.33\textwidth}
    \includegraphics[width=\linewidth]{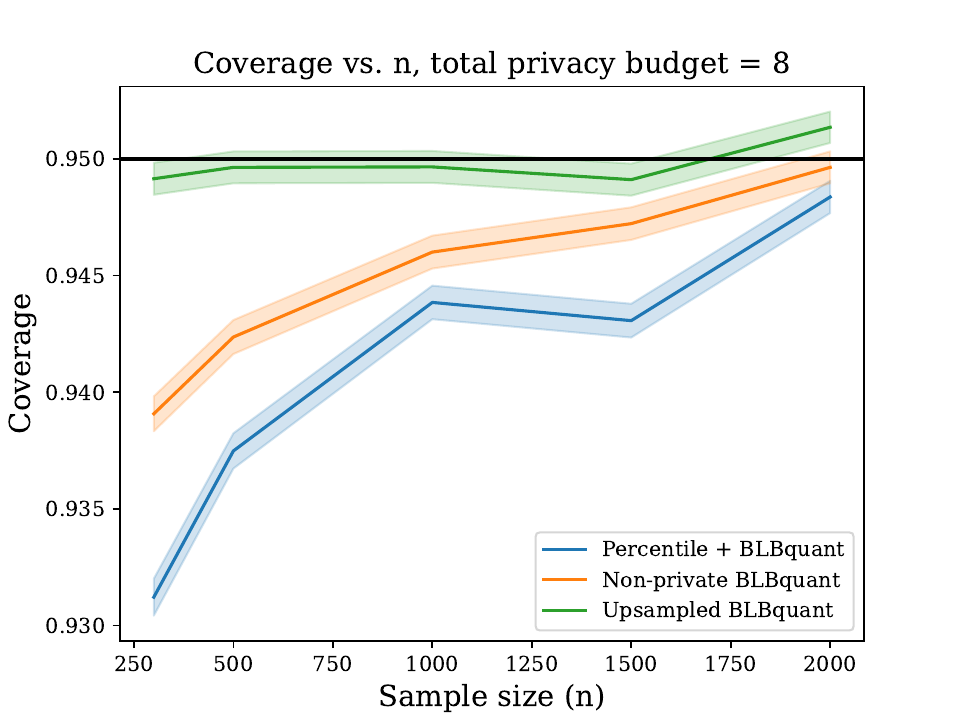}
    \caption{\small  $\totdiffp = 8$} \label{fig:abl-blbquant-8}
  \end{subfigure}\hspace*{\fill}
  \begin{subfigure}{0.33\textwidth}
    \includegraphics[width=\linewidth]{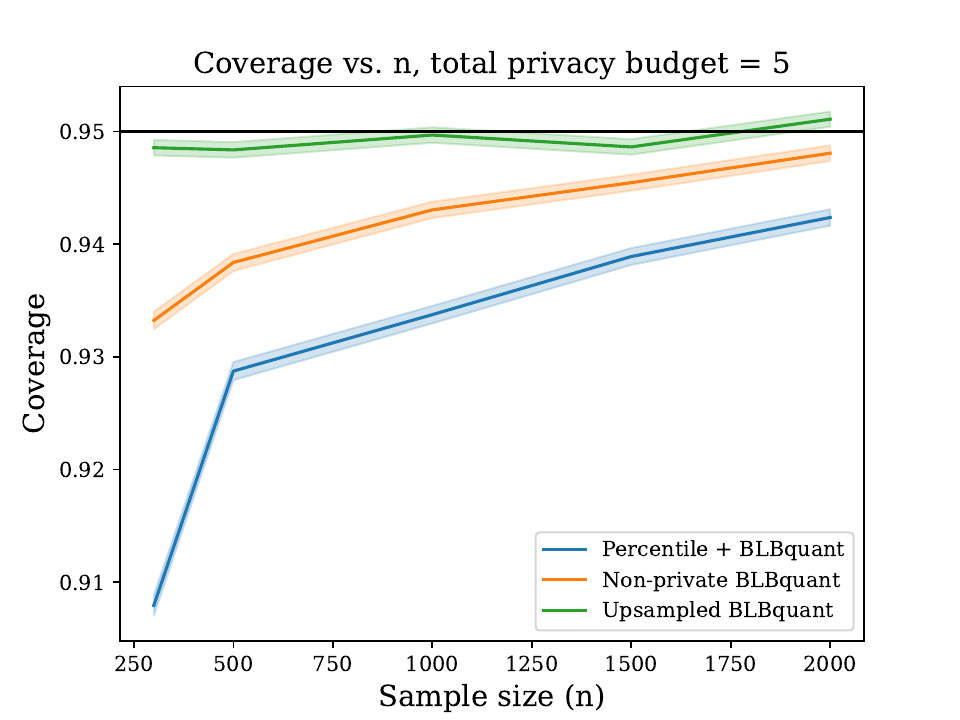}
    \caption{\small  $\totdiffp = 5$} \label{fig:abl-blbquant-5}
  \end{subfigure}\hspace*{\fill}
  \begin{subfigure}{0.33\textwidth}
    \includegraphics[width=\linewidth]{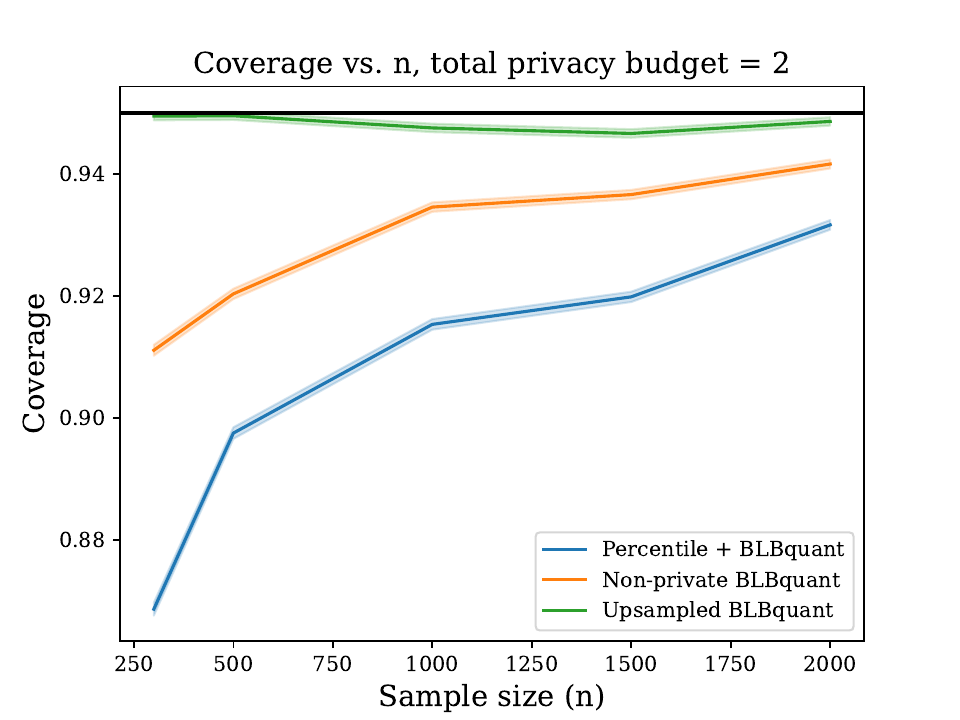}
    \caption{\small  $\totdiffp = 2$} \label{fig:abl-blbquant-2}
  \end{subfigure}
  \caption{Ablations for $\blbquant$ on mean estimation}
\end{figure}

For $\blbvar$, across different total differential privacy values
$\totdiffp$, we plot the coverage for the following three algorithms in
Figures~\ref{fig:abl-blbquant-2}--\ref{fig:abl-blbquant-8}:
\begin{enumerate}[leftmargin=*,label=\arabic*]
\item \label{item:percentile-thing} $\algname{Percentile + BLBquant}$: The
  quantile-based interval we propose in
  Corollary~\ref{corollary:coverage-from-percentile}.
\item \label{item:nonprivate-median-agg-percentile} $\algname{Nonprivate
  BLBquant}$: To study the error using the noisy order statistics in
  $\abthreshmed$ induces (line~\ref{line:noisy-order-statistic}), we use the
  true median in line~\ref{line:noisy-order-statistic}.  All other
  algorithmic details remain identical.
\item \label{item:pancake-fucker} $\algname{Upsampled BLBquant}$: We perform
  $\nsubs$ bootstrap resamples of the entire dataset instead of partitioning
  the data, that is, we replace the subsamples $P\subsampleind{i}_\subsize$
  in line~\ref{line:draw-subsamples-quant} of Alg.~\ref{algorithm:blb-quant}
  with the sample $P_n$. Because the full sample $P_n$ is re-used for
  each, this fails to satisfy differential privacy, and while nominally (if
  Monte Carlo sampling introduced no error) this should be equivalent to a
  single full bootstrap resample, because $\nmontecarlo < \infty$ there is
  some variability between the $\nsubs$ resamples. This allows us to
  delineate whether using the private median-based above
  treshold algorithm (line~\ref{line:call-above-threshold}
  of $\blbquant$) introduces error.
\end{enumerate}
As in our ablation experiments on $\blbvar$, $\algname{Upsampled BLBquant}$
(item~\ref{item:pancake-fucker}) has the best accuracy, as we expect from our
initial experiments.  In this case, non-privately aggregating the results of
the BLB subsamples---$\algname{Nonprivate BLBquant}$,
item~\ref{item:nonprivate-median-agg-percentile}---improves accuracy over
$\algname{Percentile + BLBquant}$; the improvement is more substantial for
smaller $\totdiffp$ (i.e., more privacy). This suggests that improving our
ability to select the accurately covering set $I_t$, rather than relying on
the noisy order statistics in $\abthreshmed$, could yield improvements,
especially at smaller sample sizes.

\end{document}